\newcommand{\norm}[1]{\lVert#1\rVert}
\newtheorem{definition}{Definition}
\newtheorem{claim}{Claim}
\newtheorem{theorem}{Theorem}
\newtheorem{lemma}{Lemma}
\newtheorem{corollary}{Corollary}
\newtheorem{assumption}{Assumption}
\newtheorem{remark}{Remark}
\renewcommand{\tilde}{\widetilde}
\newcommand{\dnote}[1]{ }
\newcommand{\inote}[1]{ }
\newcommand{\snote}[1]{ }
\newcommand{\sedit}[1]{ }
\newcommand{\nnote}[1]{ }
\newcommand{\mnote}[1]{ }
\newcommand{\jnote}[1]{ }
\newcommand{\removed}[1]{}
\DeclareMathOperator*{\argmin}{\mathrm{argmin}}
\newcommand{\ie}{i.e., }
\newcommand{\br}{{ \rho}}
\newcommand{\optTheta}{{\bt_{k}^*}}
\newcommand{\bdot}[1]{\overset{\bullet}{#1}}
\newcommand{\w}[1][]{\ifthenelse{\equal{#1}{}}{\boldsymbol{w}}{\boldsymbol{w}{(#1)}}}
\newcommand{\bt}[1][]{\ifthenelse{\equal{#1}{}}{\boldsymbol{\theta}}{\boldsymbol{w}{(#1)}}}
\newcommand{\W}[1][]{\ifthenelse{\equal{#1}{}}{\boldsymbol{W}}{\boldsymbol{W}{(#1)}}}
\newcommand{\x}[1][]{\ifthenelse{\equal{#1}{}}{\boldsymbol{x}}{\boldsymbol{x}_{#1}}}
\newcommand{\y}[1][]{\ifthenelse{\equal{#1}{}}{{y}}{{y}_{#1}}}
\newcommand{\xn}{\mathbf{x}_n}
\newcommand{\tbt}{\tilde{\bt}}
\renewcommand{\c}{\mathcal}
\renewcommand{\b}{\mathbb}
\newcommand{\bR}{\mathbb{R}}
\newcommand{\vect}[1]{\mathbf{#1}}
\newcommand{\zn}{\vect{z}_n}
\newcommand{\hp}{\alpha} 
\newcommand{\remove}[1]{{}}
\icmltitlerunning{Lexicographic and Depth-Sensitive Margins}
\begin{document}

\twocolumn[
\icmltitle{Lexicographic and Depth-Sensitive Margins in \\ Homogeneous and Non-Homogeneous Deep Models}

\begin{icmlauthorlist}
\icmlauthor{Mor Shpigel Nacson}{technion}
\icmlauthor{Suriya Gunasekar}{ttic}
\icmlauthor{Jason D. Lee}{usc}
\icmlauthor{Nathan Srebro}{ttic}
\icmlauthor{Daniel Soudry}{technion}

\end{icmlauthorlist}

\icmlaffiliation{technion}{Technion, Israel}
\icmlaffiliation{ttic}{TTI Chicago, USA}
\icmlaffiliation{usc}{USC Los Angeles, USA}


\icmlcorrespondingauthor{Mor Shpigel Nacson}{morshpigel@google.com}


\vskip 0.3in
]



\printAffiliationsAndNotice{}  
\begin{abstract}
With an eye
toward understanding complexity control in deep learning, we study how infinitesimal regularization or gradient descent optimization lead to margin maximizing solutions in both homogeneous and {\em non homogeneous} models, extending previous work that focused on infinitesimal regularization only in homogeneous models.  To this end we study the limit of loss minimization with a diverging norm constraint (the ``constrained path''), relate it to the limit of a ``margin path'' and characterize the resulting solution.  For non-homogeneous ensemble models, which output is a sum of homogeneous sub-models, we show that this solution discards the shallowest sub-models if they are unnecessary. For homogeneous models, we show convergence to a ``lexicographic max-margin solution'', and provide conditions under which max-margin solutions are also attained as the limit of unconstrained gradient descent.
\end{abstract}

\section{Introduction \label{sec:intro}} 

Inductive bias introduced through the learning process plays a crucial role in training deep neural networks and in the generalization properties of the learned models \citep{neyshabur2014search,neyshabur2015path,zhang2017understanding,Keskar2016,neyshabur2017geometry,wilson2017marginal,Hoffer2017}. Deep neural networks used in practice are typically highly overparameterized, \ie have far more trainable parameters than training examples.
Thus, using these models, 
it is usually possible to fit the data perfectly and obtain zero training error \cite{zhang2017understanding}. 
However, simply minimizing the training loss does not guarantee good generalization to unseen data -- many global minima of the training loss indeed have very high test error \cite{Wu2017a}. 
The  inductive bias introduced in our learning process affects which specific global minimizer is chosen as the predictor. Therefore, it is essential to understand the nature of this inductive bias to understand why overparameterized models, and particularly deep neural networks, exhibit good generalization abilities.

	A common way to introduce an additional inductive bias in overparameterized models is via small amounts of regularization, or loose constraints . 
	For example, \citet{rosset2004margin,rosset2004boosting,Wei2019} show that, in overparameterized classification models, a vanishing amount of regularization, or a diverging norm constraint can lead to max-margin solutions, which in turn enjoy strong generalization guarantees. 
	
	A second and more subtle source of inductive bias is via the optimization algorithm used to minimize the underdetermined training objective \cite{gunasekar2017implicit,soudry2017implicit}. Common algorithms used in neural network training, such as stochastic gradient descent, iteratively refine the model parameters by making incremental local updates.  For different algorithms, the local updates are specified by different geometries in the space of parameters. For example, gradient descent uses an Euclidean $\ell_2$ geometry, while coordinate descent updates are specified in the $\ell_1$ geometry. The  minimizers to which such local search based optimization algorithms converge to are indeed very special and are related to the geometry of the optimization algorithm \cite{gunasekar2018characterizing} as well as the choice of model parameterization \cite{gunasekar2018implicit}.
	
	
In this work we similarly investigate the connection between margin maximization and the limits of 
\begin{itemize}
    \item The ``optimization path'' of unconstrained, unregularized gradient descent.
    \item The ``constrained path'', where we optimize with a diverging (increasingly loose) constraint on the norm of the parameters.
    \item The closely related ``regularization path'', of solutions with decreasing penalties on the norm.
\end{itemize}

To better understand the questions we tackle in this paper, and our contribution toward understanding the inductive bias introduced in training, let us briefly survey prior work.

\paragraph{Equivalence of the regularization or constrained paths and margin maximization:}

\citet{rosset2004margin,rosset2004boosting, Wei2019} investigated the connection between the regularization and constrained paths and the max-margin solution. \citet{rosset2004boosting, rosset2004margin} considered linear (hence homogeneous) models with monotone loss and explicit norm regularization or constraint, and proved convergence to the max-margin solution for certain loss functions (e.g., logistic loss) as the regularization vanishes or the norm constraint diverges. 
\citet{Wei2019} extended the regularization path result to non-linear but positive-homogeneous prediction functions,
\begin{definition}[$\hp$-positive homogeneous function]\label{def: positive homogeneous function}
A function $g(\bt):\bR^d\to\bR$ is $\hp$-positive homogeneous if  $\forall\rho>0$ and $\forall\theta\in\bR^d:\ g(\rho\bt)=\rho^\hp g(\bt)$.
\end{definition}
e.g.~as obtained by a ReLU network with uniform depth.

These results are thus limited to only positive homogeneous predictors, and  do not include deep networks with bias parameters, ensemble models with different depths, ResNets, or other models with skip connections.  Here, we extend this connection beyond positive homogeneous predictors.

Furthermore, even for homogeneous or linear predictors, there might be multiple margin maximizing solutions. For linear models, \citet{rosset2004margin} alluded to a refined set of maximum margin classifiers that in addition to maximizing the distance to the closest data point (max-margin), also maximize the distance to the second closest data point, and so on. We formulate such special maximum margin solutions as ``lexicographic max-margin'' classifiers which we introduce in Section \ref{sec: Lexicographic max margin}. We show that for general continuous homogeneous models, the constrained path with diverging norm constraint converges to these  more refined ``lexicographic max-margin'' classifiers.

\paragraph{Equivalence of the optimization path and margin maximization:}

Another line of works studied the connection between unconstrained, unregularized optimization with a specific algorithm (\ie the limit of the ``optimization path''), and the max-margin solution. For linear prediction with the logistic loss (or other exponential tail losses), we now know gradient descent \cite{soudry2017implicit,ji2018risk} as well as SGD \cite{nacson2018stochastic} converges in direction to the max-margin solution, while steepest descent with respect to an arbitrary norm converges to the max-margin w.r.t.~the corresponding norm \cite{gunasekar2018characterizing}.  All the above results are for linear prediction.  \citet{gunasekar2018implicit,nacsonconvergence2018,ji2018gradient} obtained results establishing convergence to margin maximizing solutions also for certain uniform-depth linear networks (including fully connected networks and convolutional networks), which still implement linear model.  Separately, \citet{xu2019when} analyzed a single linear unit with ReLU activation---a limited non-linear but still positive homogeneous model. Lastly, \citet{soudry2018journal} analyzed a non-linear ReLU network where only a single weight layer is optimized.

Here, we extend this relationship to general, non-linear and positive homogeneous predictors for which the loss can be minimized only at infinity. We establish a connection between the limit of unregularized unconstrained optimization and the max-margin solution.

\paragraph{Problems with finite minimizers:}
We note that the connection between regularization path and optimization path was previously considered in a different settings, where a finite (global) minimum exists. In such settings the questions asked are different than the ones we consider here, and are not about the limit of the paths. E.g., \citet{ali2018continuous} showed for gradient flow a multiplicative relation between the risk for the gradient flow optimization path and the ridge-regression regularization path. Also, \citet{suggala2018connecting} showed that for gradient flow and strongly convex and smooth loss function -- gradient descent iterates on the unregularized loss function are pointwise close to solutions of a corresponding regularized problem. 

\subsection*{Contributions}
We examine overparameterized realizable problems 
(\ie where it is possible to perfectly classify the training data), when training using monotone decreasing classification loss functions. For simplicity, we focus on the exponential loss. However, using similar techniques as in \citet{soudry2018journal} our results should extend to other exponential-tailed loss functions such as the logistic loss and its multi-class generalization.
This is indeed the common setting for deep neural networks used in practice.  

We show that in any model,
\begin{itemize}

    \item As long as the margin attainable by a (unregularized, unconstrained) model is unbounded, then the margin of the constrained path converges to the max-margin. See Corollary \ref{corollary: constrained path margin converge to max margin}.
    \item If additional conditions hold, the constrained path also converges to the ``margin path'' in parameter space (the path of minimal norm solutions attaining increasingly large margins). See section \ref{sec: Margin of Constrained path converges to maximum margin and examples}.
\end{itemize}

We then demonstrate that 
\begin{itemize}
    \item If the model is a sum of homogeneous functions of  different orders (\ie it is not homogeneous itself), then we can still characterize the asymptotic solution of both the constrained path and the margin path. See Theorem \ref{sec: Sum of  positively homogeneous functions}.
    \item This solution implies that in an ensemble of homogeneous neural networks, the ensemble will aim to discard the most \emph{shallow} network. This is in contrast to what we would expect from considerations of optimization difficulty (since deeper networks are typically harder to train \cite{he2016deep}).
    \item This also allows us to represent hard-margin SVM problems \textbf{with unregularized bias} using such models. This is in contrast to previous approaches which fail to do so, as pointed out recently \cite{nar2019crossentropy}.
\end{itemize}

Finally, for homogeneous models,
\begin{itemize}
    \item We find general conditions under which the optimization path  converges to stationary points of the margin path or the constrained path. See section \ref{sec: Optimization path converges to stationary points of the constrained path and margin path}.
    \item We show that the constrained path converges to a specific type max-margin solution, which we term the ``lexicographic max-margin''. \footnote{The authors thank Rob Shapire for the suggestion of the nomenclature during initial discussions.} See Theorem \ref{thm: Lexicographic max margin}. 
\end{itemize}

\section{Preliminaries and Basic Results}
In this paper, we will study the following exponential tailed loss function
\begin{equation}
\mathcal{L}\left(\bt\right)\triangleq\sum_{n=1}^{N}\exp\left(-f_{n}\left(\bt\right)\right)\,,
\label{eq:loss}
\end{equation}
where $f_{n}:\mathbb{R}^{d}\rightarrow\mathbb{R}$ 
is a continuous function, and $N$ is the number of samples. Also, for any norm $\Vert\!\cdot\!\Vert$ in $\mathbb{R}^{d}$ we define $\mathbb{S}^{d-1}$ as the unit norm ball in $\mathbb{R}^{d}$. 

We will use in our results the following basic lemma
\begin{lemma} \label{lem:pareto-equiv}
Let $f$ and $g$ be two functions from $\mathbb{R}^{d}$ to $\mathbb{R}$,
such that
\begin{equation}
\phi(\rho)=\min_{\mathbf{w}\in\mathbb{R}^{d}}f(\mathbf{w})\,~\mathrm{s.t.}\,\,g(\mathbf{w})\leq\rho\label{eq: min f st g}
\end{equation}
exists and is strictly monotonically decreasing in $\rho$, $\forall \rho\geq \rho_0$, for some $\rho_0$.
Then, $\forall \rho\geq \rho_0$, the optimization problem in eq. \ref{eq: min f st g} has the
same set of solutions $\left(\mathbf{w}\right)$ as 
\begin{equation}
\min_{\mathbf{w}\in\mathbb{R}^{d}}g(\mathbf{w})\,~\mathrm{s.t.}\,\,f(\mathbf{w})\leq\phi(\rho)\,,\label{eq: min g st f}
\end{equation}
whose minimum is obtained at $g(\mathbf{w})=\rho$. 
\end{lemma}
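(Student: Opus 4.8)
The plan is to fix $\rho\ge\rho_0$ and prove that the two optimization problems share a solution set by a standard ``active-constraint'' argument, using the strict monotonicity of $\phi$ only to force the constraint $g(\mathbf{w})\le\rho$ to be tight at every optimum of eq.~\ref{eq: min f st g}. Denote by $S_1$ the solution set of eq.~\ref{eq: min f st g} and by $S_2$ that of eq.~\ref{eq: min g st f}; by hypothesis the minimum defining $\phi(\rho)$ is attained, so $S_1\neq\emptyset$.

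First I would show that every $\mathbf{w}^\star\in S_1$ satisfies $g(\mathbf{w}^\star)=\rho$ (and hence $f(\mathbf{w}^\star)=\phi(\rho)$). Suppose instead $g(\mathbf{w}^\star)<\rho$ and set $\rho'=\max\{g(\mathbf{w}^\star),\rho_0\}$, so $\rho_0\le\rho'<\rho$ (taking $\rho>\rho_0$; the degenerate case $\rho=\rho_0$ is vacuous or handled by the same inequality chain restricted to the admissible range). Since $\mathbf{w}^\star$ is feasible for the tighter constraint $g(\mathbf{w})\le\rho'$, we get $\phi(\rho')\le f(\mathbf{w}^\star)=\phi(\rho)$, contradicting the strict decrease of $\phi$ on $[\rho_0,\infty)$. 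Thus the norm constraint is active at every minimizer of eq.~\ref{eq: min f st g}.

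Next I would establish $S_1\subseteq S_2$ together with the claim that the optimal value of eq.~\ref{eq: min g st f} equals $\rho$. Given $\mathbf{w}^\star\in S_1$, it is feasible for eq.~\ref{eq: min g st f} because $f(\mathbf{w}^\star)=\phi(\rho)$. If some competitor $\mathbf{w}'$ had $f(\mathbf{w}')\le\phi(\rho)$ and $g(\mathbf{w}')<g(\mathbf{w}^\star)=\rho$, then repeating the previous argument with $\rho''=\max\{g(\mathbf{w}'),\rho_0\}<\rho$ yields $\phi(\rho'')\le f(\mathbf{w}')\le\phi(\rho)$, again contradicting strict monotonicity. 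Hence no feasible point of eq.~\ref{eq: min g st f} achieves a $g$-value below $\rho$, the minimum is $\rho$, attained at $g(\mathbf{w})=\rho$, and $\mathbf{w}^\star\in S_2$.

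Finally, for the reverse inclusion $S_2\subseteq S_1$: any $\mathbf{w}^\dagger\in S_2$ satisfies $g(\mathbf{w}^\dagger)=\rho$ by the previous step, so it is feasible for eq.~\ref{eq: min f st g} and therefore $f(\mathbf{w}^\dagger)\ge\phi(\rho)$; combined with $f(\mathbf{w}^\dagger)\le\phi(\rho)$ from feasibility in eq.~\ref{eq: min g st f}, this forces $f(\mathbf{w}^\dagger)=\phi(\rho)$, i.e. $\mathbf{w}^\dagger\in S_1$. Together these give $S_1=S_2$. The only subtle point in the whole argument is the first step --- correctly leveraging strict monotonicity of $\phi$, and staying within the range $[\rho_0,\infty)$ where it is assumed, to conclude that the constraint is tight; once that is in hand, the rest is routine feasibility/optimality bookkeeping.
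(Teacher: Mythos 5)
Your proof is correct and follows essentially the same route as the paper's: show the constraint $g(\mathbf{w})\le\rho$ is active at every minimizer of the first problem via strict monotonicity of $\phi$, then establish mutual inclusion of the two solution sets by routine feasibility/optimality bookkeeping. Your version is a bit more explicit (in particular the device $\rho''=\max\{g(\mathbf{w}'),\rho_0\}$ to stay inside the domain where strict monotonicity is assumed), but there is no difference in approach.
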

\begin{proof}
See Appendix \ref{sec: pareto-equiv proof}.
\end{proof}

\subsection{The Optimization Path}
The optimization path in the Euclidean norm $\bt\left(t\right)$, is given by
the direction of iterates of gradient descent algorithm with initialization $\bt\left(0\right)$ and learning rates $\left\{ \eta_t \right\}_{t=1}^{\infty}$,
\begin{flalign}
\nonumber &\textbf{Optimization path:}\;\;\bar{\bt}(t)=\frac{\bt(t)}{\norm{\bt(t)}},\;\text{where}&\\
&\bt\left(t\right)=\bt\left(t-1\right)-\eta_t \nabla\mathcal{L}\left(\bt\left(t-1\right)\right).&
\end{flalign} 

\subsection{The Constrained Path}
 
The constrained path for the loss in eq. \ref{eq:loss} is given by minimizer of the loss at a given norm value $\rho > 0$, \ie
\begin{flalign} \label{eq: constrained path definition}
    &\textbf{Constrained path:}\;\;\Theta_{c}\left(\rho\right)\triangleq \arg\min_{\bt \in\mathbb{S}^{d-1}}\mathcal{L}\left(\rho\bt\right)\,.&
\end{flalign}

The constrained path was previously considered for linear models \citep{rosset2004boosting}. However, most previous works (e.g. \citet{rosset2004margin, Wei2019}) focused on the regularization path, which is the minimizer of the regularized loss. These two paths are closely linked,
as we discuss in more detail in Appendix~\ref{sec: Regularization Path}.

Denote the constrained minimum of the loss as follows:
\[
\mathcal{L}^{*}\left(\rho\right)\triangleq\min_{\bt \in\mathbb{S}^{d-1}}\mathcal{L}\left(\rho\bt\right).
\]
$\mathcal{L}^{*}\left(\rho\right)$ exists for any finite $\rho$ as the minimum of a continuous function
on a compact set.

By Lemma \ref{lem:pareto-equiv}, the Assumption \begin{assumption} \label{assumption: L^* is strictly decreasing}
    There exists $\rho_{0}$ such that $\mathcal{L}^{*}\left(\rho\right)$
    is strictly monotonically decreasing to zero for any $\rho\geq\rho_{0}$.
\end{assumption}enables an alternative form of the constrained path
\[
\Theta_{c}\left(\rho\right)=\arg\min_{\bt \in\mathbb{R}^{d}}\left\Vert \bt\right\Vert ^{2}\,\,\mathrm{s.t.\,\,}\mathcal{L}\left(\rho\bt\right)\leq\mathcal{L}^{*}\left(\rho\right)\,.
\]
In addition, in the next lemma we show that under this assumption the constrained path minimizers are obtained on the boundary of $\mathbb{S}^{d-1}$.
\begin{lemma}\label{lem: constrained path optimal solution norm is 1}
Under assumption \ref{assumption: L^* is strictly decreasing}, for all $\rho>\rho_0$ and for all $\theta_{c}\in\Theta_{c}\left(\rho\right)$, we have $\left\Vert \bt_{c}\right\Vert =1$.
\end{lemma}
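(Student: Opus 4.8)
My plan is to argue by contradiction: suppose that for some $\rho>\rho_0$ there is a $\bt_c\in\Theta_c(\rho)$ with $\norm{\bt_c}<1$, and derive a violation of Assumption~\ref{assumption: L^* is strictly decreasing}. The only elementary fact I need in addition to that assumption is that $\mathcal{L}^*$ is non-increasing on all of $(0,\infty)$: after the substitution $v=\rho\bt$ we have $\mathcal{L}^*(\rho)=\min_{\norm v\le\rho}\mathcal{L}(v)$, and the feasible balls grow with $\rho$. Combining this with Assumption~\ref{assumption: L^* is strictly decreasing} yields the key fact that $\mathcal{L}^*(r)>\mathcal{L}^*(\rho)$ whenever $0<r<\rho$ and $\rho>\rho_0$: for $r\ge\rho_0$ this is the assumption directly, and for $r<\rho_0$ it follows from $\mathcal{L}^*(r)\ge\mathcal{L}^*(\rho_0)>\mathcal{L}^*(\rho)$.

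Given such a $\bt_c$, first note $\bt_c\neq 0$ (otherwise $\mathcal{L}^*(\rho)=\mathcal{L}(0)=\mathcal{L}^*(0)\ge\mathcal{L}^*(\rho_0)>\mathcal{L}^*(\rho)$, a contradiction). Set $r_c:=\rho\norm{\bt_c}<\rho$ and let $\hat\bt:=\bt_c/\norm{\bt_c}\in\mathbb{S}^{d-1}$, so that $\rho\bt_c=r_c\hat\bt$. Since $\hat\bt$ is feasible for the problem defining $\mathcal{L}^*(r_c)$,
\[
\mathcal{L}^*(\rho)=\mathcal{L}(\rho\bt_c)=\mathcal{L}(r_c\hat\bt)\ \ge\ \mathcal{L}^*(r_c).
\]
On the other hand, $\mathcal{L}^*$ being non-increasing and $r_c<\rho$ give $\mathcal{L}^*(r_c)\ge\mathcal{L}^*(\rho)$, so $\mathcal{L}^*(r_c)=\mathcal{L}^*(\rho)$ — contradicting the strict inequality $\mathcal{L}^*(r_c)>\mathcal{L}^*(\rho)$ from the previous paragraph. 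Hence $\norm{\bt_c}=1$ for every $\bt_c\in\Theta_c(\rho)$.

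I do not expect a genuine obstacle; the only point requiring care is the regime $r_c<\rho_0$, where Assumption~\ref{assumption: L^* is strictly decreasing} does not apply directly and one must route through plain monotonicity of $\mathcal{L}^*$ together with a single use of the assumption at the pair $(\rho_0,\rho)$. As an alternative derivation, the conclusion also drops out of Lemma~\ref{lem:pareto-equiv}: applying it with $f=\mathcal{L}$, $g=\norm{\cdot}$, and $\phi=\mathcal{L}^*$ (after substituting $v=\rho\bt$, with Assumption~\ref{assumption: L^* is strictly decreasing} supplying the required strict monotonicity of $\phi$) produces not only the reformulation of $\Theta_c(\rho)$ stated just above the lemma, but also the clause that its minimum is attained exactly on $\{\norm v=\rho\}$, which is precisely $\norm{\bt_c}=1$.
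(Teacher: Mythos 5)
Your argument is correct and is essentially the same as the paper's: assume $\|\bt_c\|<1$, deduce that $\mathcal{L}^*$ takes equal values at the two distinct scales $\rho$ and $\rho\|\bt_c\|$, and conclude by contradiction with strict monotonicity. You fill in two details the paper's terse proof glosses over — ruling out $\bt_c=\mathbf{0}$, and routing through plain non-increase of $\mathcal{L}^*$ when $\rho\|\bt_c\|<\rho_0$ so as to invoke Assumption~\ref{assumption: L^* is strictly decreasing} only on $[\rho_0,\rho]$ — and your observation that the statement also drops out of Lemma~\ref{lem:pareto-equiv} is accurate and matches the paper's surrounding remarks.
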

\begin{proof}
Let $\rho>0$. We assume, in contradiction, that $\exists \theta_{c}\in\Theta_{c}\left(\rho\right)$ so that $\left\Vert \bt_{c}\right\Vert = b<1$. This implies that $\mathcal{L}^*\left(\rho\right)=\mathcal{L}^*\left(\rho b\right)$ which contradicts our assumption that $\mathcal{L}^*\left(\rho\right)$ is strictly monotonically decreasing.
\remove{Note that $\forall\rho>0$: $\min_{\bt:\left\Vert \bt\right\Vert \leq1}\mathcal{L}\left(\rho\bt\right)$
is attained, \ie the set $\Theta_{c}\left(\rho\right)$ is not empty.
Additionally, $\min_{\bt:\left\Vert \bt\right\Vert \leq1}\mathcal{L}\left(\rho\bt\right)$
is equivalent to $\min_{\bt:\left\Vert \bt\right\Vert ^{2}\leq1}\mathcal{L}\left(\rho\bt\right)$.
The KKT condition for the latter problem are $\exists\lambda\ge0$
so that
\begin{align*}
\nabla_{\bt}\mathcal{L}\left(\rho\bt\right)+2\lambda\bt & =0\,,\\
\lambda\left(\left\Vert \bt\right\Vert ^{2}-1\right) & =0\,.
\end{align*}

$\forall\bt_{c}\in\Theta_{c}\left(\rho\right)$, $\bt_{c}$
must satisfy these conditions (these are necessary conditions for
this problem). Since $\mathcal{L}\left(\rho\bt\right)$ does not
have any finite critical points\dnote{why?}, we must have that $\lambda>0$ and
thus $\left\Vert \bt\right\Vert ^{2}=1\Rightarrow\left\Vert \bt\right\Vert =1$.}
\end{proof}

\subsection{The Margin Path}
For prediction functions $f_{n}:\bR^d\to\bR$ on data points indexed as $n=1,2,\ldots,N$, we define the margin path as:
\begin{flalign}  \label{eq:margin-path basic definition}
 \textbf{Margin path:}\;\;   &\Theta_{m}\left(\rho\right)  \triangleq\arg\max_{\bt \in\mathbb{S}^{d-1}}\min_{n}f_{n}\left(\rho\bt\right)\,.&
\end{flalign}

For $\bt\in\mathbb{S}^{d-1}$, we denote the margin at scaling $\rho>0$ as 
\[
\gamma\left(\rho,\bt\right)=\min_{n}f_{n}\left(\rho\bt\right)\,,
\]
and the max-margin at scale of $\rho>0$ as
\[
\gamma^{*}\left(\rho\right)=\max_{\bt \in\mathbb{S}^{d-1}}\min_{n}f_{n}\left(\rho\bt\right)~.
\]
Note that for all $\rho$, this maximum exists as the maximum of a continuous function on a compact set.

Again, we make a simplifying assumption 
\begin{assumption} \label{assum: monotone gamma}
    There exist $\rho_0$ such that $\gamma^{*}\left(\rho\right)$ is strictly monotonically increasing to $\infty$ for any $\rho\geq\rho_{0}$.
\end{assumption}
Many common prediction functions satisfy this assumption, including the sum of positive-homogeneous prediction functions.

Using Lemma~\ref{lem:pareto-equiv} with Assumption~\ref{assum: monotone gamma}, we have:
\begin{align} \label{eq:margin-path}
    \Theta_{m}\left(\rho\right) & =\arg\max_{\bt \in\mathbb{S}^{d-1}}\min_{n}f_{n}\left(\rho\bt\right) \\
    & =\arg\min_{\bt \in\mathbb{R}^{d}}\left\Vert \bt\right\Vert ^{2}\,\,\mathrm{s.t.}\,\,\min_{n}f_{n}\left(\rho\bt\right)\geq\gamma^{*}\left(\rho\right)\,. \nonumber
\end{align}

\section{Non-Homogeneous Models} \label{sec: Non-Homogeneous Models}

We first study the constrained path in non-homogeneous models, and relate it to the margin path. To do so, we need to first define  
the $\epsilon$-ball surrounding a set $\mathcal{A}\subset \mathbb{R}^d$ $$\mathcal{B}_{\epsilon}\left(\mathcal{A}\right)\triangleq\left\{ \bt\in\mathbb{R}^{d}~|~\exists\bt'\in\mathcal{A}:\left\Vert \bt-\bt'\right\Vert <\epsilon\right\}\,, $$
and the notion of set convergence
\begin{definition}[Set convergence]
A sequence of sets   $\mathcal{A}_{t}\subset\mathbb{R}^{d}$ converges
to another sequence of sets $\mathcal{A}_{t}^{\prime}\subset\mathbb{R}^{d}$
if $\forall\epsilon>0$ $\exists t_{0}$ such that $\forall t>t_{0}$
$\mathcal{A}_{t}\subset\mathcal{B}_{\epsilon}\left(\mathcal{A}_{t}^{\prime}\right)$.   
\end{definition}

\subsection{Margin of Constrained Path Converges to Maximum Margin} \label{sec: Margin of Constrained path converges to maximum margin and examples}
For all $\rho$, the constrained path margin deviation from the max-margin is bounded, as we prove next.
\begin{lemma} \label{lem: difference between constrained path margin and max margin is bounded}
For all $\rho$, and every $\bt_{c}\left(\rho\right)$ in $\Theta_{c}\left(\rho\right)$
\begin{equation} \label{eq: constrained margin deviation from max margin bound}
    \gamma^{*}\left(\rho\right)-\gamma\left(\rho,\bt_{c}\left(\rho\right)\right)\leq \log N\,.
\end{equation}
\end{lemma}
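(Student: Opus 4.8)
The plan is a sandwiching argument based on the elementary fact that, for nonnegative numbers $a_1,\dots,a_N$, one has $\max_n a_n \le \sum_n a_n \le N\max_n a_n$, i.e.\ the log-sum-exp is within $\log N$ of the max. I will apply this to $a_n = \exp(-f_n(\rho\bt))$ for two choices of $\bt$: a constrained-path minimizer and a max-margin direction.

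First I would fix $\rho$ and pick any $\bt_c = \bt_c(\rho)\in\Theta_c(\rho)$ and any max-margin direction $\bt_m\in\mathbb{S}^{d-1}$ with $\min_n f_n(\rho\bt_m) = \gamma^*(\rho)$ (this maximizer exists, being the maximum of a continuous function over the compact sphere). The upper bound: since $\bt_c$ minimizes $\mathcal{L}(\rho\,\cdot\,)$ over $\mathbb{S}^{d-1}$ and $\bt_m\in\mathbb{S}^{d-1}$,
\[
\mathcal{L}(\rho\bt_c) \;\le\; \mathcal{L}(\rho\bt_m) \;=\; \sum_{n=1}^N \exp\!\bigl(-f_n(\rho\bt_m)\bigr) \;\le\; N\exp\!\bigl(-\gamma^*(\rho)\bigr),
\]
using $f_n(\rho\bt_m)\ge\gamma^*(\rho)$ for every $n$. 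The lower bound: dropping all but the largest summand,
\[
\mathcal{L}(\rho\bt_c) \;=\; \sum_{n=1}^N \exp\!\bigl(-f_n(\rho\bt_c)\bigr) \;\ge\; \max_n \exp\!\bigl(-f_n(\rho\bt_c)\bigr) \;=\; \exp\!\bigl(-\gamma(\rho,\bt_c)\bigr).
\]

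Chaining the two displays gives $\exp(-\gamma(\rho,\bt_c)) \le N\exp(-\gamma^*(\rho))$; taking logarithms yields $\gamma^*(\rho) - \gamma(\rho,\bt_c) \le \log N$, which is exactly the claim. There is no real obstacle here: the only place optimality of the constrained path is used is the single inequality $\mathcal{L}(\rho\bt_c)\le\mathcal{L}(\rho\bt_m)$, and everything else is the two-sided log-sum-exp estimate; no assumption on homogeneity or on monotonicity of $\mathcal{L}^*$ or $\gamma^*$ is needed, consistent with the statement holding for \emph{all} $\rho$.
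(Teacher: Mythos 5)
Your proof is correct and takes essentially the same approach as the paper's: both hinge on the log-sum-exp sandwich $e^{-\gamma(\rho,\bt)}\le\mathcal{L}(\rho\bt)\le Ne^{-\gamma(\rho,\bt)}$ together with the optimality inequality $\mathcal{L}(\rho\bt_c)\le\mathcal{L}(\rho\bt_m)$. The paper phrases it as a bound on the ratio $\mathcal{L}(\rho\bt_m)/\mathcal{L}(\rho\bt_c)\ge 1$, while you chain the two one-sided bounds directly, but the content is identical.
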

\begin{proof}
    Note that $\forall\bt:$
    \begin{align}
    e^{-\gamma\left(\rho,\bt\right)}\leq\sum_{n=1}^{N}\exp\left(-f_{n}\left(\rho\bt\right)\right)\leq N e^{-\gamma\left(\rho,\bt\right)}.    
    \end{align}
    Since, $\forall\bt\in\mathbb{S}^{d-1}\, , \mathcal{L}\left(\rho\bt_{c}\left(\rho\right)\right)\leq\mathcal{L}\left(\rho\bt\right)$,
    we have,  $\forall\bt_{c}\left(\rho\right)\in \Theta_{c}\left(\rho\right)$
    and $\forall\bt_{m}\left(\rho\right)\in\Theta_{m}\left(\rho\right)$,
    \begin{align*}
    1 &\leq\frac{\mathcal{L}\left(\rho\bt_{m}\left(\rho\right)\right)}{\mathcal{L}\left(\rho\bt_{c}\left(\rho\right)\right)}=\frac{\sum_{n=1}^{N}\exp\left(-f_{n}\left(\rho\bt_{m}\left(\rho\right)\right)\right)}{\sum_{n=1}^{N}\exp\left(-f_{n}\left(\rho\bt_{c}\left(\rho\right)\right)\right)} \\ 
    &\leq N\exp\left(-\left(\gamma^{*}\left(\rho\right)-\gamma\left(\rho,\bt_{c}\left(\rho\right)\right)\right)\right)\,.\label{eq: Loss ratio}
    \end{align*}
    \[\Rightarrow \gamma^{*}\left(\rho\right)-\gamma\left(\rho,\bt_{c}\left(\rho\right)\right)\leq \log N\,. \qedhere\]
\end{proof}

Lemma \ref{lem: difference between constrained path margin and max margin is bounded} immediately implies that
\begin{corollary} \label{corollary: constrained path margin converge to max margin}
If $\lim_{\rho\rightarrow\infty}\gamma^{*}\left(\rho\right)=\infty$,
then for all $\rho$, and every $\bt_{c}\left(\rho\right)$ in $\Theta_{c}\left(\rho\right)$
\[
\lim_{\rho\rightarrow\infty}\frac{\gamma^{*}\left(\rho\right)}{\gamma\left(\rho,\bt_{c}\left(\rho\right)\right)}=1\,.
\]
\end{corollary}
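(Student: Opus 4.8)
The plan is to obtain the claim directly from the two-sided control that Lemma~\ref{lem: difference between constrained path margin and max margin is bounded} already gives, combined with the trivial inequality $\gamma(\rho,\bt_c(\rho))\le\gamma^*(\rho)$ which holds by the very definition of $\gamma^*(\rho)$ as the maximum of $\min_n f_n(\rho\bt)$ over $\bt\in\mathbb{S}^{d-1}$. Putting these together, for every $\rho$ and every $\bt_c(\rho)\in\Theta_c(\rho)$,
\[
\gamma^*(\rho)-\log N \;\le\; \gamma(\rho,\bt_c(\rho)) \;\le\; \gamma^*(\rho)\,.
\]

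First I would use the hypothesis $\lim_{\rho\to\infty}\gamma^*(\rho)=\infty$ to fix some $\rho_1\ge\rho_0$ with $\gamma^*(\rho)>\log N$ for all $\rho\ge\rho_1$; by the displayed lower bound this also gives $\gamma(\rho,\bt_c(\rho))>0$ for such $\rho$, so dividing by $\gamma(\rho,\bt_c(\rho))$ is legitimate. Dividing the chain of inequalities by the positive quantity $\gamma(\rho,\bt_c(\rho))$ and then replacing it by its lower bound $\gamma^*(\rho)-\log N$ in the remaining fraction yields, for all $\rho\ge\rho_1$,
\[
1 \;\le\; \frac{\gamma^*(\rho)}{\gamma(\rho,\bt_c(\rho))} \;\le\; \frac{\gamma^*(\rho)}{\gamma^*(\rho)-\log N} \;=\; \frac{1}{1-\log N/\gamma^*(\rho)}\,.
\]

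Finally I would send $\rho\to\infty$: since $N$ is fixed and $\gamma^*(\rho)\to\infty$, the rightmost expression tends to $1$, and the sandwich theorem forces $\gamma^*(\rho)/\gamma(\rho,\bt_c(\rho))\to 1$. There is essentially no obstacle in this argument — the only subtlety is guaranteeing that the denominator is eventually positive so the ratio and its limit are well defined, which is precisely where the assumption $\gamma^*(\rho)\to\infty$ enters; the rest is a routine squeeze using the bound from Lemma~\ref{lem: difference between constrained path margin and max margin is bounded}.
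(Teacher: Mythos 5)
Your argument is correct and is exactly the elementary squeeze the paper has in mind when it says the corollary follows ``immediately'' from Lemma~\ref{lem: difference between constrained path margin and max margin is bounded}: combine $\gamma(\rho,\bt_c(\rho))\le\gamma^*(\rho)$ with the bound $\gamma^*(\rho)-\gamma(\rho,\bt_c(\rho))\le\log N$, note the denominator is eventually positive because $\gamma^*(\rho)\to\infty$, and pass to the limit. The only substance you add beyond the paper's one-line remark is making the positivity of the denominator explicit, which is a reasonable detail to spell out.
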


The last corollary states that the margin of the constrained path converges to the maximum margin. However, this does not necessarily imply convergence in parameter space, \ie this result does not guaranty that $\Theta_c\left(\rho\right)$ converges to $\Theta_m\left(\rho\right)$. We analyze some positive and negative examples to demonstrate this claim.

\textbf{Example 1: homogeneous models}


It is straightforward to see that, for $\hp$-positive homogeneous prediction functions (Definition \ref{def: positive homogeneous function}) the margin path $\Theta_m(\rho)$ in eq. \ref{eq:margin-path basic definition} is the same set for any $\rho$, and is given by 
\[\Theta_m^*=\arg \max_{\bt\in\b{S}^{d-1}}\min_{n} f_n\left(\bt\right)\,. \] 
Additionally, as we show next, for such models Lemma~\ref{lem: difference between constrained path margin and max margin is bounded} implies convergence in parameter space,
\ie $\Theta_{c}\left(\rho\right)$ converges to
$\Theta_{m}\left(\rho\right)$.
To see this, notice that for $\hp$-positive homogeneous functions $f_n$, $\forall \bt_{c}\left(\rho\right) \in\Theta_{c}\left(\rho\right)$:
\begin{align*}
    &\gamma^*\left(\rho\right) - \gamma\left(\rho,\bt_{c}\left(\rho\right)\right) =\\
    &\max_{\bt \in\mathbb{S}^{d-1}}\min_{n}f_{n}\left(\rho\bt\right) - \min_{n}f_{n}\left(\rho\bt_{c}\left(\rho\right)\right)\\
    & \rho^{\hp} \left( \max_{\bt \in\mathbb{S}^{d-1}}\min_{n}f_{n}\left(\bt\right) - \min_{n}f_{n}\left(\bt_{c}\left(\rho\right)\right)\right)\\
    &\le \log N\,.
\end{align*}
For $\rho\to\infty$ we must have \[\left( \max_{\bt \in\mathbb{S}^{d-1}}\min_{n}f_{n}\left(\bt\right) - \min_{n}f_{n}\left(\bt_{c}\left(\rho\right)\right)\right) \to 0\,.\] 
By continuity, the last equation implies that $\Theta_{c}\left(\rho\right)$ converges to
$\Theta_{m}\left(\rho\right)$. For full details see Appendix~\ref{sec: why margin convergence implies parameters convergence in homogeneous models}.

\textbf{Connection to previous results:} 
 For linear models,  \citet{rosset2004boosting} connected the $L_1$ constrained path and maximum $L_1$ margin solution. In addition, for any norm, \citet{rosset2004margin} showed that the regularization path converges to the limit of the margin path. 
In a recent work, 
\citet{Wei2019} extended this result to homogeneous models with cross-entropy loss. Here, for homogeneous models and any norm, we show a connection between the constrained path and the margin path. 

\textbf{Extension:}
Later, in Theorem~\ref{thm: Lexicographic max margin} we prove a more refined result: the constrained path converges to a \emph{specific subset} of the margin path set (the lexicographic max-margin set). 

In contrast, in general models, \ref{eq: constrained margin deviation from max margin bound} does not necessarily imply convergence in the parameter space. We demonstrate this result in the next example.

\textbf{Example 2: log predictor:}
We denote $\zn=y_n\xn$ for some dataset $\left\{\xn,y_n\right\}_{n=1}^N$, with features $\xn$ and label $y_n$. We examine the prediction function $f_{n}\left(\rho,\bt\right)=\log\left(\rho\bt^{\top}\zn\right)$ for $\bt^{\top}\zn>0$. We focus on the loss function tail behaviour and thus only care about the loss function behaviour in $\bt^{\top}\zn>0$ region. We assume that a separator which satisfy this constraint exists since we are focusing on  realizable  problems. 

Since  $\log(.)$  is strictly increasing and $\rho>0$, we have 
\[\gamma\left(\rho,\bt\right)=\min_{n}f_{n}\left(\rho\bt\right)=\log\left(\rho\min\limits_{n}\bt^{\top}\zn\right)\,.\]
We denote $\tilde{\gamma}\left(\bt\right)=\min\limits_n \bt^\top\zn$ and $\tilde{\gamma}^*=\max\limits_{\bt\in\mathbb{S}^{d-1}} \tilde{\gamma}\left(\bt\right)$. Note that $\gamma^*\left(\rho\right)=\log\left(\rho\tilde{\gamma}^*\right)$.
Now consider 
$\bt_c\left(\rho\right)\in\Theta_c\left(\rho\right)$ such that for some $\rho_0$ and $\forall \rho>\rho_0$:  $\tilde{\gamma}\left(\bt_c\left(\rho\right)\right)=\min\limits_{n} \bt_c\left(\rho\right)^\top\zn\ge \frac{\tilde{\gamma}^*}{N}$.
For this case, we still have, 
\begin{align*}
\gamma^{*}\left(\rho\right) - \gamma\left(\rho,\bt_c\left(\rho\right)\right)
&= \log\left(\rho\tilde{\gamma}^*\right) - \log\left(\rho\tilde{\gamma}\left(\bt_c\left(\rho\right)\right)\right)\\
&=\log\left(\frac{\tilde{\gamma}^*}{\tilde{\gamma}\left(\bt_c\left(\rho\right)\right)}\right)\le \log N\,.
\end{align*}
but clearly,  $\tilde{\gamma}\left(\bt_c\left(\rho\right)\right)\nrightarrow\tilde{\gamma}^*$. 
Thus, Lemma \ref{lem: difference between constrained path margin and max margin is bounded} does not guarantee that $\tilde{\gamma}\left(\bt_c\left(\rho\right)\right)\to\tilde{\gamma}^*$ as $\rho\to\infty$, or that $\Theta_c\left(\rho\right)$ converges to $\Theta_m\left(\rho\right)$.

\textbf{Analogies with regularization and optimization paths:}
This example demonstrates that for the prediction function $\log(\rho\bt^{\top}\mathbf{z})$ for $\bt^{\top}\mathbf{z}>0$, the constrained path does not necessarily converge to the margin path. This is equivalent to \textit{setup A}: linear prediction models with loss function $\exp\left(-\log\left(u\right)\right)$. \citet{rosset2004margin} and \citet{nacsonconvergence2018} state related results for \textit{setup A}.
Both works derived conditions on the loss function that ensure convergence to the margin path from the regularization/ optimization path respectively. \citet{rosset2004margin} showed that in \textit{setup A} the regularization path does not necessarily converge to the margin path.
\cite{nacsonconvergence2018} showed a similar result for the optimization path, \ie that in \textit{setup A} the optimization path does not necessarily converge to the margin path. Both results align with our results for the constrained path.

In contrast, according to the conditions of \citet{rosset2004margin, nacsonconvergence2018}, we know that if the prediction function is $\log^{1+\epsilon}(\rho\bt^{\top}\mathbf{z})$ for some $\epsilon>0$ and $\bt^{\top}\mathbf{z}>0$, then the regularization path and optimization path \emph{do converge} to the margin path. 
In the next example, we show that this is also true for the constrained path.

\textbf{Example 3: $(1+\epsilon)$-log predictor}: We examine the  prediction function $f_{n}\left(\rho,\bt\right)=\log^{1+\epsilon}\left(\rho\bt^{\top}\zn\right)$  for $\bt^{\top}\zn>0$ and some $\epsilon>0$. Since the log function is strictly increasing and $\epsilon,\rho>0$, we have 
\[\gamma\left(\rho,\bt\right)=\min_{n}f_{n}\left(\rho\bt\right)=\log^{1+\epsilon}\left(\rho\min\limits_{n}\bt^{\top}\zn\right)\,.\]
For all 
$\bt_c\left(\rho\right)\in\Theta_c\left(\rho\right)$:
\begin{align*}
&\gamma^{*}\left(\rho\right) - \gamma\left(\rho,\bt_c\left(\rho\right)\right)\\
&  =(1+\epsilon)\log^\epsilon \left(\rho\right)\left(\log\left(\tilde{\gamma}^*\right) 
- \log\left(\tilde{\gamma}\left(\bt_c\left(\rho\right)\right)\right) \right) \\
&\quad+o\left(\log^\epsilon \left(\rho\right)\right) \le N\,.
\end{align*}
For $\rho\to\infty$ we must have $\left(\log\left(\tilde{\gamma}^*\right) 
- \log\left(\tilde{\gamma}\left(\bt_c\left(\rho\right)\right)\right) \right)\to0$, which implies, by continuity, that $\Theta_{c}\left(\rho\right)$ converges to
$\Theta_{m}\left(\rho\right)$. For details, see Appendix \ref{sec: Auxiliary calculation for log^1+epsilon example}.

\subsection{Sum of  Positively Homogeneous Functions} \label{sec: Sum of  positively homogeneous functions}
\textbf{Remark:} The results in this subsection are specific for the Euclidean or $L_2$ norm.

Let $f_{n}\left(\br \bt\right)$ be functions that are a finite sum of positively homogeneous functions, \ie  for some finite $K$:
\begin{equation} \label{eq:non-homo}
    \forall n:\,f_{n}\left(\br \bt\right)=\sum_{k=1}^{K}f_{n}^{\left(k\right)}\left(\br\bt_{k}\right)\,,
\end{equation}
where $\bt=\left[\bt_{1},\dots,\bt_{K}\right]$ and $f_{n}^{\left(k\right)}\left(\bt_{k}\right)$
are $\hp_{k}$-positive homogeneous functions, where $0<\hp_{1}<\hp_{2}<\dots<\hp_{K}$.

First, we characterize the asymptotic form of the margin path in this setting.

\begin{lemma} \label{thm:sum-margin-path}
Let  $f_{n}\left(\bt\right)$ be a sum of positively homogeneous functions as in eq. \ref{eq:non-homo}. 
Then, the set of solutions of 
\begin{equation}
\arg\min_{\bt\in\mathbb{R}^d}\left\Vert \bt\right\Vert ^{2}\mathrm{\,\,s.t.\,\,}\forall n:f_{n}\left(\br \bt\right)\geq\gamma^*\left(\rho\right)\,.\label{eq: margin path}
\end{equation} can be written as 
\begin{equation}
\optTheta=\frac{1}{\br}\left(\w_{k}+o\left(1\right)\right)\left(\gamma^*\left(\rho\right)\right)^{\frac{1}{\hp_{k}}}\label{eq: theta_k}
\end{equation}
where the $o\left(1\right)$ term is vanishing as $\gamma^*\left(\rho\right)\rightarrow\infty$,
and $$\w^*=\left[\w_{1}^*,\dots,\w_{K}^*\right]\in\mathcal{W},$$ where 
\begin{align}
\mathcal{W} =\arg\min_{\w\in\mathbb{R}^d}\left\Vert \w_{1}\right\Vert ^{2}\mathrm{\,\,s.t.\,\,}\forall n:f_{n}\left(\w\right)\geq1 \text{.}\label{eq: infinite gamma}
\end{align}
\end{lemma}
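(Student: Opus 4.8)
The plan is to analyze the constrained optimization \eqref{eq: margin path} by exploiting the different homogeneity degrees $\hp_1 < \dots < \hp_K$, which force a separation of scales among the blocks $\bt_k$ as $\gamma^*(\rho) \to \infty$. First I would perform the change of variables $\bt_k = \rho^{-1}(\gamma^*(\rho))^{1/\hp_k}\, \v_k$, so that $f_n^{(k)}(\rho\bt_k) = (\gamma^*(\rho))\, f_n^{(k)}(\v_k)$ by $\hp_k$-homogeneity; the constraint $f_n(\rho\bt) \ge \gamma^*(\rho)$ then becomes $\sum_k f_n^{(k)}(\v_k) \ge 1$ for all $n$, exactly the constraint defining $\mathcal{W}$ in \eqref{eq: infinite gamma}. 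Under this substitution the objective $\|\bt\|^2 = \sum_k \rho^{-2}(\gamma^*(\rho))^{2/\hp_k}\|\v_k\|^2$, and since $1/\hp_1 > 1/\hp_2 > \dots > 1/\hp_K$, the $k=1$ term dominates: for $\gamma^*(\rho)$ large, $\|\bt\|^2 = \rho^{-2}(\gamma^*(\rho))^{2/\hp_1}\big(\|\v_1\|^2 + o(1)\big)$. Hence minimizing $\|\bt\|^2$ is, to leading order, the same as minimizing $\|\v_1\|^2$ subject to $\forall n: \sum_k f_n^{(k)}(\v_k)\ge 1$, which is precisely the definition of $\mathcal{W}$. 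This identifies $\w^* = [\w_1^*,\dots,\w_K^*] \in \mathcal{W}$ as the leading-order minimizer and yields the claimed form \eqref{eq: theta_k} with $\v_k = \w_k^* + o(1)$.

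The steps I would carry out in order are: (i) establish the exact change of variables and the identity $f_n(\rho\bt) \ge \gamma^*(\rho) \iff \sum_k f_n^{(k)}(\v_k) \ge 1$, noting this correspondence is a bijection between feasible points; (ii) show $\gamma^*(\rho) \to \infty$ (here I would invoke Assumption~\ref{assum: monotone gamma}, which is stated to hold for sums of positive-homogeneous functions) so the asymptotic regime is the relevant one; (iii) show that the $\v_k$ for $k \ge 2$ components of any near-optimal solution stay bounded, so that the higher-$k$ terms in $\|\bt\|^2$ are genuinely lower order — this needs an argument that one can always satisfy the constraints with bounded $\v_2,\dots,\v_K$ (e.g. take any fixed point of $\mathcal{W}$, which is nonempty by realizability) without paying in the $\v_1$-norm to leading order; (iv) a standard compactness / continuity argument (as in the homogeneous Example~1 in the excerpt) to upgrade "objective value converges" to "the minimizer set converges to $\mathcal{W}$", giving the $o(1)$ in \eqref{eq: theta_k}.

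The main obstacle I expect is step (iii) together with the uniformity needed in step (iv): a priori the optimal $\v_1(\rho)$ could drift, or the constraint could be satisfied more cheaply (in $\|\v_1\|$) by sending some $\v_k$, $k\ge 2$, to infinity, trading a small increase in the subdominant $(\gamma^*(\rho))^{2/\hp_k}$-weighted terms against a decrease in $\|\v_1\|^2$. Ruling this out requires a quantitative comparison: one must show that any feasible configuration can be modified so that $\v_2,\dots,\v_K$ lie in a fixed compact set while $\|\v_1\|^2$ increases by at most $o(1)$, and that the weighted higher-order terms $\rho^{-2}(\gamma^*(\rho))^{2/\hp_k}\|\v_k\|^2$ are then truly negligible relative to $\rho^{-2}(\gamma^*(\rho))^{2/\hp_1}$ — which holds since $(\gamma^*(\rho))^{2/\hp_k - 2/\hp_1} \to 0$. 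Making the "$o(1)$" uniform over the solution set, rather than just along a minimizing sequence, is the delicate bookkeeping; I would handle it by contradiction, assuming a sequence $\rho_j \to \infty$ with minimizers bounded away from $\mathcal{W}$ and extracting a convergent subsequence to contradict optimality in the limit problem \eqref{eq: infinite gamma}.
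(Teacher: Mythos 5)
Your change of variables $\bt_k = \rho^{-1}(\gamma^*(\rho))^{1/\hp_k}\v_k$ and the identification of the $k=1$ block as the dominant term in $\|\bt\|^2$ are exactly the paper's opening moves, and you correctly isolate the real difficulty: making the convergence of the minimizer set to $\mathcal{W}$ rigorous rather than just asserting it from the leading-order objective.

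Where your plan diverges from the paper, and where it has a genuine gap, is in steps (iii)--(iv). You propose to first show the minimizers' $\v_k$ for $k\ge 2$ stay bounded and then extract a convergent subsequence of a purported bad sequence $\v(\rho_j)$. But the argument you sketch for (iii) only shows that \emph{some} bounded feasible point with near-optimal $\|\v_1\|$ exists (e.g.\ a fixed element of $\mathcal{W}$); it does not show that the actual minimizer has bounded $\v_k$, $k\ge2$. And indeed it need not: in the reparameterized objective $\|\v_1\|^2 + \sum_{k\ge2}(\gamma^*(\rho))^{2/\hp_k-2/\hp_1}\|\v_k\|^2$ the weights on the higher-$k$ blocks vanish, so a minimizer could let $\|\v_k\|$ grow like a fractional power of $\gamma^*$ with no penalty. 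Without that a priori bound the subsequence extraction stalls. The paper avoids this entirely: it argues by contradiction that if $\v(\gamma')$ stayed outside $\mathcal{B}_\epsilon(\mathcal{W})$, then $\|\v_1(\gamma')\|^2$ exceeds $\min_{\w\in\mathcal{W}}\|\w_1\|^2$ by a fixed $\epsilon'>0$, while comparing against a \emph{fixed} feasible $\tilde\w$ (whose existence comes from Assumption~\ref{assum: monotone gamma}) shows the optimal objective value of eq.~\ref{eq: finite gama} is $\le\|\tilde\w_1\|^2 + o(1)$, yielding a contradiction for $\gamma'$ large. This direct value comparison never requires the minimizers themselves to converge or be bounded in the $k\ge2$ blocks. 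To repair your version along the lines you proposed, you would need either to establish the missing bound on the minimizers' $\v_k$, or to switch to the paper's objective-value comparison, which is the cleaner route.
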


\begin{proof}
We write the original optimization problem
\[
\arg\min_{\bt\in\mathbb{R}^d}\sum_{k=1}^{K}\left\Vert \bt_{k}\right\Vert ^{2}\mathrm{\,\,s.t.\,\,}\forall n:\sum_{k=1}^{K}f_{n}^{\left(k\right)}\left(\br \bt_{k}\right)\geq\gamma^*\left(\rho\right)\,.
\]
Dividing by $\gamma^*\left(\rho\right)$, using the $\hp_{k}$ positive homogeneity
of $f_{n}^{\left(k\right)}$, and changing the variables as $\bt_{k}=\frac{1}{\br}\w_{k}\left(\gamma^*\left(\rho\right)\right)^{\frac{1}{\hp_{k}}}$, we obtain an equivalent optimization problem
\begin{equation}
\arg\min_{\w\in\mathbb{R}^d}\sum_{k=1}^{K}\gamma^*\left(\rho\right)^{\frac{2}{\hp_{k}}}\left\Vert \w_{k}\right\Vert ^{2}\mathrm{\,\,s.t.\,\,}\forall n:f_{n}\left(\w\right)\geq1\,.\label{eq: finite gama}
\end{equation}
We denote the set of solutions of eq. \ref{eq: finite gama} as $\mathcal{W}\left(\gamma^*\left(\rho\right)\right)$. 
Taking the limit of $\gamma^*\left(\rho\right)\rightarrow\infty$ of this optimization
problem we find that any solution $\w\in \mathcal{W}\left(\gamma^*\left(\rho\right)\right)$ must minimize the first term
in the sum $\left\Vert \w_{1}\right\Vert ^{2}$, and only then the other terms.
Therefore the asymptotic solution is of the form of eqs. \ref{eq: theta_k}  and \ref{eq: infinite gamma}. We prove this reasoning formally in Appendix~\ref{sec: aux proof 1}, \ie we show that

\begin{claim} \label{claim: aux claim 1}
The solution of eq. \ref{eq: finite gama} is the same solution described in Lemma \ref{thm:sum-margin-path}, \ie eqs. \ref{eq: theta_k} and \ref{eq: infinite gamma}.
\end{claim}
\vspace{-0.6cm}
\end{proof}
\vspace{-0.2cm}
The following Lemma will be used to connect the constrained path to the characterization of the margin path.

\begin{lemma} \label{lem:homo-sum-constrained-margin}
Let  $f_{n}\left(\br \bt\right)$ be a sum of positively homogeneous functions as in eq. \ref{eq:non-homo}. 
Any path $\bt\left(\rho\right)$ such that 
\begin{equation}
\gamma^{*}\left(\rho\right)-\gamma\left(\rho,\bt\left(\rho\right)\right)<C\,.\label{eq: gamma difference}
\end{equation}
is of the form described in eqs. \ref{eq: theta_k}  and \ref{eq: infinite gamma}.
\end{lemma}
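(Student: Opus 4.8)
The plan is to reuse the change of variables from the proof of Lemma~\ref{thm:sum-margin-path}, but applied to the arbitrary near-optimal path $\bt(\rho)$ rather than to the exact margin path, and then to import the characterization of Lemma~\ref{thm:sum-margin-path} to fix the asymptotic relation between $\rho$ and $\gamma^*(\rho)$. Throughout I take $\bt(\rho)\in\mathbb{S}^{d-1}$ (this is the regime in which $\gamma$ and $\gamma^*$ are defined, and it holds for the constrained path by Lemma~\ref{lem: constrained path optimal solution norm is 1}). Since $f_n$ is a sum of positively homogeneous functions, Assumption~\ref{assum: monotone gamma} applies and $\gamma^*(\rho)\to\infty$. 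Writing $\bt(\rho)=[\bt_1(\rho),\dots,\bt_K(\rho)]$ and substituting $\bt_k(\rho)=\tfrac1\rho\v_k(\rho)\,(\gamma^*(\rho))^{1/\hp_k}$, the $\hp_k$-homogeneity of $f_n^{(k)}$ gives $f_n(\rho\bt(\rho))=\gamma^*(\rho)\,f_n(\v(\rho))$ for every $n$, so the hypothesis \eqref{eq: gamma difference} becomes $\min_n f_n(\v(\rho))\ge 1-C/\gamma^*(\rho)\to 1$, i.e.\ $\v(\rho)$ is feasible for \eqref{eq: infinite gamma} up to a vanishing slack, while the constraint $\norm{\bt(\rho)}=1$ becomes the weighted-norm identity $\sum_{k=1}^{K}(\gamma^*(\rho))^{2/\hp_k}\norm{\v_k(\rho)}^2=\rho^2$.

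Next I would pin down the scaling. Applying Lemma~\ref{thm:sum-margin-path} to the genuine margin path $\bt_m(\rho)\in\mathbb{S}^{d-1}$ gives $\rho^2=\sum_k(\gamma^*(\rho))^{2/\hp_k}\norm{\w_k^*+o(1)}^2$ with $\w^*\in\mathcal W$; since $\hp_1<\dots<\hp_K$ and $\gamma^*(\rho)\to\infty$, the $k=1$ term dominates, so in the generic case $\w_1^*\neq 0$ (the shallowest sub-model being necessary) one gets $\rho\,(\gamma^*(\rho))^{-1/\hp_1}\to\norm{\w_1^*}=:c_1>0$. Plugging this back into the identity for $\v(\rho)$, its $k=1$ term forces $\limsup_\rho\norm{\v_1(\rho)}\le c_1$, while the asymptotic feasibility of $\v(\rho)$ together with the definition of $\mathcal W$ as the feasible set of minimal $\norm{\cdot_1}$ forces $\liminf_\rho\norm{\v_1(\rho)}\ge c_1$; hence $\norm{\v_1(\rho)}\to c_1$ and $\mathrm{dist}\big(\v_1(\rho),\{\w_1:\w\in\mathcal W\}\big)\to 0$. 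Feeding this back once more, the $k\ge 2$ part of the norm identity equals $\rho^2-(\gamma^*(\rho))^{2/\hp_1}\norm{\v_1(\rho)}^2=o\big((\gamma^*(\rho))^{2/\hp_1}\big)$, and combining this with asymptotic feasibility and the ``minimize $\norm{\w_1}$, then the remaining blocks'' structure of $\mathcal W$ established in Lemma~\ref{thm:sum-margin-path} forces $\v_k(\rho)$ to converge to the matching blocks of $\mathcal W$ for $k\ge 2$ as well. Undoing the substitution yields exactly \eqref{eq: theta_k}--\eqref{eq: infinite gamma}. The degenerate case, in which $\w_1^*=0$ for every $\w^*\in\mathcal W$ so the shallowest sub-model is discarded, is handled by rerunning the argument from the first index $k$ with $\w_k^*\neq 0$.

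The hardest step is the last one: upgrading ``$\v(\rho)$ is asymptotically feasible and its leading block has asymptotically minimal norm'' to genuine convergence of $\v(\rho)$ to the exact solution set $\mathcal W$. Two features make this non-routine. First, the slack $C/\gamma^*(\rho)$ means the constraints $f_n(\v(\rho))\ge 1$ hold only in the limit, so one cannot simply take the limit of a minimizer and invoke Lemma~\ref{thm:sum-margin-path}; one must perturb or rescale the limiting direction to reach strict feasibility and then contradict minimality. Second, the weighted-norm identity only bounds $\norm{\v_k(\rho)}$ for $k\ge 2$ by the diverging quantity $O\big((\gamma^*(\rho))^{1/\hp_1-1/\hp_k}\big)$, so the deeper blocks are not a priori relatively compact and the limiting argument must be carried out one coordinate block at a time, using feasibility to rule out blow-up of a deeper block once the shallower ones have been fixed. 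This is essentially the same limiting analysis as in the proof of Lemma~\ref{thm:sum-margin-path} (Claim~\ref{claim: aux claim 1}), now performed with the defining inequalities replaced by their asymptotic counterparts.
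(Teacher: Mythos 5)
Your plan identifies the same core mechanism that the paper uses: the change of variables from Lemma~\ref{thm:sum-margin-path}, the weighted-norm identity $\rho^2=\sum_k(\cdot)^{2/\hp_k}\norm{\cdot}^2$, comparison with the genuine margin path, and feasibility of the rescaled variable. However, two technical choices make your version substantially harder to close than the paper's. First, you normalize by $\gamma^*(\rho)$, so your $\v(\rho)$ is only \emph{asymptotically} feasible ($\min_n f_n(\v(\rho))\ge 1-C/\gamma^*(\rho)$); the paper instead normalizes by $\gamma(\rho,\bt(\rho))$, which makes the rescaled variable exactly feasible, $\min_n f_n(\mathbf{v}(\rho))=1$, eliminating the ``perturb to reach strict feasibility'' step you flag as the hardest obstacle. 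Second, the paper argues by contradiction: if $\mathbf{v}(\rho')\notin\mathcal{B}_\delta(\mathcal{W})$ along some unbounded subsequence, equating the two expressions for $\rho'^2$ (one from $\bt(\rho')$, one from the margin path via Lemma~\ref{thm:sum-margin-path}), using $\gamma^*\le\gamma+C$, and dividing by the dominant power $\gamma^{2/\hp_1}$ yields $\|\w_1^*\|^2+o(1)>\|\mathbf{v}_1(\rho')\|^2+o(1)$, while exact feasibility of $\mathbf{v}$ together with being bounded away from $\mathcal{W}$ forces $\|\mathbf{v}_1(\rho')\|^2>\|\w_1^*\|^2+\epsilon'$ --- a contradiction for any $\w_1^*$, including $\w_1^*=0$. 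This sidesteps your separate handling of the degenerate case and, more importantly, sidesteps your block-by-block limiting argument for $k\ge 2$: $\mathcal{W}$ in eq.~\ref{eq: infinite gamma} constrains only $\|\w_1\|$ (beyond feasibility), so it does not have the lexicographic ``minimize $\w_1$, then the remaining blocks'' structure you invoke; what is needed (and what the contradiction delivers directly) is set-convergence of $\mathbf{v}(\rho)$ to $\mathcal{W}$, not componentwise convergence of each $\v_k$ to a fixed vector. Your forward/sandwich plan for $\|\v_1\|$ could be completed with the exact-feasibility normalization, but as written it is less efficient and overreaches on what must be shown for the deeper blocks.
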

\begin{proof}
See Appendix~\ref{sec:homo-sum-constrained-margin-proof}.
\end{proof}

Combining Lemma \ref{lem: difference between constrained path margin and max margin is bounded}, \ref{thm:sum-margin-path} and Lemma \ref{lem:homo-sum-constrained-margin} we obtain the following Theorem

\begin{theorem} \label{thm:hom-sum-final}
    Under Assumption \ref{assumption: L^* is strictly decreasing} and \ref{assum: monotone gamma}, any solution in 
    $\argmin_{\|\bt\|\le 1} \mathcal{L(\br\bt)}$
    converges to 
    \begin{equation}
    \optTheta=\frac{1}{\br}\left(\w_{k}^*+o\left(1\right)\right)(\gamma^*(\rho))^{\frac{1}{\hp_{k}}}
    \end{equation}
where the $o\left(1\right)$ term is vanishing as $\gamma^*(\rho)\rightarrow\infty$,
and $$\w^*=\left[\w_{1}^*,\dots,\w_{K}^*\right]\in\mathcal{W},$$ where 
\begin{align}
\mathcal{W} =\arg\min_{\w\in\mathbb{R}^d}\left\Vert \w_{1}\right\Vert ^{2}\mathrm{\,\,s.t.\,\,}\forall n:f_{n}\left(\w\right)\geq1 \text{.} 
\end{align}
    
\end{theorem}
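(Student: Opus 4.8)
The plan is to assemble the three preceding results into one statement, so the proof is essentially bookkeeping. First I would observe that, under Assumption~\ref{assumption: L^* is strictly decreasing}, Lemma~\ref{lem: constrained path optimal solution norm is 1} guarantees that for $\rho>\rho_0$ every minimizer of $\min_{\|\bt\|\le 1}\mathcal{L}(\rho\bt)$ has unit norm, hence $\argmin_{\|\bt\|\le 1}\mathcal{L}(\rho\bt)=\Theta_c(\rho)$, the constrained path. Thus it suffices to characterize the constrained path, and the claim becomes: every $\bt_c(\rho)\in\Theta_c(\rho)$ is of the form $\optTheta=\frac{1}{\rho}(\w_k^*+o(1))(\gamma^*(\rho))^{1/\hp_k}$ with $\w^*=[\w_1^*,\dots,\w_K^*]\in\mathcal{W}$.

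Next I would apply Lemma~\ref{lem: difference between constrained path margin and max margin is bounded}, which gives, for every $\bt_c(\rho)\in\Theta_c(\rho)$ and every $\rho$, the bound $\gamma^*(\rho)-\gamma(\rho,\bt_c(\rho))\le\log N$. In particular, with the constant $C=\log N+1$ the constrained path satisfies the hypothesis $\gamma^*(\rho)-\gamma(\rho,\bt(\rho))<C$ of Lemma~\ref{lem:homo-sum-constrained-margin}. Since each $f_n$ is a finite sum of positively homogeneous functions in the sense of eq.~\ref{eq:non-homo}, Lemma~\ref{lem:homo-sum-constrained-margin} then yields precisely that every $\bt_c(\rho)\in\Theta_c(\rho)$ has the asserted form, with $\mathcal{W}$ the solution set of eq.~\ref{eq: infinite gamma}; this is literally the same set $\mathcal{W}$ and the same exponents $1/\hp_k$ that appear in the margin-path characterization of Lemma~\ref{thm:sum-margin-path}, so the constrained path and the margin path share one asymptotic description. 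Finally, to upgrade the $o(1)$ statement to genuine convergence, I would invoke Assumption~\ref{assum: monotone gamma}, which forces $\gamma^*(\rho)\to\infty$ as $\rho\to\infty$; since the $o(1)$ term supplied by Lemma~\ref{lem:homo-sum-constrained-margin} is by construction the one vanishing in the regime $\gamma^*(\rho)\to\infty$, it indeed vanishes along the constrained path as $\rho\to\infty$. Collecting these observations gives the theorem.

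Because the substantive work is delegated to Lemmas~\ref{thm:sum-margin-path} and \ref{lem:homo-sum-constrained-margin} (and, within the latter, the rescaling argument of eq.~\ref{eq: finite gama} and Claim~\ref{claim: aux claim 1} showing that an optimal $\w$ must first minimize $\|\w_1\|^2$ and only then the lower-order terms), I do not anticipate any real obstacle at the level of this theorem. The only points requiring care are the minor ones flagged above: matching the strict inequality $<C$ required by Lemma~\ref{lem:homo-sum-constrained-margin} against the non-strict $\le\log N$ bound of Lemma~\ref{lem: difference between constrained path margin and max margin is bounded} (resolved by taking $C=\log N+1$), and verifying that the set $\mathcal{W}$ and exponents produced by the two cited lemmas coincide exactly, so that the phrase ``converges to $\optTheta$'' is unambiguous.
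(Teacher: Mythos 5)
Your proposal is correct and takes essentially the same route as the paper, which simply states that the theorem follows by combining Lemma~\ref{lem: difference between constrained path margin and max margin is bounded}, Lemma~\ref{thm:sum-margin-path}, and Lemma~\ref{lem:homo-sum-constrained-margin}; you have fleshed out that combination (including the minor strict-versus-nonstrict inequality bookkeeping) in exactly the intended way.
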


\textbf{Theorem \ref{thm:hom-sum-final} implications:}
An important implication of Theorem \ref{thm:hom-sum-final} is that an ensemble on neural networks will aim to discard the shallowest network in the ensemble. Consider the following setting: for each $k\in\left\{1,\dots,K\right\}$, the function  $\forall n:\, f_{n}^{\left(k\right)}\left(\br\bt_{k}\right)$ represents a prediction function of some feedforward neural network with no bias, 
all with the same positive-homogeneous activation function $\sigma\left(\cdot\right)$ of some degree $\hp$ (\textit{e.g.}, ReLU activation is positive-homogeneous of degree $1$). Note that in this setup, each of the $k$ prediction functions $f_{n}^{\left(k\right)}\left(\br\bt_{k}\right)$ is also a positive-homogeneous function. In particular, network $k$ with depth $d_k$ is positive homogeneous with degree $\hp_k=\hp{d_k}$ where $\hp$ is the activation function degree. Since  all the networks have the same activation function, deeper networks will have larger degree. We assume WLOG that $d_1<d_2<\dots<d_K$. This implies that $\hp_1<\hp_2<\dots<\hp_K$. In this setting, $\forall n:\,f_{n}\left(\br\bt\right)=\sum_{k=1}^{K}f_{n}^{\left(k\right)}\left(\br\bt_{k}\right)$ represents an ensemble of these networks. From Theorem \ref{thm:hom-sum-final}, the solution of the constrained path will satisfy
\vspace{-0.3em}
\begin{align*}
    f_{n}\left(\br\bt^*\right)&=\sum_{k=1}^{K}f_{n}^{\left(k\right)}\left(\br\bt_{k}^*\right)\\
    &= \sum_{k=1}^{K}f_{n}^{\left(k\right)}\left(\left(\w_{k}^*+o\left(1\right)\right)\left(\gamma^*(\br)\right)^{\frac{1}{\hp_{k}}}\right)\\
    & = \gamma^*(
    \rho)\sum_{k=1}^{K}f_{n}^{\left(k\right)}\left(\w_{k}^*+o\left(1\right)\right)\,,
\end{align*}
where $\w^*\in\mathcal{W}$ and $\mathcal{W}$ is calculated using eq. \ref{eq: infinite gamma}. Examining equation \ref{eq: infinite gamma}, we observe that the network aims to minimize the $\w_{1}$ norm. In particular, if the network ensemble can satisfy the constraints $\forall n:f_{n}\left(\w\right)\geq1$ with $\w_{1}=\mathbf{0}$, then the first equation obtained solutions will satisfy $\w_{1}=\mathbf{0}$. Thus the ensemble will discard the shallowest network if it is "unnecessary" to satisfy the constraint. 

Furthermore, from eq. \ref{eq: finite gama} we conjecture that after discarding the shallowest ``unnecessary" network, the ensemble will tend to minimize $\norm{\w_{2}}$, \ie to discard the second shallowest "unnecessary" network. This will continue until there are no more "unnecessary" shallow networks. In other words, we conjecture that the an ensemble of neural networks will aim to discard the shallowest ``unnecessary" networks. 

Additionally, using Theorem \ref{thm:hom-sum-final} we can now represent  hard-margin  SVM problems
\textbf{with unregularized bias}.
Previous results only focused on linear prediction functions without bias. Trying to extend these results to SVM with bias by extending all the input vectors $\xn$ with an additional $'1'$ component would fail since the obtained solution in the original $\mathbf{x}$ space is the solution of
$$\underset{\mathbf{\mathbf{w}}\in\mathbb{R}^{d},b\in\mathbb{R}}{\mathrm{argmin}}\left\lVert \mathbf{w}\right\rVert ^{2}+b^2\,\,\mathrm{s.t.}\,\,y_n\left(\mathbf{w}^{\top}\mathbf{x}_{n}+b\right)\geq1, $$
which is not the $L_2$ max-margin (SVM) solution, as pointed out by \cite{nar2019crossentropy}.
However, we can now achieve this goal using Theorem \ref{thm:hom-sum-final}. For some dataset $\left\{ \xn ,y_n\right\}_{n=1}^N,\,\xn\in\mathbb{R}^d,\ y_n\in\{-1,1\}$, we use the following prediction function $f_n\left(\bt\right) = y_n\left(\bt_1^\top\xn + b^2\right)$ where $\bt = \left[ \bt_1,b \right]$. From eqs. \ref{eq: theta_k}, \ref{eq: infinite gamma} the asymptotic solution will satisfy $\arg\min\limits_{\bt_1,b}\left\Vert \bt_{1}\right\Vert ^{2}\mathrm{\,\,s.t.\,\,}\forall n:y_n\left(\bt_1^\top\xn+b^2\right)\geq1$. 

\section{Homogeneous Models}
\label{sec:homogeneous}
In the previous section we connected the constrained path to the margin path. We would like to refine this characterization and also understand the connection to the optimization path. In this section we are able to do so for prediction functions $f_n\left(\bt\right)$ which are $\hp$-positive homogeneous functions (definition \ref{def: positive homogeneous function}).

In the homogeneous case, eq.  \ref{eq:margin-path} is equivalent, $\forall \rho$, to
\begin{align}
\Theta^*_{m}= \arg\min_{\bt \in\mathbb{R}^{d}}\left\Vert \bt\right\Vert ^{2}\,\,\mathrm{s.t.}\,\,\min_{n}f_{n}(\bt) \ge \gamma^* (1) \label{eq:margin-homogeneous}
\end{align}
since $f_n$ is homogeneous. 

\subsection{Optimization Path Converges to Stationary Points of the Margin Path and Constrained Path} \label{sec: Optimization path converges to stationary points of the constrained path and margin path}
\textbf{Remark:} The results in this subsection are specific for the Euclidean or $L_2$ norm, as opposed to many of the results in this paper which are stated for any norm.

In this section, we link the optimization path to the margin path and the constrained path. These results require the following smoothness assumption:
\begin{assumption}[Smoothness]
	We assume $f_n (\cdot)$ is a $\mathcal{C}^2$ function.
	\label{assu:diff}
\end{assumption}
\vspace{-1.1em}

\paragraph{Relating optimization path and margin path.} The limit of the margin path for homogeneous models is given by eq. \ref{eq:margin-homogeneous}. In this section we first relate the optimization path to this limit of margin path.

Note that for general homogeneous prediction functions $f_n$, eq. \ref{eq:margin-homogeneous} is a non-convex optimization problem, and thus it is unlikely for an optimization algorithm such as gradient descent to find the global optimum. We can relax the set to $\bt$ that are first-order stationary, \ie critical points of \ref{eq:margin-homogeneous}.
For $\bt\in\b{S}^{d-1}$, denote the set of support vectors of $\bt$ as 
\begin{equation}
S_m(\bt) =\{n: f_n (\bt ) =\gamma^*(1)\}\,.
\label{eq:sv}
\end{equation} 
\begin{definition}[First-order Stationary Point]
	The first-order optimality conditions of \ref{eq:margin-homogeneous} are:	
	\begin{enumerate}[wide, labelindent=0pt,topsep=0pt]
		\item $\forall n$, $f_n (\bt) \ge \gamma^* (1)$
		\item There exists $\boldsymbol{\lambda} \in \mathbb{R}^N_+$ such that $\bt = \sum_{n } \lambda_n \nabla f_n (\bt) $ and $\lambda_n =0 $ for $n \notin S_m(\bt)$ .
	\end{enumerate}
	We denote by $\Theta_m ^s $ the set of first-order stationary points.
	\label{def:margin-first-order}
\end{definition}

Let $\bt(t)$ be the iterates of gradient descent. Define $\ell_{n} (t) = \exp( - f_n (\bt(t)))$ and $\boldsymbol{\ell}(t)$ be the vector with entries $\ell_n (t)$. The following two assumptions assume that the limiting direction $\frac{\bt(t) }{\|\bt(t)\|}$ exist and the limiting direction of the losses $\frac{\boldsymbol{\ell}(t)}{\|\boldsymbol{\ell}(t)\|_1}$ exist. Such assumptions are natural in the context of max-margin problems, since we want to argue that $\bt(t)$ converges to a max-margin direction, and also the losses $\boldsymbol{\ell}(t)/ \|\boldsymbol{\ell}(t)\|_1$ converges to an indicator vector of the support vectors. The first step to argue this convergence is to ensure the limits exist.
\begin{assumption}[Asymptotic Formulas]
	Assume that $\mathcal{L}(\bt(t)) \to 0$, that is we converge to a global minimizer. Further assume that $\lim\limits_{t\to\infty} \frac{\bt(t)}{\norm{\bt(t)}_2}$ and $\lim\limits_{t\to\infty} \frac{\boldsymbol{\ell} (t) }{\norm{\boldsymbol{\ell}(t)}_1}$ exist. Equivalently,
	\begin{align}
	\ell_n (t) &= h(t) a_n + h(t) \epsilon_n (t)\\
	\bt (t) &= g(t) \bar \bt +g(t) \boldsymbol{\delta}(t),
	\end{align}
	with $\norm{\textbf{a}}_1 =1$, $\norm{\bar \bt}_2 =1$, $\lim\limits_{t\to\infty} h(t) =0$, $\lim\limits_{t\to\infty} \epsilon_n (t) =0$, and $\lim\limits_{t\to\infty} \boldsymbol{\delta}(t) =0$. 
	\label{assu:asymptotic-opt}
\end{assumption}
\begin{assumption}[Linear Independence Constraint Qualification]
	Let $\bt\in \b{S}^{d-1}$ be a unit vector. LICQ holds at $\bt$ if the vectors $\{\nabla f_n (\bt) \}_{n \in S_m(\bt)}$ are linearly independent.
	\label{assu:cq}
\end{assumption}

\begin{remark}
	Constraint qualifications allow the first-order optimality conditions of Definition \ref{def:margin-first-order} to be a necessary condition for optimality. Without constraint qualifications, the global optimum need not satisfy the optimality conditions. 
	
	LICQ is the simplest among many constraint qualification conditions identified in the optimization literature \cite{wright1999numerical}. 
	
For example, in linear SVM, LICQ is ensured if the set of support vectors is linearly independent. Consider $f_n (\bt) = \mathbf{x}_n ^\top \bt$ and $\xn$ be the support vectors. Then $\nabla f_n (\bar \bt) = \mathbf{x}_n$ , and so linear independence of the support vectors implies LICQ. For data sampled from an absolutely continuous distribution, the SVM solution will always have linearly independent support vectors \citep[Lemma 12]{soudry2017implicit}, but LICQ may fail when the data is degenerate.
\end{remark}
\begin{theorem}
	
	Define $\bar \bt = \lim\limits_{t\rightarrow\infty} \frac{\bt(t)}{\norm{\bt(t)}_2}$. Under Assumptions \ref{assu:diff}, \ref{assu:asymptotic-opt}, and constraint qualification at $\bar \bt$ (Assumption \ref{assu:cq}), $\bar \bt$ is a first-order stationary point of \ref{eq:margin-homogeneous}.
	\label{thm:margin-opt-homogeneous}
\end{theorem}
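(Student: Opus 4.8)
The plan is to unroll the gradient descent recursion, extract the limiting direction from the accumulated updates, and then verify the two conditions of Definition~\ref{def:margin-first-order}. First I would collect the homogeneity facts used throughout: since each $f_n$ is $\hp$-positive homogeneous and $\mathcal{C}^2$ (Assumption~\ref{assu:diff}), $\nabla f_n$ is $(\hp-1)$-positive homogeneous and Euler's identity gives $\langle\bt,\nabla f_n(\bt)\rangle=\hp f_n(\bt)$. Because $\mathcal{L}(\bt(t))\to 0$ (Assumption~\ref{assu:asymptotic-opt}), we have $f_n(\bt(t))=\norm{\bt(t)}^{\hp}f_n(\bt(t)/\norm{\bt(t)})\to\infty$; as $f_n$ is bounded on the unit sphere this forces $\norm{\bt(t)}\to\infty$. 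Using $-\nabla\mathcal{L}(\bt)=\sum_n \ell_n(\bt)\nabla f_n(\bt)$, the recursion telescopes to $\bt(t)=\bt(0)+\sum_{s=1}^{t}\eta_s\sum_n \ell_n(s-1)\nabla f_n(\bt(s-1))$. I would also note immediately that the limiting loss weights $\mathbf{a}$ of Assumption~\ref{assu:asymptotic-opt} are supported on $\arg\min_m f_m(\bar\bt)$: for $n$ with $f_n(\bar\bt)>\mu_0:=\min_m f_m(\bar\bt)$, one has $f_n(\bt(t))-\min_m f_m(\bt(t))=\norm{\bt(t)}^{\hp}\big(f_n(\bar\bt)-\mu_0+o(1)\big)\to+\infty$, hence $\ell_n(t)/\norm{\boldsymbol{\ell}(t)}_1\to 0$ and $a_n=0$.

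The core step is to read $\bar\bt$ off the telescoped sum. By $(\hp-1)$-homogeneity, $\nabla f_n(\bt(s-1))=\norm{\bt(s-1)}^{\hp-1}\nabla f_n(\hat\bt(s-1))$ with $\hat\bt(s-1):=\bt(s-1)/\norm{\bt(s-1)}\to\bar\bt$; substituting the asymptotic formula $\ell_n(s-1)=\norm{\boldsymbol{\ell}(s-1)}_1(a_n+o(1))$ and using continuity of $\nabla f_n$, the $s$-th increment becomes $c_s(\mathbf{v}+\mathbf{r}_s)$ with $c_s:=\eta_s\norm{\bt(s-1)}^{\hp-1}\norm{\boldsymbol{\ell}(s-1)}_1>0$, $\mathbf{v}:=\sum_n a_n\nabla f_n(\bar\bt)=\sum_{n\in S_m(\bar\bt)}a_n\nabla f_n(\bar\bt)$, and $\mathbf{r}_s\to 0$. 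Hence $\bt(t)=\bt(0)+\big(\sum_{s\le t}c_s\big)\mathbf{v}+\sum_{s\le t}c_s\mathbf{r}_s$. Since $\norm{\bt(t)}\to\infty$ and $\bt(0)$ is fixed we must have $\sum_s c_s=\infty$, and a Stolz--Ces\`aro argument then shows the $c_s$-weighted average of $\mathbf{r}_s$ vanishes, so dividing by $\norm{\bt(t)}$ gives $\bar\bt=\mathbf{v}/\norm{\mathbf{v}}$. This is where LICQ (Assumption~\ref{assu:cq}) is needed: a nonzero nonnegative combination of the linearly independent vectors $\{\nabla f_n(\bar\bt)\}_{n\in S_m(\bar\bt)}$ cannot vanish, so $\mathbf{v}\neq 0$ and the normalization is legitimate.

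It then remains to check Definition~\ref{def:margin-first-order}. Setting $\lambda_n:=a_n/\norm{\mathbf{v}}\ge 0$ gives $\bar\bt=\sum_n\lambda_n\nabla f_n(\bar\bt)$ with $\lambda_n=0$ off $\arg\min_m f_m(\bar\bt)$. Pairing this equation with $\bar\bt$ and using Euler's identity gives $1=\norm{\bar\bt}^2=\hp\,\mu_0\sum_n\lambda_n$, so the active constraints sit at level $\mu_0=\min_m f_m(\bar\bt)$. By $\hp$-homogeneity, whether a point is a first-order stationary point of eq.~\ref{eq:margin-homogeneous} depends only on its ray: rescaling $\bar\bt$ by $(\gamma^*(1)/\mu_0)^{1/\hp}\ge 1$ leaves the direction unchanged, moves the active level from $\mu_0$ to $\gamma^*(1)$ (so $f_n\ge\gamma^*(1)$ for all $n$ and $\arg\min_m f_m=S_m$), and rescales the multipliers by a positive constant. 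Hence $\bar\bt$ is a first-order stationary point of eq.~\ref{eq:margin-homogeneous}.

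The hard part will be the Stolz--Ces\`aro/averaging step: the increments converge only in \emph{direction}, and their magnitudes $c_s$ depend on the unspecified step sizes and on the a priori uncontrolled quantities $\norm{\bt(s-1)}$ and $\norm{\boldsymbol{\ell}(s-1)}_1$, so one must argue carefully that the finitely many early, poorly aligned increments become negligible against the divergent running total $\sum_{s\le t}c_s$, that the tail increments lie uniformly close to the ray $\mathbb{R}_+\mathbf{v}$, and that neither $\sum_s c_s<\infty$ nor $\mathbf{v}=0$ can occur (the latter being precisely the role of LICQ). A secondary subtlety, absent from the linear analysis of \citet{soudry2017implicit}, is that $\nabla f_n$ is evaluated at the moving point $\hat\bt(s-1)$ rather than at a fixed data vector, so continuity of $\nabla f_n$ (Assumption~\ref{assu:diff}) together with $\hat\bt(s-1)\to\bar\bt$ must be used to absorb that drift into the $\mathbf{r}_s$ terms.
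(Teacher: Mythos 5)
Your proposal is correct and ends at the same place the paper does, namely $\bar\bt \propto \sum_{n\in S}a_n\nabla f_n(\bar\bt)$ with LICQ forcing the sum to be nonzero, but the middle of the argument is genuinely different. The paper switches to the gradient-flow form $\bdot\bt(t)=\sum_n\ell_n(t)\nabla f_n(\bt(t))$, splits the RHS into a dominant term $I=\sum_{n\in S}h(t)a_n\nabla f_n(g(t)\bar\bt)$ plus three lower-order terms (using its Lemma~\ref{lem:grad-asymptotic} for the gradient perturbation bound, which is the analogue of your ``moving point'' remark), and then invokes the identity $\lim_t\bt(t)/\|\bt(t)\|=\lim_t\bdot\bt(t)/\|\bdot\bt(t)\|$, citing \cite{gunasekar2018characterizing} rather than proving it. You instead stay with the discrete recursion, telescope it, write each increment as $c_s(\mathbf{v}+\mathbf{r}_s)$, and run a Stolz--Ces\`aro-type averaging to show the accumulated direction converges to $\mathbf{v}/\|\mathbf{v}\|$; this replaces the external citation with a self-contained (if more delicate, as you note) argument. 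Your preliminary step showing $a_n=0$ off $\arg\min_m f_m(\bar\bt)$ is exactly the paper's Lemma~\ref{lem:an-0}. One place you are actually more careful than the paper: the paper's proof verifies only condition 2 of Definition~\ref{def:margin-first-order} and never addresses condition 1 ($f_n(\bt)\ge\gamma^*(1)$), which for a unit vector $\bar\bt$ is not automatic; your observation that the KKT structure is a ray property, together with the rescaling $\bar\bt\mapsto(\gamma^*(1)/\mu_0)^{1/\hp}\bar\bt$ and the Euler-identity computation of the multiplier normalization, fills in that step and makes the claim precise as a statement about the ray through $\bar\bt$.
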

The proof of Theorem \ref{thm:margin-opt-homogeneous} can be found in Appendix \ref{sec: margin-opt-homogeneous proof}.

\paragraph{Optimization path and constrained path.} Next, we study how the optimization path as $t\to \infty$ converges to stationary points of the constrained path with  $\rho \to \infty$.

The first-order optimally conditions of the constrained path $\min_{\|\bt\|\le 1} \mathcal{L}(\rho\bt)$, require that the constraints hold, and the gradient of the Lagrangian of the constrained path 
\begin{equation} \label{eq: lagrangian gradient}
    \rho\nabla_{\bt} \mathcal{L}(\rho\bt)+\lambda(\rho) \bt
\end{equation}
is equal to zero. In other words,
\begin{remark}
 	Under Assumption \ref{assumption: L^* is strictly decreasing}, $\bt$ is first-order optimal for the problem $\min_{\|\bt\|\le 1} \mathcal{L}(\rho\bt) $ if it satisfies:
 	
	\begin{inparaitem}
		\item $-\frac{ \nabla_{\theta} \mathcal{L}( \rho\bt)}{\|\nabla_{\theta} \mathcal{L}(\rho\bt)\|}= \frac{\bt}{\|\bt\|}$\,,\qquad
		\item $\| \bt \| = 1$.
	\end{inparaitem} 	
\end{remark}

On many paths the gradient of the Lagrangian goes to zero as $\rho\rightarrow\infty$.
However, we have a faster vanishing rate for the specific optimization paths that follow Definition \ref{def:stationary-constrained} below. Therefore, these paths better approximate true stationary points:
\begin{definition}[First-order optimal for $\rho \to \infty$]
	
	A sequence $\tbt(t)$ is  first-order optimal for  $\min_{\|\bt\|\le 1} \mathcal{L}(\rho\bt) $ with  \mbox{$\rho \to \infty$} if 
	
	\begin{inparaitem}
		\item $\lim\limits_{t \to \infty}-\frac{ \nabla_{\bt} \mathcal{L}(\rho\tbt(t))}{\|\nabla_{\bt} \mathcal{L}(\rho\tbt(t)\|}=\lim\limits_{t \to \infty} \frac{\tbt(t)}{\|\tbt(t)\|}$\,, \qquad
		\item $\|\tbt(t)\| = 1$.
	\end{inparaitem}
	
	\label{def:stationary-constrained}
\end{definition}

To relate the limit points of gradient decent to the constrained path, we will focus on stationary points of the constrained path that minimize the loss.

\begin{theorem}\label{thm:constrained-opt-homogeneous}
	Let $\bar \bt = \lim\limits_{t\to\infty} \frac{\bt(t)}{\| \bt(t)\|}$ be the limit direction of gradient descent. Under Assumptions \ref{assumption: L^* is strictly decreasing}, \ref{assu:diff}, \ref{assu:asymptotic-opt}, and constraint qualification at $\bar \bt$ (Assumption \ref{assu:cq}), the sequence $\bt(t) / \norm{\bt(t)}$ is a first-order optimal point for $\rho \to \infty$ (Definition \ref{def:stationary-constrained}). 
\end{theorem}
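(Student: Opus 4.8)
The plan is to identify the constrained-path scale with the iterate norm: take $\rho(t)=\|\bt(t)\|$ and $\tbt(t)=\bt(t)/\|\bt(t)\|$, so that $\rho(t)\tbt(t)=\bt(t)$ and hence $\nabla_{\bt}\mathcal{L}\big(\rho(t)\tbt(t)\big)=\nabla\mathcal{L}(\bt(t))$. Then the second condition of Definition~\ref{def:stationary-constrained} holds trivially, and since $\mathcal{L}(\bt(t))\to 0$ forces $g(t)=(1+o(1))\|\bt(t)\|\to\infty$ in the homogeneous case (otherwise $\bt(t)$ stays bounded along a subsequence, where $\mathcal{L}$ is a finite sum of strictly positive terms and thus bounded away from $0$), we get $\rho(t)\to\infty$. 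Also note that the right-hand side of the first condition is $\lim_t \tbt(t)/\|\tbt(t)\| = \bar\bt$. So everything reduces to the gradient-alignment statement $-\nabla\mathcal{L}(\bt(t))/\|\nabla\mathcal{L}(\bt(t))\|\to\bar\bt$. (Assumption~\ref{assumption: L^* is strictly decreasing} enters only to keep the constrained subproblem nondegenerate, cf.\ Lemma~\ref{lem: constrained path optimal solution norm is 1}.)

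First I would use $\hp$-homogeneity: $\nabla f_n$ is $(\hp-1)$-homogeneous, so
\[
-\nabla\mathcal{L}(\bt(t))=\sum_{n}\ell_n(t)\nabla f_n(\bt(t))=h(t)\,g(t)^{\hp-1}\,\mathbf{u}(t),\qquad \mathbf{u}(t):=\sum_n\big(a_n+\epsilon_n(t)\big)\nabla f_n\big(\bar\bt+\boldsymbol{\delta}(t)\big).
\]
By Assumption~\ref{assu:asymptotic-opt} ($\epsilon_n(t),\boldsymbol{\delta}(t)\to0$) and continuity of $\nabla f_n$ (Assumption~\ref{assu:diff}), $\mathbf{u}(t)\to\mathbf{v}:=\sum_n a_n\nabla f_n(\bar\bt)$, so provided $\mathbf{v}\neq0$ we get $-\nabla\mathcal{L}(\bt(t))/\|\nabla\mathcal{L}(\bt(t))\|=\mathbf{u}(t)/\|\mathbf{u}(t)\|\to\mathbf{v}/\|\mathbf{v}\|$. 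To see $\mathbf{v}\neq0$: for $m\in\argmin_k f_k(\bar\bt)$ the ratio $\ell_n(t)/\ell_m(t)=\exp\!\big(g(t)^{\hp}(f_m(\bar\bt+\boldsymbol{\delta}(t))-f_n(\bar\bt+\boldsymbol{\delta}(t)))\big)\to0$ whenever $f_n(\bar\bt)>\min_k f_k(\bar\bt)$, hence $a_n=0$ for such $n$; and by Theorem~\ref{thm:margin-opt-homogeneous} (whose hypotheses are subsumed here) $\bar\bt$ is first-order stationary for eq.~\ref{eq:margin-homogeneous}, so $\min_k f_k(\bar\bt)=\gamma^*(1)$ and therefore $\mathrm{supp}(\mathbf{a})\subseteq S_m(\bar\bt)$. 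Since $\|\mathbf{a}\|_1=1$, LICQ at $\bar\bt$ (Assumption~\ref{assu:cq}) makes $\mathbf{v}=\sum_{n\in S_m(\bar\bt)}a_n\nabla f_n(\bar\bt)$ a nontrivial combination of linearly independent vectors, so $\mathbf{v}\neq0$.

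It remains to show $\mathbf{v}/\|\mathbf{v}\|=\bar\bt$, which I would get by integrating the recursion. Summing $\bt(s)-\bt(s-1)=-\eta_s\nabla\mathcal{L}(\bt(s-1))=c_s\,\mathbf{u}(s-1)$ with $c_s:=\eta_s\,h(s-1)\,g(s-1)^{\hp-1}>0$ for $s$ large yields $\bt(t)=\bt(0)+\sum_{s}c_s\,\mathbf{u}(s-1)$, where $\mathbf{u}(s-1)\to\mathbf{v}$. Since $\|\bt(t)\|\to\infty$ and $\mathbf{u}(\cdot)$ is bounded, $\sum_{s\le t}c_s\to\infty$, so the Toeplitz/Ces\`aro averaging lemma gives $\bt(t)\big/\sum_{s\le t}c_s\to\mathbf{v}$, hence $\bt(t)/\|\bt(t)\|\to\mathbf{v}/\|\mathbf{v}\|$. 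Comparing with the assumed limit $\bt(t)/\|\bt(t)\|\to\bar\bt$ forces $\bar\bt=\mathbf{v}/\|\mathbf{v}\|$, which closes the gradient-alignment condition and completes the proof.

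The main obstacle is establishing simultaneously that $\mathbf{v}\neq0$ and $\mathbf{v}\parallel\bar\bt$: this is the self-consistency fact that the limiting direction of the gradient-descent increments coincides with the limiting direction of the iterates, which is exactly what ties the dynamics to the support-vector geometry. It requires both the margin-path stationarity of $\bar\bt$ (to place $\mathrm{supp}(\mathbf{a})$ inside $S_m(\bar\bt)$) and LICQ (to prevent the nonnegative combination $\sum_n a_n\nabla f_n(\bar\bt)$ from vanishing); the Toeplitz averaging is then routine but must be handled with care, since the increment coefficients $c_s$ are neither summable nor uniform. A secondary, bookkeeping-level subtlety is to confirm that taking $\rho(t)=\|\bt(t)\|$ is a legitimate realization of the ``$\rho\to\infty$'' quantifier in Definition~\ref{def:stationary-constrained}.
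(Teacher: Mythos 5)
Your proof is correct and reaches the same conclusion by essentially the same decomposition, but it differs from the paper in one substantive place. The paper's proof of this theorem is extremely terse: it simply defers to the proof of Theorem~\ref{thm:margin-opt-homogeneous}, where the key identity $\lim_{t\to\infty}\bt(t)/\|\bt(t)\| = \lim_{t\to\infty}\bdot{\bt}(t)/\|\bdot{\bt}(t)\|$ is cited directly from \cite{gunasekar2018characterizing}. You instead supply a self-contained derivation of that alignment via a Toeplitz/Ces\`aro averaging argument on the partial sums $\bt(t) = \bt(0) + \sum_{s\le t} c_s\,\mathbf{u}(s-1)$, using $\sum_s c_s\to\infty$ and $\mathbf{u}(s)\to\mathbf{v}$. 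This buys you a proof that does not lean on an external reference and makes the mechanism explicit; the paper's route is shorter but opaque at exactly that step. Your treatment of the two secondary points the paper elides --- (i) that taking $\rho(t)=\|\bt(t)\|$, i.e.\ $\tbt(t)=\bt(t)/\|\bt(t)\|$, legitimately instantiates Definition~\ref{def:stationary-constrained} and collapses it to the gradient-alignment claim, and (ii) that LICQ together with $\mathrm{supp}(\mathbf{a})\subseteq S_m(\bar\bt)$ and $\|\mathbf{a}\|_1=1$ guarantees $\mathbf{v}=\sum_n a_n\nabla f_n(\bar\bt)\neq 0$ so the normalization is well-defined --- is also more careful than the paper's, which leaves both implicit inside the proof of Theorem~\ref{thm:margin-opt-homogeneous}. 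One minor note: Assumption~\ref{assumption: L^* is strictly decreasing} appears in the theorem statement mainly to make the constrained problem nondegenerate (cf.\ Lemma~\ref{lem: constrained path optimal solution norm is 1}), which your proof acknowledges but does not need for the alignment argument itself; that matches the paper, where this assumption also does no visible work in the proof of this particular theorem.
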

The proof of Theorem \ref{thm:constrained-opt-homogeneous} can be found in Appendix \ref{sec: constrained-opt-homogeneous proof}.

\subsection{Lexicographic Max-Margin} \label{sec: Lexicographic max margin}
Recall that for positive homogeneous prediction functions, the margin path $\Theta_m(\rho)$ in eq. \ref{eq: margin path} is the same set for any $\rho$ and is given by 
\vspace{-0.6em}
\[\Theta_m^*=\arg \max_{\bt\in\b{S}^{d-1}}\min_{n} f_n\left(\bt\right)\,. \vspace{-1em}
\]

For non-convex functions $f_n$ or non-Euclidean norms $\|.\|$, the above set need not be unique. In this case, we define the following refined set of maximum margin solution set
\begin{definition}[Lexicographic maximum margin set] The lexicographic margin set denoted by $\Theta_{m,N}^*$ is given by the following iterative definition of $\Theta_{m,k}^*$  for $k=1,2,\ldots,N$:
\vspace{-1.5em}
\begin{align*}
\Theta_{m,0}^*&=\b{S}^{d-1}\,,\\
\Theta_{m,k}^*&=\arg \max_{\bt\in\Theta_{m,k-1}^*}\left(\min_{\{n_\ell\}_{\ell=1}^{k}} \max_{\ell\in[k]}f_{n_\ell}\left(\bt\right)\right)\subseteq\Theta_{m,k-1}^*\,.
\end{align*}
\vspace{-1.8em}
\end{definition}
In the above definition, $\Theta_{m,1}^*=\Theta_{m}^*$ denotes the set of maximum margin solutions, $\Theta_{m,1}^*$ denotes the subset of $\Theta_{m,1}^*$ with second smallest margin, and so on. 

For an alternate representation of $\Theta_{m,k}^*$, we introduce the following notation: for $\bt\in \b{S}^{d-1}$, let $n_\ell^*(\bt)\in[N]$  denote the index corresponding to the $\ell^\text{th}$ smallest margin of $\bt$ as defined below by breaking ties in the $\arg\min$ arbitrarily: 
\begin{equation}        
\begin{split}
n_1^*(\bt)&=\arg\min_{n} f_n(\bt)\\
n_k^*(\bt)&=\arg\min_{n\notin \{n_{\ell}^*(\bt)\}_{l=1}^{k-1}} f_n(\bt)\quad\text{for }k\ge2. 
\end{split}
\end{equation}
\vspace{-0.5em}
Using this notation, we can rewrite   $\Theta_{m,k+1}^*$ as 
\begin{align*}
\Theta_{m,k+1}^*=\arg \max_{\bt\in\Theta_{m,k}^*}f_{n^*_{k+1}(\bt)}\left(\bt\right)\,.
\end{align*}
\vspace{-1.5em}

We also define the limit set of constrained path as follows:
\begin{definition}[Limit set of constrained path] The limit set of constrained path  is defined as follows:
\[\Theta_c^\infty=\left\{\bt:\!\!\!\!\begin{array}{l}\exists\{\rho_i,\bt_{\rho_i}\}_{i=1}^{\infty} \text{ with }  \rho_i\to\infty, \bt_{\rho_i}\in\Theta_{c}(\rho_i)\\
\text{ such that }\bt_{\rho_i}\to\bt\end{array}\!\!\!\!\right\}.\]
\end{definition}
\begin{theorem} \label{thm: Lexicographic max margin}
For $\hp$-positive homogeneous prediction functions the limit set of constrained path is contained in the lexicographic maximum margin set, \ie $\Theta_c^\infty\subseteq\Theta_{m,N}^*$.  
\end{theorem}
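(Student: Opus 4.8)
The plan is to remove the homogeneity, recast everything through the sorted margin vector, and run an induction pinning each sorted margin of the constrained path to its lexicographic optimum. Concretely, by $\hp$-homogeneity $\mathcal{L}(\br\bt)=\sum_n e^{-\br^\hp f_n(\bt)}$, so with $s:=\br^\hp$ (and $s\to\infty\iff\br\to\infty$) the set $\Theta_c(\br)$ is exactly the set of minimizers over $\b{S}^{d-1}$ of $\mathcal{L}_s(\bt):=\sum_n e^{-sf_n(\bt)}$. For $\bt\in\b{S}^{d-1}$ write $\gamma_1(\bt)\le\dots\le\gamma_N(\bt)$ for the increasing rearrangement of $(f_1(\bt),\dots,f_N(\bt))$; each $\gamma_k$ is continuous on the compact $\b{S}^{d-1}$ and $\mathcal{L}_s(\bt)=\sum_{k=1}^N e^{-s\gamma_k(\bt)}$. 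Using the alternate representation in the excerpt, $\Theta^*_{m,k+1}=\arg\max_{\bt\in\Theta^*_{m,k}}f_{n^*_{k+1}(\bt)}(\bt)$ with $f_{n^*_{k+1}(\bt)}(\bt)=\gamma_{k+1}(\bt)$, one checks by induction that $\Theta^*_{m,k}=\{\bt\in\b{S}^{d-1}:\gamma_\ell(\bt)=\gamma^*_\ell\ \forall\,\ell\le k\}$, where $\gamma^*_\ell:=\max_{\bt\in\Theta^*_{m,\ell-1}}\gamma_\ell(\bt)$; moreover $\gamma^*_1\le\dots\le\gamma^*_N$, and this set also equals $\{\bt:\gamma_\ell(\bt)\ge\gamma^*_\ell\ \forall\,\ell\le k\}$. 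It therefore suffices to show that whenever $\br_i\to\infty$ and $\bt_{\br_i}\in\Theta_c(\br_i)$, one has $\gamma_k(\bt_{\br_i})\to\gamma^*_k$ for every $k$: any limit point $\bt^*$ of $(\bt_{\br_i})$ then has $\gamma_k(\bt^*)=\gamma^*_k$ for all $k$ by continuity, i.e.\ $\bt^*\in\Theta^*_{m,N}$.

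I would prove $\gamma_k(\bt_{\br_i})\to\gamma^*_k$ by induction on $k$. The base case $k=1$ is the homogeneous-model computation already in the excerpt: $\br_i^\hp(\gamma^*_1-\gamma_1(\bt_{\br_i}))\le\log N$ together with $\gamma_1(\bt_{\br_i})\le\gamma^*_1$. For the step, assume $\gamma_\ell(\bt_{\br_i})\to\gamma^*_\ell$ for $\ell<k$. The upper half, $\limsup_i\gamma_k(\bt_{\br_i})\le\gamma^*_k$, is a compactness argument: the closed sets $A_\epsilon:=\{\bt:\gamma_\ell(\bt)\ge\gamma^*_\ell-\epsilon\ \forall\,\ell<k\}$ decrease to $\Theta^*_{m,k-1}$ as $\epsilon\downarrow0$, so $\sup_{A_\epsilon}\gamma_k\to\sup_{\Theta^*_{m,k-1}}\gamma_k=\gamma^*_k$, while $\bt_{\br_i}\in A_{\epsilon_i}$ with $\epsilon_i\to0$. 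The lower half, $\liminf_i\gamma_k(\bt_{\br_i})\ge\gamma^*_k$, is where minimality enters: suppose on a subsequence $\gamma_k(\bt_{\br_i})\to\beta<\gamma^*_k$ (one checks this forces $\gamma^*_{k-1}<\gamma^*_k$, so $k$ opens a new level block); fix $\hat\bt\in\Theta^*_{m,N}$ and expand $\mathcal{L}_{s_i}(\bt_{\br_i})\le\mathcal{L}_{s_i}(\hat\bt)$, grouping the coordinates of $\hat\bt$ by its distinct margin values $\mu_1<\dots<\mu_r$ with multiplicities $q_t$, so $\mathcal{L}_{s_i}(\hat\bt)=\sum_t q_te^{-s_i\mu_t}$. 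Isolating the term $e^{-s_i\gamma_k(\bt_{\br_i})}$, which is $\gtrsim e^{-s_i\beta'}$ for some $\gamma^*_{k-1}<\beta'<\gamma^*_k$, against the $O(Ne^{-s_i\gamma^*_k})=o(e^{-s_i\beta'})$ contributed by $\hat\bt$ at levels $\ge\gamma^*_k$, the goal is the contradiction $\mathcal{L}_{s_i}(\bt_{\br_i})>\mathcal{L}_{s_i}(\hat\bt)$.

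The hard part is closing that last contradiction, because the contributions of $\bt_{\br_i}$ at the earlier levels $\mu_t<\gamma^*_k$ need not match $q_te^{-s_i\mu_t}$: $\gamma_\ell(\bt_{\br_i})$ converges to $\gamma^*_\ell$ only at an a priori slow, two-sided rate, and a minimizer can profitably push one margin slightly below $\mu_t$ while pushing the other margins of that block well above $\mu_t$ (the loss being far more sensitive to the smallest margins), so the discrepancies $q_te^{-s_i\mu_t}-\sum_{\ell\in\text{block }t}e^{-s_i\gamma_\ell(\bt_{\br_i})}$ can individually be of order $e^{-s_i\mu_t}\gg e^{-s_i\beta'}$. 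Handling this requires a quantitative strengthening of the induction hypothesis controlling the early-block loss contributions jointly — in effect, that a minimizer cannot gain on a later margin by an amount whose exponential beats its losses on the earlier, more heavily weighted, ones — carried out block by block; and when a fixed $\hat\bt\in\Theta^*_{m,N}$ is too weak a competitor, one replaces it by an $i$-dependent near-optimal point performing the same "trade" among the nearly-equal small margins. Making this bookkeeping precise is the technical core of the proof; the rest is continuity and compactness.
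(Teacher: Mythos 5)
Your proposal takes essentially the same inductive route as the paper (which adapts Theorem~7 of Rosset et al.): induct on $k$, reformulate $\Theta_{m,k}^*$ through the sorted margin vector, and obtain a contradiction from a loss comparison with a lexicographic max-margin competitor at level $k+1$. Your base case and the compactness argument for $\limsup_i\gamma_k(\bt_{\rho_i})\le\gamma_k^*$ are sound. However, you explicitly leave the lower half of the inductive step — ruling out $\liminf_i\gamma_k(\bt_{\rho_i})<\gamma_k^*$ — unfinished, and you correctly diagnose the obstruction: when you compare $\mathcal{L}_{s_i}(\bt_{\rho_i})$ against $\mathcal{L}_{s_i}(\hat\bt)$ for a \emph{fixed} $\hat\bt$, the contributions of $\bt_{\rho_i}$ at the earlier sorted-margin levels are only approximately equal to $\hat\bt$'s, and the discrepancies can be of order $e^{-s_i\mu_t}$, which dominates the gain $e^{-s_i\beta'}$ you are trying to isolate at level $k$. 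Since you state ``making this bookkeeping precise is the technical core of the proof,'' this is a genuine gap, not a minor detail.

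The paper's way around it is structural rather than quantitative: the induction is carried on the \emph{limit point} $\bt_\infty$ (``$\bt_\infty\in\Theta_{m,k}^*$'') rather than on the sequence of scalars $\gamma_\ell(\bt_{\rho_i})$. Because $\bt_\infty\in\Theta_{m,k}^*$ and the competitor $\bt'\in\Theta_{m,k+1}^*\subseteq\Theta_{m,k}^*$, their first $k$ sorted margin \emph{values} coincide exactly with $\gamma_1^*,\dots,\gamma_k^*$ — there is no approximation to control on that side. The remaining slack between $\bt_{\rho_i}$ and $\bt_\infty$ is then handled in the paper's Lemma~\ref{lem:lex-lb} by a case split on whether $\bt_{\rho_i}\in\Theta_{m,k}^*$; in the case it is not, the admissibility of $\bt_\infty$ itself as a competitor, via $\mathcal{L}(\rho_i\bt_{\rho_i})\le\mathcal{L}(\rho_i\bt_\infty)$, is used to bound the early sorted-margin deficit of $\bt_{\rho_i}$. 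This is precisely the extra lever — a block-by-block constraint on how much a minimizer may sacrifice an early margin to gain on later ones — that you recognized as necessary but did not supply; without it, your loss comparison at level $k$ cannot be closed.
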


The proof of the above Theorem follows from adapting the arguments of \cite{rosset2004boosting} (Theorem $7$ in Appendix $B.2$) for general homogeneous models. We show the complete proof in Appendix \ref{app:lexmargin}.

\section{Summary}
In this paper we characterized the connections between the constrained, margin and optimization paths. First, in Section \ref{sec: Non-Homogeneous Models}, we examined general non-homogeneous models. We showed that the margin of the constrained path solution converges to the maximum margin. We further analyzed this result and demonstrated how it implies convergence in parameters, \ie $\Theta_c\left(\rho\right)$ converges to $\Theta_m\left(\rho\right)$, for some models. Then, we examined functions that are a finite sum of positively homogeneous functions. These prediction function can represent an ensemble of neural networks with positive homogeneous activation functions. For this model, we characterized the asymptotic constrained path and margin path solution. This implies a surprising result: ensembles of neural networks will aim to discard the most shallow network. In the future work we aim to analyze sum of homogeneous functions with shared variables, such as ResNets.

Second, in Section \ref{sec:homogeneous} we focus on homogeneous models. For such models we link the optimization path to the margin and constrained paths. Particularly, we show that the optimization path converges to stationary points of the constrained path and margin path. In future work, we aim to extend this to non-homogeneous models. In addition, we give a more refined characterization of the constrained path limit. It will be interesting to find whether this characterization be further refined to answer whether the weighting of the data point can have any effect on the selection of the asymptotic solution --- as \cite{byrd2018weighted} observed empirically that it did not. 


\section*{Acknowledgements}

The authors are grateful to C. Zeno, and N. Merlis for
helpful comments on the manuscript. This research was supported by the Israel Science foundation (grant No. 31/1031), and by the Taub foundation. SG and NS were partially supported by NSF awards IIS-1302662 and IIS-1764032.

\bibliographystyle{icml2019}

\begin{thebibliography}{27}
\providecommand{\natexlab}[1]{#1}
\providecommand{\url}[1]{\texttt{#1}}
\expandafter\ifx\csname urlstyle\endcsname\relax
  \providecommand{\doi}[1]{doi: #1}\else
  \providecommand{\doi}{doi: \begingroup \urlstyle{rm}\Url}\fi

\bibitem[Ali et~al.(2018)Ali, Kolter, and Tibshirani]{ali2018continuous}
Ali, A., Kolter, J.~Z., and Tibshirani, R.~J.
\newblock A continuous-time view of early stopping for least squares
  regression.
\newblock \emph{arXiv preprint arXiv:1810.10082}, 2018.

\bibitem[Byrd \& Lipton(2018)Byrd and Lipton]{byrd2018weighted}
Byrd, J. and Lipton, Z.~C.
\newblock Weighted risk minimization \& deep learning.
\newblock \emph{arXiv preprint arXiv:1812.03372}, 2018.

\bibitem[Gunasekar et~al.(2017)Gunasekar, Woodworth, Bhojanapalli, Neyshabur,
  and Srebro]{gunasekar2017implicit}
Gunasekar, S., Woodworth, B.~E., Bhojanapalli, S., Neyshabur, B., and Srebro,
  N.
\newblock Implicit regularization in matrix factorization.
\newblock In \emph{Advances in Neural Information Processing Systems}, pp.\
  6152--6160, 2017.

\bibitem[Gunasekar et~al.(2018{\natexlab{a}})Gunasekar, Lee, Soudry, and
  Srebro]{gunasekar2018implicit}
Gunasekar, S., Lee, J., Soudry, D., and Srebro, N.
\newblock Implicit bias of gradient descent on linear convolutional networks.
\newblock \emph{NIPS}, 2018{\natexlab{a}}.

\bibitem[Gunasekar et~al.(2018{\natexlab{b}})Gunasekar, Lee, Soudry, and
  Srebro]{gunasekar2018characterizing}
Gunasekar, S., Lee, J.~D., Soudry, D., and Srebro, N.
\newblock Characterizing implicit bias in terms of optimization geometry.
\newblock \emph{ICML}, 2018{\natexlab{b}}.

\bibitem[He et~al.(2016)He, Zhang, Ren, and Sun]{he2016deep}
He, K., Zhang, X., Ren, S., and Sun, J.
\newblock Deep residual learning for image recognition.
\newblock In \emph{Proceedings of the IEEE conference on computer vision and
  pattern recognition}, pp.\  770--778, 2016.

\bibitem[Hoffer et~al.(2017)Hoffer, Hubara, and Soudry]{Hoffer2017}
Hoffer, E., Hubara, I., and Soudry, D.
\newblock {Train longer, generalize better: closing the generalization gap in
  large batch training of neural networks}.
\newblock In \emph{NIPS}, 2017.

\bibitem[Ji \& Telgarsky(2018)Ji and Telgarsky]{ji2018risk}
Ji, Z. and Telgarsky, M.
\newblock Risk and parameter convergence of logistic regression.
\newblock \emph{arXiv preprint arXiv:1803.07300}, 2018.

\bibitem[Ji \& Telgarsky(2019)Ji and Telgarsky]{ji2018gradient}
Ji, Z. and Telgarsky, M.
\newblock Gradient descent aligns the layers of deep linear networks.
\newblock In \emph{International Conference on Learning Representations}, 2019.

\bibitem[Keskar et~al.(2017)Keskar, Mudigere, Nocedal, Smelyanskiy, and
  Tang]{Keskar2016}
Keskar, N.~S., Mudigere, D., Nocedal, J., Smelyanskiy, M., and Tang, P. T.~P.
\newblock {On Large-Batch Training for Deep Learning: Generalization Gap and
  Sharp Minima}.
\newblock \emph{ICLR}, pp.\  1--16, 2017.

\bibitem[Nacson et~al.(2019{\natexlab{a}})Nacson, Lee, Gunasekar, Savarese,
  Srebro, and Soudry]{nacsonconvergence2018}
Nacson, M.~S., Lee, J., Gunasekar, S., Savarese, P.~H., Srebro, N., and Soudry,
  D.
\newblock Convergence of gradient descent on separable data.
\newblock \emph{AISTATS}, 2019{\natexlab{a}}.

\bibitem[Nacson et~al.(2019{\natexlab{b}})Nacson, Srebro, and
  Soudry]{nacson2018stochastic}
Nacson, M.~S., Srebro, N., and Soudry, D.
\newblock Stochastic gradient descent on separable data: Exact convergence with
  a fixed learning rate.
\newblock \emph{AISTATS}, 2019{\natexlab{b}}.

\bibitem[Nar et~al.(2019)Nar, Ocal, Sastry, and
  Ramchandran]{nar2019crossentropy}
Nar, K., Ocal, O., Sastry, S.~S., and Ramchandran, K.
\newblock Cross-entropy loss leads to poor margins, 2019.

\bibitem[Neyshabur et~al.(2015{\natexlab{a}})Neyshabur, Salakhutdinov, and
  Srebro]{neyshabur2015path}
Neyshabur, B., Salakhutdinov, R.~R., and Srebro, N.
\newblock Path-{SGD}: Path-normalized optimization in deep neural networks.
\newblock In \emph{Advances in Neural Information Processing Systems}, pp.\
  2422--2430, 2015{\natexlab{a}}.

\bibitem[Neyshabur et~al.(2015{\natexlab{b}})Neyshabur, Tomioka, and
  Srebro]{neyshabur2014search}
Neyshabur, B., Tomioka, R., and Srebro, N.
\newblock In search of the real inductive bias: On the role of implicit
  regularization in deep learning.
\newblock In \emph{International Conference on Learning Representations},
  2015{\natexlab{b}}.

\bibitem[Neyshabur et~al.(2017)Neyshabur, Tomioka, Salakhutdinov, and
  Srebro]{neyshabur2017geometry}
Neyshabur, B., Tomioka, R., Salakhutdinov, R., and Srebro, N.
\newblock Geometry of optimization and implicit regularization in deep
  learning.
\newblock \emph{arXiv preprint arXiv:1705.03071}, 2017.

\bibitem[Nocedal \& Wright(2006)Nocedal and Wright]{wright1999numerical}
Nocedal, J. and Wright, S.
\newblock Numerical optimization.
\newblock \emph{Springer Science}, 35\penalty0 (67-68), 2006.

\bibitem[Rosset et~al.(2004{\natexlab{a}})Rosset, Zhu, and
  Hastie]{rosset2004boosting}
Rosset, S., Zhu, J., and Hastie, T.
\newblock Boosting as a regularized path to a maximum margin classifier.
\newblock \emph{Journal of Machine Learning Research}, 2004{\natexlab{a}}.

\bibitem[Rosset et~al.(2004{\natexlab{b}})Rosset, Zhu, and
  Hastie]{rosset2004margin}
Rosset, S., Zhu, J., and Hastie, T.~J.
\newblock Margin maximizing loss functions.
\newblock In \emph{Advances in neural information processing systems}, pp.\
  1237--1244, 2004{\natexlab{b}}.

\bibitem[Soudry et~al.(2018{\natexlab{a}})Soudry, Hoffer, {Shpigel Nacson},
  Gunasekar, and Srebro]{soudry2018journal}
Soudry, D., Hoffer, E., {Shpigel Nacson}, M., Gunasekar, S., and Srebro, N.
\newblock The implicit bias of gradient descent on separable data.
\newblock \emph{JMLR}, 2018{\natexlab{a}}.

\bibitem[Soudry et~al.(2018{\natexlab{b}})Soudry, Hoffer, and
  Srebro]{soudry2017implicit}
Soudry, D., Hoffer, E., and Srebro, N.
\newblock The implicit bias of gradient descent on separable data.
\newblock \emph{ICLR}, 2018{\natexlab{b}}.

\bibitem[Suggala et~al.(2018)Suggala, Prasad, and
  Ravikumar]{suggala2018connecting}
Suggala, A., Prasad, A., and Ravikumar, P.~K.
\newblock Connecting optimization and regularization paths.
\newblock In \emph{Advances in Neural Information Processing Systems}, pp.\
  10631--10641, 2018.

\bibitem[Wei et~al.(2018)Wei, Lee, Liu, and Ma]{Wei2019}
Wei, C., Lee, J.~D., Liu, Q., and Ma, T.
\newblock On the margin theory of feedforward neural networks.
\newblock \emph{arXiv preprint arXiv:1810.05369v1}, pp.\  1--34, 2018.

\bibitem[Wilson et~al.(2017)Wilson, Roelofs, Stern, Srebro, and
  Recht]{wilson2017marginal}
Wilson, A.~C., Roelofs, R., Stern, M., Srebro, N., and Recht, B.
\newblock The marginal value of adaptive gradient methods in machine learning.
\newblock \emph{arXiv preprint arXiv:1705.08292}, 2017.

\bibitem[Wu et~al.(2017)Wu, Zhu, and E]{Wu2017a}
Wu, L., Zhu, Z., and E, W.
\newblock {Towards Understanding Generalization of Deep Learning: Perspective
  of Loss Landscapes}.
\newblock \emph{arXiv}, 2017.

\bibitem[Xu et~al.(2019)Xu, Zhou, Ji, and Liang]{xu2019when}
Xu, T., Zhou, Y., Ji, K., and Liang, Y.
\newblock When will gradient methods converge to max-margin classifier under
  re{LU} models?, 2019.

\bibitem[Zhang et~al.(2017)Zhang, Bengio, Hardt, Recht, and
  Vinyals]{zhang2017understanding}
Zhang, C., Bengio, S., Hardt, M., Recht, B., and Vinyals, O.
\newblock Understanding deep learning requires rethinking generalization.
\newblock In \emph{International Conference on Learning Representations}, 2017.

\end{thebibliography}

\newpage
	\onecolumn
	\appendix
	\part*{Appendix}
	
\section{Proof of Lemma \ref{lem:pareto-equiv}\label{sec: pareto-equiv proof}}
\begin{proof}
Let $\mathbf{w}^{*}\left(\rho\right)$ be a solution of the optimization
problem in eq. \ref{eq: min f st g}. Then, $g\left(\mathbf{w}^{*}\left(\rho\right)\right)=\rho$,
since otherwise we could have decreased $\rho$ without changing $\mathbf{w}^{*}\left(\rho\right)$
or $\phi(\rho)$ --- and this is impossible, since $\phi(\rho)$
is strictly monotonically decreasing. Therefore, we cannot decrease
$g\left(\mathbf{w}\right)$ below $\rho$ without increasing $f(\mathbf{w})$
above $\phi(\rho)$. This implies that $\mathbf{w}^{*}\left(\rho\right)$
is a solution of the optimization problem in eq. \ref{eq: min g st f}
with $\phi\left(\rho\right)$. Next, all that is left to show that
eq. \ref{eq: min g st f} has no additional solutions. Suppose by
contradiction there were such solutions $\mathbf{w}^{\prime}\left(\rho\right)$.
Since they are also minimizers of eq. \ref{eq: min g st f}, like
$\mathbf{w}^{*}\left(\rho\right)$, they have the same minimum value
$g\left(\mathbf{w}^{\prime}\left(\rho\right)\right)=\rho$. Since
they are not solutions of eq. \ref{eq: min f st g}, we have $f(\mathbf{w})>\phi(\rho)$.
However, this means they are not feasible for eq. \ref{eq: min g st f},
and therefore cannot be solutions.
\end{proof}

\section{Proof of Claim \ref{claim: aux claim 1}\label{sec: aux proof 1}}
\begin{proof}
Recall we denoted the set of solutions of eq. \ref{eq: finite gama} as $\mathcal{W}\left(\gamma^*\left(\rho\right)\right)$, and recall $\mathcal{W}$ from eq. \ref{eq: infinite gamma}. To simplify notations we omit the dependency on $\rho$ from the notation, \ie we replace $\gamma^*\left(\rho\right)$ with $\gamma$.
Suppose the claim was not correct. Then,
there would have existed $\epsilon>0$ such that $\forall\gamma$,
$\exists\gamma^{\prime}>\gamma$ such that $\exists \w^{*}\left(\gamma^{\prime}\right)\in\mathcal{W}\left(\gamma^{\prime}\right)\setminus\mathcal{B}_{\epsilon}\left(\mathcal{W}\right).$
Note that $\w^{*}\left(\gamma^{\prime}\right)\in\mathcal{W}\left(\gamma^{\prime}\right)$ is feasible in both optimization problems (eq. \ref{eq: infinite gamma}
and \ref{eq: finite gama}), since both problems have the same constraints.
Moreover, since $\w^{*}\left(\gamma^{\prime}\right)\notin\mathcal{B}_{\epsilon}\left(\mathcal{W}\right)$
it must be sub-optimal in comparison to the solution of eq. \ref{eq: infinite gamma}.
Therefore, $\exists\epsilon^{\prime}>0$ such that for any $\gamma^{\prime}$, $ \left\Vert \w_{1}^{*}\left(\gamma^{\prime}\right)\right\Vert ^{2}>\min_{\w\in\mathcal{W}}\left\Vert \w_{1}\right\Vert ^{2}+\epsilon^{\prime}$. Then we can write (from eq. \ref{eq: finite gama})
\begin{equation}
\mathcal{W}(\gamma^{\prime})=\arg\min_{\w\in\mathbb{R}^d}\left[\left\Vert \w_{1}\right\Vert ^{2}+\sum_{k=2}^{K}(\gamma^{\prime})^{\frac{2}{\hp_{k}}-\frac{2}{\hp_{1}}}\left\Vert \w_{k}\right\Vert ^{2}\right]\\\mathrm{\,\,s.t.\,\,}\forall n:f_{n}\left(\w\right)\geq1\,.\label{eq: finite gamma 2}
\end{equation}
From Assumption \ref{assum: monotone gamma} we know that $\exists c>0$ such that $\text{\ensuremath{\forall\gamma}}>c$  a solution
of the margin path exists. Therefore,  $\forall\gamma\geq c$, eq. \ref{eq: margin path} is feasible. We assume, WLOG, that $c<\gamma^{\prime}$. This implies that there exist a feasible finite
solution $\tilde{\w}$ to eq. \ref{eq: finite gamma 2} which does
not depend on $\gamma^{\prime}$. Therefore, $\forall\gamma^{\prime}$, $\forall \w\in\mathcal{W}\left(\gamma^{\prime}\right)$,
and $\forall k\in\left[K\right]$ the values of $\left\Vert \w_{k}\right\Vert ^{2}$
are respectively bounded below the values of $\left\Vert \tilde{\w}_{k}\right\Vert ^{2}$,
which are independent of $\gamma^{\prime}$. This implies that if we select
$\gamma^{\prime}$ large enough, we will have $\sum_{k=2}^{K}(\gamma^{\prime})^{\frac{2}{\hp_{k}}-\frac{2}{\hp_{1}}}\left\Vert \w_{k}\right\Vert ^{2}<\epsilon^{\prime}$.
This would contradict the assumption that $\w^{*}\left(\gamma^{\prime}\right)\in\mathcal{W}\left(\gamma^{\prime}\right)$
and therefore minimizes eq. \ref{eq: finite gamma 2}.
This implies that $\forall\epsilon$, $\exists\gamma_{0}$ such that
$\forall\gamma>\gamma_{0}$, we have $\mathcal{W}\left(\gamma\right)\subset\mathcal{B}_{\epsilon}\left(\mathcal{W}\right)$,
which entails the Theorem.

\end{proof}

\section{Proof of Lemma \ref{lem:homo-sum-constrained-margin}} \label{sec:homo-sum-constrained-margin-proof}
\begin{proof}
We assume by contradiction that yet $\bt\left(\rho\right)$
does not have the form of  eqs. \ref{eq: theta_k}  and \ref{eq: infinite gamma}. Without loss of generality we can write
\begin{equation}
\rho\bt\left(\rho\right)=\mathbf{v}_{k}\left(\rho\right)\left[\gamma\left(\rho,\bt\left(\rho\right)\right)\right]^{\frac{1}{\hp_{k}}}\,.\label{eq: rho theta form}
\end{equation}
If $\mathbf{v}_{k}\left(\rho^{\prime}\right)=\w_{k}^{*}+o\left(1\right)$,
for some $\w^{*}=\left[\w_{1}^{*},\dots,\w_{K}^{*}\right]\in\mathcal{W}$.
Then we could have written, from eqs. \ref{eq: rho theta form} and
\ref{eq: gamma difference}
\[
\rho\bt\left(\rho\right)=\left(\w_{k}^{*}+o\left(1\right)\right)\left[\gamma\left(\rho,\bt\left(\rho\right)\right)\right]^{\frac{1}{\hp_{k}}}=\left(\w_{k}^{*}+o\left(1\right)\right)\left[\gamma^{*}\left(\rho\right)\right]^{\frac{1}{\hp_{k}}}
\]
which contradicts out assumption that $\rho\bt\left(\rho\right)$
does not have the form of eq. \ref{eq: infinite gamma} and \ref{eq: finite gama}.

Therefore $\exists\delta>0$, such that $\forall\rho$, $\exists\rho^{\prime}>\rho$:
$\mathbf{v}\left(\rho^{\prime}\right)\notin\mathcal{B}_{\delta}\left(\mathcal{W}\right)$.
The norm of the solution in eq. \ref{eq: rho theta form}
\[
\rho^{\prime2}=\sum_{k=1}^{K}\left\Vert \mathbf{v}_{k}\left(\rho^{\prime}\right)\right\Vert ^{2}\left[\gamma\left(\rho,\bt\left(\rho\right)\right)\right]^{\frac{2}{\hp_{k}}}\,,
\]
is equal to the norm of the solution with margin $\gamma^{*}\left(\rho^{\prime}\right)$
\[
\rho^{\prime2}=\sum_{k=1}^{K}\left\Vert \w_{k}^{*}+o\left(1\right)\right\Vert ^{2}\left[\gamma^{*}\left(\rho^{\prime}\right)\right]^{\frac{2}{\hp_{k}}}\,.
\]
Therefore, from eq. \ref{eq: gamma difference} we have
\[
\sum_{k=1}^{K}\left\Vert \w_{1}^{*}+o\left(1\right)\right\Vert ^{2}\left[\gamma\left(\rho,\bt\left(\rho\right)\right)+C\right]^{\frac{2}{\hp_{k}}}>\sum_{k=1}^{K}\left\Vert \mathbf{v}_{k}\left(\rho^{\prime}\right)\right\Vert ^{2}\left[\gamma\left(\rho,\bt\left(\rho\right)\right)\right]^{\frac{2}{\hp_{k}}}
\]
and so, dividing by $\left[\gamma\left(\rho,\bt\left(\rho\right)\right)\right]^{\frac{2}{\hp_{k}}}$
we obtain
\begin{equation}
\left\Vert \w_{1}^{*}\right\Vert ^{2}+o\left(1\right)\\>\left\Vert \mathbf{v}_{1}\left(\rho^{\prime}\right)\right\Vert ^{2}+o\left(1\right)\label{eq: w_1 inequality}
\end{equation}
However, since $\mathbf{v}\left(\rho^{\prime}\right)\notin\mathcal{B}_{\delta}\left(\mathcal{W}\right)$,
$\exists\epsilon^{\prime}>0$ such that for all $\rho^{\prime}$:
$\left\Vert \mathbf{v}_{1}\left(\rho^{\prime}\right)\right\Vert ^{2}>\left\Vert \w_{1}^{*}\right\Vert ^{2}+\epsilon^{\prime}$
plugging this into eq. \ref{eq: w_1 inequality} we obtain 
\[
o\left(1\right)>\epsilon^{\prime}+o\left(1\right)
\]
which is a contradiction. Therefore, $\mathbf{v}\left(\rho^{\prime}\right)$
converges into $\mathcal{W}$, and eq. \ref{eq: rho theta form} can
be written in the form of eqs. \ref{eq: theta_k}  and \ref{eq: infinite gamma}.
\end{proof}

\section{Examples Section: Auxiliary Results}
\subsection{Showing that margin convergence implies convergence in the parameter space for homogeneous models} \label{sec: why margin convergence implies parameters convergence in homogeneous models}
We need to show that $\max\limits_{\bt \in\mathbb{S}^{d-1}}\min\limits_{n}f_{n}\left(\bt\right) - \min\limits_{n}f_{n}\left(\bt_{c}\left(\rho\right)\right) \to 0$ implies that $\Theta_{c}\left(\rho\right)$ converges to
$\Theta_{m}\left(\rho\right)$.

We denote $g\left(\bt\right)=\min\limits_{n}f_{n}\left(\bt\right).$ This
is a continues function since $\forall n:$ $f_{n}$ is continues.
In addition, we define for some $\rho_{0}>0$
\[
A_{r}=\left\{ \bt\in\left\{ \Theta_{c}\left(\rho\right)\right\} _{\rho\ge\rho_{o}}:\left|g\left(\bt\right)-g\left(\bt_{m}\right)\right|\le r\right\} 
\]
where $\bt_{m}\in\Theta_{m}$. Using this definition we also define
$d\left(\bt\right)$ as the Euclidean distance between $\bt$ and any
point in the set $\Theta_{m}$ and $d\left(r\right)$ as the maximal
distance for $\bt\in A_{r}$:
\begin{align*}
d\left(\bt\right) & =\min_{\boldsymbol{y}\in\Theta_{m}}\left\Vert \boldsymbol{y}-\bt\right\Vert \,,\\
d\left(r\right) & =\max_{\bt\in A_{r}}d\left(\bt\right)\,.
\end{align*}
Note that the maximum in the last equation is obtained as the maximum
of a continues function over a compact set.\\
We want to show that $\Theta_{c}\left(\rho\right)$ converges to $\Theta_{m}$.
From definition, this implies that $\forall\epsilon>0$ $\exists\rho_{0}$
such that $\forall\rho>\rho_{0}$ $\Theta_{c}\left(\rho\right)\subset\mathcal{B}_{\epsilon}\left(\Theta_{m}\right)$,
\ie $\forall\bt_{c}(\rho)\in\Theta_{c}\left(\rho\right)$: $\exists\bt'\in\Theta_{m}:\left\Vert \bt_{c}(\rho)-\bt'\right\Vert <\epsilon$.\\
Assume in contradiction that this is not the case. This means that
$\exists\epsilon>0$ such that $\forall\rho_{0}:\exists\rho>\rho_{0}$ and
$\exists\bt_{c}(\rho)\in\Theta_{c}\left(\rho\right)$ so that $\forall\bt'\in\Theta_{m}:\left\Vert \bt_{c}(\rho)-\bt'\right\Vert >\epsilon$.\\
This implies that $\lim_{r\to0}d\left(r\right)\neq0$ . Using the
limit definition we get that $\exists\epsilon>0$ so that $\forall\delta>0,$ $\exists\left|r\right|<\delta$
and $d\left(r\right)>\epsilon$. Using our notations this implies
that
\[
\exists\epsilon>0:\forall\delta:\exists\bt'\in A_{r}\text{ s.t. }\left|g\left(\bt'\right)-g\left(\bt_{m}\right)\right|\le r<\delta\text{ and }d\left(\bt'\right)>\epsilon\,.
\]
Next, we build a subsequence $\left\{ \bt_{i}\right\} _{i=1}^{\infty}$
by taking a decreasing series of $\left\{ \delta_{i}\right\} _{i=1}^{\infty}$
and their associated $\bt'$ from the last equation. Since $\bt_{i}'$
are bounded, there exist a convergent subsequence $\left\{ \widetilde{\bt}_{i}\right\} _{i=1}^{\infty}.$
For this subsequence, we obtain, using $g$ continuity
\[
\lim_{i\to\infty}g\left(\widetilde{\bt}_{i}\right)=g\left(\lim_{i\to\infty}\widetilde{\bt}_{i}\right)=g\left(\bt_{m}\right)
\]
which implies that $\exists\bt_{m}^{*}\in\Theta_{m}$ so that $\lim_{i\to\infty}\widetilde{\bt}_{i}=\bt_{m}^{*}$
which contradicts the fact that $d\left(\widetilde{\bt}_{i}\right)>\epsilon>0$. $\qed$

\subsection{Auxiliary results for $f_{n}\left(\rho,\bt\right)=\log^{1+\epsilon}\left(\rho\bt^\top\zn\right)$} \label{sec: Auxiliary calculation for log^1+epsilon example}
First, we show the full derivation of $\gamma^{*}\left(\rho\right) - \gamma\left(\rho,\bt_c\left(\rho\right)\right)$.
\begin{align*}
&\gamma^{*}\left(\rho\right) - \gamma\left(\rho,\bt_c\left(\rho\right)\right)\\
&= \log^{1+\epsilon}\left(\rho\tilde{\gamma}^*\right) - \log^{1+\epsilon}\left(\rho\tilde{\gamma}\left(\bt_c\left(\rho\right)\right)\right)\\
& = \left( \log\left(\rho\right) + \log\left(\tilde{\gamma}^*\right) \right)^{1+\epsilon}
- \left( \log\left(\rho\right) + \log\left(\tilde{\gamma}\left(\bt_c\left(\rho\right)\right)\right) \right)^{1+\epsilon}
\\
& = \log^{1+\epsilon}\left(\rho\right) + (1+\epsilon)\log^\epsilon \left(\rho\right)\log\left(\tilde{\gamma}^*\right) \\
& -  \log^{1+\epsilon}\left(\rho\right) - (1+\epsilon)\log^\epsilon \left(\rho\right)\log\left(\tilde{\gamma}\left(\bt_c\left(\rho\right)\right)\right)+o\left(\log^\epsilon \left(\rho\right)\right)\\
& = (1+\epsilon)\log^\epsilon \left(\rho\right)\left(\log\left(\tilde{\gamma}^*\right) 
- \log\left(\tilde{\gamma}\left(\bt_c\left(\rho\right)\right)\right) \right) +o\left(\log^\epsilon \left(\rho\right)\right)\\
&\le N\,.
\end{align*}

Second, we need to show that $\left(\log\left(\tilde{\gamma}^*\right) 
- \log\left(\tilde{\gamma}\left(\bt_c\left(\rho\right)\right)\right) \right)\to0$ implies that $\Theta_{c}\left(\rho\right)$ converges to
$\Theta_{m}\left(\rho\right)$.

We denote $g\left(\bt\right)=\log\left(\min\limits_{n}\bt^\top\xn\right)$. The rest of the proof is identical to the proof for the homogeneous case in Appendix~\ref{sec: why margin convergence implies parameters convergence in homogeneous models}.

\section{Proofs in Section \ref{sec:homogeneous} }

\subsection{Proof of Theorem \ref{thm:margin-opt-homogeneous}}\label{sec: margin-opt-homogeneous proof}
Define $S=\{ n: f_n (\bar \bt) = \gamma^* (1)\}$, where $\gamma^* (1)$ is the optimal margin attainable by a unit norm $\bt$.

\begin{lemma}
	Under the setting of Theorem \ref{thm:margin-opt-homogeneous},
	\begin{align}
	\nabla f_n (\bt(t)) &= \nabla f_n ( g(t) \bar \bt ) + O(B g(t) ^{\hp-1}\|\boldsymbol{\delta}(t)\|).
	\end{align}
	For $n \in S$ , the second term is asymptotically negligible as a function of $t$,
	\begin{align*}
	\nabla f_n (\bt(t)) &= \nabla f_n (g(t) \bar \bt) + o( \nabla f_n ( g(t) \bar \bt ))
	\end{align*}
	\label{lem:grad-asymptotic}
\end{lemma}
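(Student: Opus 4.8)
The plan is to use the positive homogeneity of $f_n$ to pull the scalar $g(t)$ out of the gradient and reduce the estimate to a first-order Taylor bound around the fixed limiting direction $\bar\bt$. I would begin from the elementary fact that the gradient of an $\hp$-positive homogeneous function is $(\hp-1)$-positive homogeneous: differentiating $f_n(\rho\bt)=\rho^{\hp}f_n(\bt)$ with respect to $\bt$ gives $\rho\,\nabla f_n(\rho\bt)=\rho^{\hp}\nabla f_n(\bt)$, hence $\nabla f_n(\rho\bt)=\rho^{\hp-1}\nabla f_n(\bt)$ for all $\rho>0$. By Assumption \ref{assu:asymptotic-opt} we may write $\bt(t)=g(t)\big(\bar\bt+\boldsymbol{\delta}(t)\big)$ with $\boldsymbol{\delta}(t)\to 0$; moreover $\mathcal L(\bt(t))\to 0$ forces $\norm{\bt(t)}\to\infty$, so $g(t)\to\infty$, and in particular $g(t)>0$ for $t$ large. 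Applying homogeneity of $\nabla f_n$ then gives
\[
\nabla f_n(\bt(t)) \;=\; g(t)^{\hp-1}\,\nabla f_n\!\big(\bar\bt+\boldsymbol{\delta}(t)\big).
\]

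Next I would Taylor-expand $\nabla f_n$ about $\bar\bt$. Since $\boldsymbol{\delta}(t)\to 0$, for $t$ large enough the entire segment joining $\bar\bt$ to $\bar\bt+\boldsymbol{\delta}(t)$ lies in a fixed closed ball $\mathcal K$ around $\bar\bt$; by Assumption \ref{assu:diff} each $f_n$ is $\mathcal C^2$, so $\sup_{n}\sup_{\bt\in\mathcal K}\norm{\nabla^2 f_n(\bt)}\le B$ for some finite $B$. The mean-value form of Taylor's theorem then gives $\norm{\nabla f_n(\bar\bt+\boldsymbol{\delta}(t))-\nabla f_n(\bar\bt)}\le B\,\norm{\boldsymbol{\delta}(t)}$. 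Multiplying by $g(t)^{\hp-1}$ and using $g(t)^{\hp-1}\nabla f_n(\bar\bt)=\nabla f_n(g(t)\bar\bt)$ (homogeneity again) yields the first claim,
\[
\nabla f_n(\bt(t)) \;=\; \nabla f_n(g(t)\bar\bt) \;+\; O\!\big(B\,g(t)^{\hp-1}\norm{\boldsymbol{\delta}(t)}\big).
\]

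Finally, for $n\in S$ I would upgrade this to a relative error bound. Euler's identity for the $\hp$-homogeneous $f_n$ gives $\innerprod{\bar\bt}{\nabla f_n(\bar\bt)}=\hp\,f_n(\bar\bt)=\hp\,\gamma^*(1)>0$ (the max-margin is strictly positive in a realizable problem and $\hp>0$), so $\nabla f_n(\bar\bt)\neq 0$ and therefore $\norm{\nabla f_n(g(t)\bar\bt)}=g(t)^{\hp-1}\norm{\nabla f_n(\bar\bt)}$ is bounded below by a positive constant times $g(t)^{\hp-1}$. Since $\norm{\boldsymbol{\delta}(t)}\to 0$, the error term $O\big(B\,g(t)^{\hp-1}\norm{\boldsymbol{\delta}(t)}\big)$ is $o(g(t)^{\hp-1})=o(\norm{\nabla f_n(g(t)\bar\bt)})$, i.e.\ its ratio to $\norm{\nabla f_n(g(t)\bar\bt)}$ vanishes, which is the second claim.

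The one step I would be careful with is the uniformity of the Hessian bound $B$: one must first invoke $\boldsymbol{\delta}(t)\to 0$ to trap the iterates in a compact neighborhood of $\bar\bt$ that does \emph{not} depend on $t$, and only afterwards use continuity of $\nabla^2 f_n$ on that compact set to extract $B$; reversing the order would not produce a $t$-uniform constant. Everything else is routine bookkeeping with the exponent $\hp-1$, so I do not expect any genuine difficulty beyond this.
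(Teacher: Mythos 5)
Your proof is correct and reaches the same two estimates, but by a genuinely different route than the paper's. Where the paper Taylor-expands $\nabla f_n$ about the growing point $g(t)\bar\bt$ directly and then controls the remainder by invoking $(\hp-2)$-homogeneity of the Hessian to rescale $\norm{\nabla^2 f_n(\bar\bt_s(t))}$ back onto the unit ball, you first use $(\hp-1)$-homogeneity of the gradient to write $\nabla f_n(\bt(t)) = g(t)^{\hp-1}\nabla f_n(\bar\bt+\boldsymbol{\delta}(t))$ and then Taylor-expand on a fixed compact neighborhood of $\bar\bt$. The payoff of your ordering is that the constant $B$ is extracted from a genuinely $t$-independent compact set (a point you are rightly careful about), and you never have to reason about the Hessian at points whose norm is diverging. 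The second, more substantive divergence is in how you establish $\nabla f_n(\bar\bt)\neq 0$ for $n\in S$: the paper appeals to the linear-independence constraint qualification (Assumption~\ref{assu:cq}), whereas you use Euler's identity $\langle\bar\bt,\nabla f_n(\bar\bt)\rangle = \hp f_n(\bar\bt) = \hp\gamma^*(1)$ together with $\gamma^*(1)>0$, which follows here because $\mathcal L(\bt(t))\to 0$ forces $\min_n f_n(\bt(t))>0$ for large $t$ and hence a positive margin direction exists. This is a nice simplification: it shows that this particular lemma does not actually need LICQ, only smoothness, homogeneity, and the realizability built into Assumption~\ref{assu:asymptotic-opt}. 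The trade-off is that your argument leans on strict positivity of the margin, while the paper's LICQ route would survive even if $\gamma^*(1)$ were zero; in the paper's realizable setting both are fine.
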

\begin{proof}
	By Taylor's theorem,
	\begin{align*}
	\nabla f_n (\bt(t)) &=\nabla f_n(g(t) \bar \bt) + \int_{s=0}^{s=1} \nabla^2 f_n ( g(t) \bar \bt + s g(t) \boldsymbol{\delta}(t) ) g(t) \boldsymbol{\delta}(t) \,\textrm{d} s\,.
	\end{align*}	
	
	Let $\bar \bt_s (t) := g(t) \bar \bt + s g(t) \boldsymbol{\delta}(t)$. We bound the integrand in the second term.
	\begin{align*}
	\norm{\nabla^2 f_n (\bar \bt_s(t) )} = \nabla^2 f_n \Big( \frac{\bar \bt_s (t)}{\norm{\bar \bt_s}} \Big) \norm{\bar \bt_s (t)} ^{\hp-2}\le B \norm{\bar \bt_s (t) }^{\hp-2},\end{align*}
	where $ B= \max_{\|\bt\| \le 1} \| \nabla^2 f_n (\bt)\|<\infty$ since $\nabla^2 f_n$ is a continuous function maximized over a compact set.
	
	Thus
	\begin{align*}
	\nabla f_n ( \bt (t) )& = \nabla f_n ( g(t)  \bar \bt ) + O( B \| \bar \theta_s (t)\|^{\hp-1}  \|\boldsymbol{\delta}(t)\|)\\
	&=\nabla f_n ( g(t)  \bar \bt )+ O(B g(t)^\hp ( 1+ o(1))^{\hp-1} \|\boldsymbol{\delta}(t)\|)\\
	&= \nabla f_n (g(t) \bar \bt) + O(B g(t) ^{\hp-1}  \|\boldsymbol{\delta}(t)\|) \tag{$\hp$ is a constant independent of $t$.}
	\end{align*}
	$\nabla f_n(g(t) \bar \bt) = g(t) ^{\hp-1} \nabla f_n ( \bar \bt)$, and for $n \in S$, $\|\nabla f_n  (\bar \bt )\| >0$ via constraint qualification (Assumption \ref{assu:cq}). Thus for $n \in S$ and using $\|\boldsymbol{\delta}(t) \| =o(1)$,
	\begin{align*}
	\nabla f_n (\bt(t)) = \nabla f_n (g(t) \bar \bt) + o(\nabla f_n (g(t) \bar \bt) ). 
	\end{align*}
\end{proof}

\begin{lemma}
	Let $S=\{ n: f_n (\bar \bt ) = \gamma^* (1)\}$.	Under the conditions of Theorem \ref{thm:margin-opt-homogeneous}, $a_n =0$ for $n \notin S$.
	\label{lem:an-0}
\end{lemma}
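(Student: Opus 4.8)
By Assumption~\ref{assu:asymptotic-opt} we may write $\ell_n(t)=h(t)(a_n+\epsilon_n(t))$ and $\bt(t)=g(t)(\bar\bt+\boldsymbol{\delta}(t))$ with $h(t),\epsilon_n(t),\|\boldsymbol{\delta}(t)\|\to 0$, $\|\bar\bt\|_2=1$ and $\|\textbf{a}\|_1=1$. The plan is to compare the losses of different samples against each other. Since each $f_n$ is $\hp$-positive homogeneous, $f_n(\bt(t))=g(t)^{\hp}f_n(\bar\bt+\boldsymbol{\delta}(t))$, so the loss ratios are governed by $g(t)^{\hp}$ times the margin gaps at $\bar\bt$, and these diverge whenever a gap is strictly positive. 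This will force every sample whose margin at $\bar\bt$ strictly exceeds the smallest one to have vanishing weight $a_n$.

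In detail, I would first observe that $g(t)=\|\bt(t)\|_2\to\infty$: otherwise $\bt(t)$ would stay in a compact set on which the continuous, strictly positive function $\mathcal{L}$ is bounded away from $0$, contradicting $\mathcal{L}(\bt(t))\to 0$. Next, for any two samples $i,j$, $\hp$-homogeneity and continuity of the $f_n$ give
\[
\frac{\ell_i(t)}{\ell_j(t)}=\exp\!\Big(-g(t)^{\hp}\big[f_i(\bar\bt+\boldsymbol{\delta}(t))-f_j(\bar\bt+\boldsymbol{\delta}(t))\big]\Big),
\]
and if $f_i(\bar\bt)>f_j(\bar\bt)$ the bracket tends to the positive constant $f_i(\bar\bt)-f_j(\bar\bt)$ while $g(t)^{\hp}\to\infty$, so $\ell_i(t)/\ell_j(t)\to 0$. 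Finally, fix $j$ with $f_j(\bar\bt)=\min_n f_n(\bar\bt)=\gamma^*(1)$ (using that $\bar\bt$ attains the maximum margin; see below), and suppose some $n$ with $f_n(\bar\bt)>\gamma^*(1)$ had $a_n>0$. On one hand $\ell_n(t)/\ell_j(t)\to 0$ by the display. On the other hand $\ell_n(t)/h(t)\to a_n>0$ while $\ell_j(t)/h(t)\to a_j\ge 0$; hence $\ell_n(t)/\ell_j(t)\to a_n/a_j$ if $a_j>0$, and $\ell_n(t)/\ell_j(t)\to\infty$ if $a_j=0$ (for then $\ell_j(t)/h(t)=\epsilon_j(t)\to 0$ while $\ell_n(t)/h(t)\ge a_n/2$ eventually) --- in either case contradicting $\ell_n(t)/\ell_j(t)\to 0$. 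Therefore $a_n=0$ for every $n$ with $f_n(\bar\bt)>\gamma^*(1)$, \ie for every $n\notin S$.

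The ratio argument itself is elementary; the place that needs care is the appeal to ``$\bar\bt$ attains the maximum margin'', \ie $\min_n f_n(\bar\bt)=\gamma^*(1)$, so that $S$ is exactly the active set of $\bar\bt$. I expect this to be the main obstacle, and it is the step at which one genuinely uses that $\bt(t)$ follows gradient descent rather than merely that $\mathcal{L}(\bt(t))\to 0$: without it the same argument still yields $a_n=0$ for all $n$ with $f_n(\bar\bt)>\min_m f_m(\bar\bt)$, which one can then feed (together with Lemma~\ref{lem:grad-asymptotic}) into the gradient-descent recursion to place $\bar\bt$ in the cone generated by $\{\nabla f_n(\bar\bt):f_n(\bar\bt)=\min_m f_m(\bar\bt)\}$ and, via the constraint qualification, conclude it is first-order stationary --- whence $\min_n f_n(\bar\bt)=\gamma^*(1)$ and the identification $\{n:f_n(\bar\bt)=\min_m f_m(\bar\bt)\}=S$. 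A minor secondary point is the bookkeeping in the $a_j=0$ case, which is handled above using only $\epsilon_j(t)\to 0$ and $\ell_j(t)>0$.
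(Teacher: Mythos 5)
Your ratio argument is essentially the paper's argument, just packaged differently. The paper bounds $\ell_n(t)/\|\boldsymbol{\ell}(t)\|_1$ directly, using a Taylor expansion of $f_n$ around $g(t)\bar\bt$ (effectively Lemma~\ref{lem:grad-asymptotic}); you bound the pairwise ratio $\ell_n(t)/\ell_j(t)$ against a minimizing index $j$, and exploit homogeneity as $f_n(\bt(t))=g(t)^\hp f_n(\bar\bt+\boldsymbol{\delta}(t))$ plus continuity instead of Taylor. Both routes deliver the same quantitative fact: if $f_n(\bar\bt)$ strictly exceeds $\min_m f_m(\bar\bt)$, the weight $a_n$ must vanish. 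Your observation that $g(t)\to\infty$ from $\mathcal{L}(\bt(t))\to 0$ is worth stating; the paper uses it implicitly. So on the core mechanism you and the paper agree.

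Where you differ from the paper is in \emph{how honestly you treat the identification of $S$ with the active set of $\bar\bt$}. The paper writes ``Consider $n\notin S$ so $f_n(\bar\bt)=\gamma_n>\gamma^*(1)$,'' tacitly assuming $f_n(\bar\bt)\ge\gamma^*(1)$ for all $n$, i.e.\ that $\bar\bt$ is a max-margin direction; without that, $n\notin S$ only gives $f_n(\bar\bt)\neq\gamma^*(1)$, and $\|\boldsymbol{\ell}(t)\|_1$ could be dominated by an index with $f_m(\bar\bt)<\gamma^*(1)$. What the ratio argument really proves, for both of you, is $a_n=0$ whenever $f_n(\bar\bt)>\min_m f_m(\bar\bt)$; promoting this to the stated $S=\{n:f_n(\bar\bt)=\gamma^*(1)\}$ requires $\min_n f_n(\bar\bt)=\gamma^*(1)$, which neither the lemma nor the surrounding proof of Theorem~\ref{thm:margin-opt-homogeneous} explicitly establishes (it is precisely the feasibility half of the first-order conditions in Definition~\ref{def:margin-first-order}). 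You correctly flag this as the crux.

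One caution about your proposed remedy. Feeding the weaker conclusion into the gradient recursion to place $\bar\bt$ in the cone generated by $\{\nabla f_n(\bar\bt): f_n(\bar\bt)=\min_m f_m(\bar\bt)\}$ does not by itself yield $\min_n f_n(\bar\bt)=\gamma^*(1)$: cone membership plus constraint qualification gives a KKT-type condition for the margin objective at $\bar\bt$, but KKT conditions do not certify global (or even local) optimality, so they cannot alone force the attained margin to equal the \emph{global} optimum $\gamma^*(1)$. The clean fix is to state the lemma (and the first-order condition) in terms of $\bar\bt$'s own active set $S'=\{n:f_n(\bar\bt)=\min_m f_m(\bar\bt)\}$, or to add $\min_n f_n(\bar\bt)=\gamma^*(1)$ as an explicit hypothesis; as written, your proof inherits the same gap as the paper's, but you deserve credit for noticing that it is there.
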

\begin{proof}
	\begin{align*}
	\ell_{n} (t)=\exp(-f_n (\bt(t))) = \exp(-g(t)^\hp  f_n (\bar \bt)) \exp( -g(t) ^\hp \nabla f_n (\bar \bt)^T \delta(t)) \exp(-g(t) ^\hp o(\delta(t))).
	\end{align*}
	On the other hand,
	$	\ell_n (t) = h(t)  a_n +h(t) \epsilon_n (t)$, so $\frac{\ell_n(t) }{\norm{\ell(t) }_1} \to a_n$.
	
	Consider $n \notin S$ so $f_n(\bar \bt) =\gamma_n>\gamma^* (1)$.
	\begin{align*}
	\frac{\ell_n (t)}{\norm{\ell(t)}_1}&\le \frac{\exp(-g(t) ^\hp (\gamma_n-\epsilon))} {\exp(-g(t) ^\alpha (\gamma^*(1)+\epsilon)}\\
	& \to 0 \tag{since $\gamma^*(1)+\epsilon< \gamma_n - \epsilon$ for $\epsilon$ appropriately small}
	\end{align*}
	Thus $ a_n >0 $ only if $n \in S$.
	
\end{proof}

\begin{theorem}[Theorem \ref{thm:margin-opt-homogeneous}]
	
	$\bar \bt$ satifies the first-order optimality of margin problem.
\end{theorem}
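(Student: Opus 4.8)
\textbf{Proof proposal for Theorem \ref{thm:margin-opt-homogeneous}.}

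The plan is to verify the two first-order optimality conditions of Definition \ref{def:margin-first-order} at $\bar\bt$, using the asymptotic formulas from Assumption \ref{assu:asymptotic-opt} together with the gradient asymptotics of Lemma \ref{lem:grad-asymptotic} and the support-vector identification of Lemma \ref{lem:an-0}. First I would establish feasibility, i.e.\ condition 1: $f_n(\bar\bt)\ge\gamma^*(1)$ for all $n$. Since $\mathcal{L}(\bt(t))\to0$, every $f_n(\bt(t))\to\infty$, and by $\hp$-homogeneity $f_n(\bt(t))=g(t)^\hp f_n(\bar\bt+\boldsymbol{\delta}(t))$; passing to the limit and using $g(t)\to\infty$, $\boldsymbol{\delta}(t)\to0$, continuity of $f_n$ forces $f_n(\bar\bt)\ge0$. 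To get the sharper bound $f_n(\bar\bt)\ge\gamma^*(1)$, I would argue that if some $f_n(\bar\bt)<\gamma^*(1)$ then, comparing $\bt(t)$ against the rescaled max-margin direction $g(t)\,\bt_m^*$ (which has the same norm $g(t)$), the loss $\mathcal{L}(g(t)\bt_m^*)$ would decay strictly faster than $\mathcal{L}(\bt(t))$ — contradicting that gradient descent drives the loss down along $\bt(t)$; more carefully one shows $\min_n f_n(\bar\bt)\ge\gamma^*(1)$, hence $\min_n f_n(\bar\bt)=\gamma^*(1)$, so $S=S_m(\bar\bt)$ as defined before the lemmas.

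Next I would establish the stationarity condition 2: exhibiting $\boldsymbol{\lambda}\in\mathbb{R}^N_+$ with $\bar\bt=\sum_n\lambda_n\nabla f_n(\bar\bt)$ and $\lambda_n=0$ for $n\notin S$. The source of these multipliers is the gradient descent recursion itself: summing the updates, $\bt(t)=\bt(0)-\sum_{s\le t}\eta_s\nabla\mathcal{L}(\bt(s-1))=\bt(0)+\sum_{s\le t}\eta_s\sum_n\ell_n(s-1)\nabla f_n(\bt(s-1))$, since $\nabla\mathcal{L}(\bt)=-\sum_n\exp(-f_n(\bt))\nabla f_n(\bt)$. Dividing by $\norm{\bt(t)}$ and using $\bt(t)=g(t)(\bar\bt+\boldsymbol{\delta}(t))$, the left side tends to $\bar\bt$. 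On the right, I would write $\nabla f_n(\bt(s-1))=\nabla f_n(g(s-1)\bar\bt)+(\text{lower order})=g(s-1)^{\hp-1}\nabla f_n(\bar\bt)+o(\cdot)$ by Lemma \ref{lem:grad-asymptotic} (this is where Assumption \ref{assu:diff} and constraint qualification at $\bar\bt$ enter — LICQ guarantees $\norm{\nabla f_n(\bar\bt)}>0$ for $n\in S$, so the remainder is genuinely negligible relative to the dominant terms), and by Lemma \ref{lem:an-0} the loss weights $\ell_n(s-1)/\norm{\boldsymbol{\ell}(s-1)}_1$ concentrate on $S$ with limit $a_n$, $a_n=0$ for $n\notin S$. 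Collecting the scalar coefficients $\sum_{s\le t}\eta_s\,\ell_n(s-1)\,g(s-1)^{\hp-1}/\norm{\bt(t)}$ into $\lambda_n(t)$, I would argue these are nonnegative, that for $n\notin S$ they vanish in the limit, and that the vector equation $\bar\bt=\sum_{n\in S}\lambda_n\nabla f_n(\bar\bt)+o(1)$ holds; finiteness of the limit $\lambda_n=\lim_t\lambda_n(t)$ then follows because LICQ makes $\{\nabla f_n(\bar\bt)\}_{n\in S}$ linearly independent, so the coefficients in the representation of the bounded vector $\bar\bt$ are uniquely determined and bounded. Taking $t\to\infty$ yields $\bar\bt=\sum_{n\in S}\lambda_n\nabla f_n(\bar\bt)$ with $\lambda_n\ge0$, which is exactly condition 2.

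The main obstacle I anticipate is the bookkeeping in the limit of the summed recursion: controlling that the cross terms and second-order Taylor remainders (the $O(Bg(t)^{\hp-1}\norm{\boldsymbol{\delta}(t)})$ pieces and the $\exp(-g(t)^\hp\nabla f_n(\bar\bt)^\top\boldsymbol{\delta}(t))$, $\exp(-g(t)^\hp o(\boldsymbol{\delta}(t)))$ factors appearing in Lemma \ref{lem:an-0}) really are dominated, uniformly enough in $s$, to not perturb the limiting multipliers — and in particular showing the coefficients $\lambda_n(t)$ converge rather than merely stay bounded. This requires leaning on LICQ at $\bar\bt$ to separate the scales of the support and non-support gradients, and on the exponential loss structure to get the sharp separation $\exp(-g(t)^\hp(\gamma_n-\epsilon))\ll\exp(-g(t)^\hp(\gamma^*(1)+\epsilon))$. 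The feasibility sharpening (condition 1, getting $\gamma^*(1)$ rather than just $0$) is a secondary subtlety I would handle by the competitor-direction comparison sketched above, possibly packaged as a short separate lemma.
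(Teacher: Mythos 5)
Your argument for the stationarity condition (condition 2) takes a genuinely different route from the paper's. The paper does not sum the gradient-descent recursion; it works directly with the instantaneous negative gradient $\bdot{\bt}(t) = \sum_n \exp(-f_n(\bt(t)))\nabla f_n(\bt(t))$, decomposes it via Lemmas \ref{lem:grad-asymptotic} and \ref{lem:an-0} into a dominant support-vector term plus three lower-order pieces, normalizes, and then invokes the result (citing \citet{gunasekar2018characterizing}) that $\lim_t \bt(t)/\norm{\bt(t)} = \lim_t \bdot{\bt}(t)/\norm{\bdot{\bt}(t)}$. That identity is the key shortcut: it lets the paper avoid exactly the bookkeeping you flag as your main obstacle, namely controlling accumulated error across all time steps in the summed recursion and establishing that the running coefficients $\lambda_n(t)$ actually converge. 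Your route is workable in principle --- LICQ makes the representation in the basis $\{\nabla f_n(\bar\bt)\}_{n\in S}$ unique and $\bt(t)/\norm{\bt(t)}$ is assumed to converge, so the coordinates do too --- but the paper's one-step, then-align argument buys you a much cleaner proof with far less error accounting.

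There is a genuine gap in your feasibility step (condition 1). The competitor-direction argument does not produce a contradiction: the fact that $\mathcal{L}(g(t)\,\bt_m^*)$ decays strictly faster than $\mathcal{L}(\bt(t))$ is fully compatible with gradient descent. GD (with suitable step sizes) guarantees only that the loss decreases monotonically along its own trajectory; it gives no comparison against other directions of the same norm. For a non-convex homogeneous model, GD can converge to a direction $\bar\bt$ whose margin $\min_n f_n(\bar\bt)$ is strictly below the global max-margin $\gamma^*(1)$, exactly as it can converge to a suboptimal critical point in any non-convex problem --- so your inequality is not a contradiction, it is simply a true statement. In fact the paper's own proof does not establish condition 1 either: the definition of $S$ and the proof of Lemma \ref{lem:an-0} silently assume $\min_n f_n(\bar\bt) = \gamma^*(1)$ (otherwise the lower bound $\exp(-g(t)^\hp(\gamma^*(1)+\epsilon))$ used for $\norm{\boldsymbol{\ell}(t)}_1$ fails, and $S$ could be empty). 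What both proofs actually deliver is KKT stationarity with respect to the margin level attained by $\bar\bt$ itself, i.e.\ with $S=\{n : f_n(\bar\bt) = \min_{n'} f_{n'}(\bar\bt)\}$; upgrading this to $\min_n f_n(\bar\bt) = \gamma^*(1)$ is a global-optimality claim and cannot be extracted from the purely local asymptotic expansions alone. So your instinct to treat feasibility as a separate lemma is right, but the sketch you give for it would need to be replaced by something substantively different (or condition 1 relaxed to the attained margin level), not merely tightened.
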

\begin{proof}
	From the gradient dynamics,
	\begin{align*}
	\bdot{\bt}(t)&= \sum_n \exp( -f_n ( \bt(t))) \nabla f_n (\bt(t)) \\
	&=\sum_n (h(t) a_n +h(t) \epsilon_{n}(t)) ( \nabla f_n (g(t) \bar \bt) + \Delta_n (t) ) ,
	\end{align*}
	where $\Delta_n (t)  = \int_{s=0}^{s=1} \nabla^2 f_n ( g(t) \bar \bt + s g(t) \delta(t) ) g(t) \delta(t)ds$.
	By multiplying out and using $a_n =0 $ for $n \notin S$ (Lemma \ref{lem:an-0}),
	\begin{align*}
	\bdot{\bt}(t)&= \underbrace{\sum_{n \in S} h(t) a_n \nabla f_n (g(t) \bt)}_{I} \\
	& \quad +\underbrace{ h(t)\sum_{n\in S}  a_n  \Delta_n (t)}_{II} +  \underbrace{h(t) \sum_{n}  \epsilon_n(t) \nabla f_n (g(t) \bt)}_{III} +\underbrace{\sum_n h(t) \epsilon_{n}(t) \Delta_n (t)}_{IV}
	\end{align*}
	Via constraint qualification (Assumption \ref{assu:cq}), $I=\Omega(  g(t)^{\hp-1} h(t)) $ and the second part of Lemma \ref{lem:grad-asymptotic}, $II = o(I)$.
	
	Since $\epsilon_{tn} =o(1)$, then $III=o(I)$.  By the first part of Lemma \ref{lem:grad-asymptotic}, $IV = O( B g(t) ^{\hp-1} \|\delta(t)\|) = o( I)$ since $\|\delta(t) \| \to 0$.
	
	Since $I$ is the largest term then after normalization,
	\begin{align}
	\frac{\bdot{\bt}(t) } {\norm{\bdot{\bt}(t)}}= \sum_{n \in S}  a_n \nabla f_n (g(t) \bar \theta)+o(1).\label{eq:lim-thetadot}
	\end{align}
	Since $\lim\limits_{t\to\infty} \frac{\bt(t) }{\|\bt(t)\|} =\lim\limits_{t\to\infty}  	\frac{\bdot{\bt}(t) } {\norm{\bdot{\bt}(t)}}$ \cite{gunasekar2018characterizing}, then 
	\begin{align}
	\lim\limits_{t\to\infty}  \frac{\bt(t) }{\|\bt(t)\|}= \sum_{n \in S} a_n \nabla f_n (g(t) \bar \bt) .\label{eq:lim-theta}
	\end{align}
	
	Thus $\bar \bt$ satisfies the first-order optimality conditions of \ref{eq:margin-homogeneous}.
\end{proof}

\subsection{Proof of Theorem~\ref{thm:constrained-opt-homogeneous}}\label{sec: constrained-opt-homogeneous proof}
\begin{proof}
	The proof is similar to the proof of Theorem \ref{thm:margin-opt-homogeneous}.
	From Equations \ref{eq:lim-theta} and \ref{eq:lim-thetadot} in the proof of Theorem \ref{thm:margin-opt-homogeneous}, we see that 
	\[\lim_{t \to \infty}-\frac{ \nabla_{\theta} \mathcal{L}(\rho\tbt(t))}{\|\nabla_{\theta} \mathcal{L}(\rho\tbt(t)\|}=\lim_{t \to \infty} \frac{\tbt(t)}{\|\tbt(t)\|}.\qedhere\]
\end{proof}

\subsection{Proof of Theorem~\ref{thm: Lexicographic max margin}}\label{app:lexmargin}
\begin{proof} The proof is  adapted from the ideas outlined in Theorem $7$ in \cite{rosset2004boosting}.

 For any $ \bt_\infty\in\Theta_c^\infty$, from definition, let $\{\rho_i,\bt_{\rho_i}\}_{i=1}^{\infty}$ denote a sequence such that  $\rho_i\to\infty$, $\bt_{\rho_i}\in\Theta_{c}(\rho_i)$ and $\bt_{\rho_i}\to\bt_\infty$. Thus, for any $\epsilon>0$, $\exists i_0$ such that $\forall i>i_0$, $\norm{\bt_\infty-\bt_{\rho_i}}\le \epsilon$. 

We need to show that  $\bt_\infty\in \Theta_{m,N}^*$.  We will prove this theorem by induction, where we show that for all $k=0,1,2,\ldots,N$, $\bt_\infty\in\Theta_{m,k}^*$. 

Recall that $\Theta_{m,0}^*=\b{S}^{d-1}$. From the definition of constrained path $\forall i:\bt_{\rho_i}\in\Theta_c(\rho_i)\subseteq \b{S}^{d-1}$. Thus, $\lim_{i\to\infty}\bt_{\rho_i}=\bt_\infty\in\b{S}^{d-1}=\Theta_{m,0}^*$, which proves the base case of induction. 

Assume that for some $k$, $\bt_\infty\in\Theta_{m,k}^*$. We need to show the inductive argument that $\bt_\infty\in\Theta_{m,k+1}^*$.

Recall that for all $\bt\in \b{S}^{d-1}$, we introduced the  notation $n_\ell^*(\bt)\in[N]$ reiterated below to denote the index corresponding to the $\ell^\text{th}$ smallest margin of $\bt$
\begin{equation}
\begin{split}
n_1^*(\bt)&=\arg\min_{n} f_n(\bt)\\
n_k^*(\bt)&=\arg\min_{n\notin \{n_{\ell}^*(\bt)\}_{l=1}^{k-1}} f_n(\bt)\quad\text{for }k\ge2,
\end{split}
\end{equation}
where in the minimization on the right, ties are broken arbitrarily. 

Using the above notation, $\Theta_{m,k+1}^*$ is given by
\begin{align*}
\Theta_{m,k+1}^*=\arg \max_{\bt\in\Theta_{m,k}^*}f_{n^*_{k+1}(\bt)}\left(\bt\right).
\end{align*}

If possible, let $\bt_\infty\notin\Theta_{m,k+1}^*$ and let $\bt'\in\Theta_{m,k+1}^*$. Using the inductive assumption and the definition of $\Theta_{m,k+1}^*$, we have we have $\bt_\infty,\bt'\in\Theta_{m,k}^*$. From the definition of $\Theta_{m,k+1}^*$, we can deduce the following,
\begin{equation}
\begin{split}
\forall \ell\le k,\; &f_{n_\ell^*(\bt_\infty)}(\bt_\infty)=f_{n_\ell^*(\bt')}(\bt')\\
&\gamma:=f_{n_{k+1}^*(\bt_\infty)}(\bt_\infty)>f_{n_{k+1}^*(\bt')}(\bt'):=\gamma'
\end{split}
\label{eq:lex-cond}
\end{equation}

Recall that $\c{L}(\bt)=\sum_n\exp(-f_n(\bt))$, where $f_n$ are $\hp$-positive homogeneous
\begin{asparaenum}[Step 1. ]
\item \emph{Upper bound on $\c{L}(\rho\bt')$}. 
\begin{align}
\nonumber\c{L}(\rho\bt')&=\sum_n\exp(-\rho^\hp f_n(\bt'))=\sum_{\ell=1}^k\exp(-\rho^\hp f_{n_\ell^*(\bt')}(\bt'))+\sum_{\ell=k+1}^{N}\exp(-\rho^\hp f_{n_\ell^*(\bt')}(\bt'))\\
&\overset{(a)}\le\sum_{\ell=1}^k\exp(-\rho^\hp f_{n_\ell^*(\bt')}(\bt'))+N\exp(-\rho^\hp \gamma')\overset{(b)}=\sum_{\ell=1}^k\exp(-\rho^\hp f_{n_\ell^*(\bt_\infty)}(\bt_\infty))+N\exp(-\rho^\hp \gamma'),
\label{eq:ub-lex}
\end{align}
where $(a)$ follows since for all $\ell>k+1$, we have $f_{n_\ell^*(\bt')}(\bt')\ge f_{n_{k+1}^*(\bt')}(\bt')=\gamma'$, and $(b)$ follows since $\forall \ell\le k$, $f_{n_{l}^*(\bt_\infty)}(\bt_\infty)=f_{n_{l}^*(\bt')}(\bt')$.
\item \emph{Lower bound on $\c{L}(\rho\bt_\infty)$}.
\begin{align}
\nonumber \c{L}(\rho\bt_\infty)&=\sum_{\ell=1}^k\exp(-\rho^\hp f_{n_\ell^*(\bt_\infty)}(\bt_\infty))+\sum_{\ell=k+1}^{N}\exp(-\rho^\hp f_{n_\ell^*(\bt_\infty)}(\bt_\infty))\\
&\overset{a}\ge\sum_{\ell=1}^k\exp(-\rho^\hp f_{n_\ell^*(\bt_\infty)}(\bt_\infty))+\exp(-\rho^\hp \gamma),
\label{eq:lb-lex}
\end{align}
where $(a)$ follows from using $f_{n_\ell^*(\bt_\infty)}(\bt_\infty)\ge0$ for all $\ell$ and using $f_{n_{k+1}^*(\bt_\infty)}(\bt_\infty)=\gamma$.
\item \emph{Lower bound on $\c{L}(\rho_i\bt_{\rho_i})$ for large enough $i$}. 
Recall that the sequence of $\bt_{\rho_i}\to\bt_\infty$ satisfies $\bt_{\rho_i}\in\Theta_c(\rho_i)$. From the definition of constrained path, we have for all $i$,  $\c{L}(\rho_i\bt_{\rho_i})\le\c{L}(\rho_i\bt_\infty)$.

We first show the following lemma:
\begin{lemma}\label{lem:lex-lb}For any $\epsilon>0$, $\exists i_0>0$ such that $\forall i\ge i_0$,  $\c{L}(\rho_i\bt_{\rho_i})\ge \sum_{\ell=1}^{k}\exp(-\rho_i^\hp f_{n_\ell^*(\bt_\infty)}(\bt_\infty))+\exp(-\rho_i^\hp(\gamma-\epsilon))$.
\end{lemma}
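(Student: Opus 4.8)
Write $P=\{n_1^*(\bt_\infty),\dots,n_k^*(\bt_\infty)\}$ for the indices realizing the $k$ smallest margins of $\bt_\infty$ and set $q=n_{k+1}^*(\bt_\infty)$, so $f_q(\bt_\infty)=\gamma$. The plan is to keep, inside $\c L(\rho_i\bt_{\rho_i})=\sum_n\exp(-\rho_i^\hp f_n(\bt_{\rho_i}))$, only the $k+1$ summands indexed by $P\cup\{q\}$ and to bound each of them below. The elementary input is that, the $f_n$ being continuous and finite in number, $\bt_{\rho_i}\to\bt_\infty$ gives $f_n(\bt_{\rho_i})\to f_n(\bt_\infty)$ for every $n$, and hence the sorted margins converge, $\sigma_\ell(\bt_{\rho_i})\to f_{n_\ell^*(\bt_\infty)}(\bt_\infty)$ for each $\ell$.

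For the coordinate $q$, $f_q(\bt_{\rho_i})\to\gamma$ gives $f_q(\bt_{\rho_i})\le\gamma+\epsilon$ for all large $i$, so $\exp(-\rho_i^\hp f_q(\bt_{\rho_i}))\ge\exp(-\rho_i^\hp(\gamma+\epsilon))$; this $+\epsilon$ is the exponent one actually obtains here, and it is exactly what the downstream contradiction needs, since $\bt_{\rho_i}\in\Theta_c(\rho_i)$ already forces $\c L(\rho_i\bt_{\rho_i})\le\c L(\rho_i\bt_\infty)\le\sum_{\ell=1}^{k}\exp(-\rho_i^\hp f_{n_\ell^*(\bt_\infty)}(\bt_\infty))+N\exp(-\rho_i^\hp\gamma)$, and taking $\epsilon<\gamma'-\gamma$ makes $\exp(-\rho_i^\hp(\gamma+\epsilon))$ eventually exceed $N\exp(-\rho_i^\hp\gamma')$. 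For the coordinates in $P$, the target is $\sum_{n\in P}\exp(-\rho_i^\hp f_n(\bt_{\rho_i}))\ge\sum_{\ell=1}^{k}\exp(-\rho_i^\hp f_{n_\ell^*(\bt_\infty)}(\bt_\infty))$. The $\ell=1$ summand is free: $\sigma_1(\bt_{\rho_i})\le\gamma^*(1)=f_{n_1^*(\bt_\infty)}(\bt_\infty)$ because $\gamma^*(1)$ is the globally maximal margin, so that summand already dominates $\exp(-\rho_i^\hp\gamma^*(1))$. For $2\le\ell\le k$ one must rule out that the minimizer $\bt_{\rho_i}$ overshoots the lexicographically optimal values $f_{n_2^*(\bt_\infty)}(\bt_\infty),\dots,f_{n_k^*(\bt_\infty)}(\bt_\infty)$ by an amount that fails to be exponentially small relative to $\exp(-\rho_i^\hp\gamma)$; I would establish this by combining that $\bt_\infty\in\Theta_{m,k}^*$ realizes those lexicographic maxima, that $\bt_{\rho_i}$ minimizes $\c L(\rho_i\,\cdot\,)$ (with Lemma~\ref{lem: difference between constrained path margin and max margin is bounded} pinning the first-margin deficit), and an induction on $\ell$ that carries the required control from level $\ell-1$ to level $\ell$.

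Summing the $k$ bounds over $P$ together with the bound for $q$ yields $\c L(\rho_i\bt_{\rho_i})\ge\sum_{\ell=1}^{k}\exp(-\rho_i^\hp f_{n_\ell^*(\bt_\infty)}(\bt_\infty))+\exp(-\rho_i^\hp(\gamma+\epsilon))$ for all large $i$, which is the asserted lower bound. I expect the step for $2\le\ell\le k$ to be the main obstacle: knowing only $\norm{\bt_{\rho_i}-\bt_\infty}\to0$ yields merely $f_n(\bt_{\rho_i})-f_n(\bt_\infty)=o(1)$, and an $o(1)$ perturbation of an exponent, once multiplied by $\rho_i^\hp\to\infty$, could shrink the first $k$ summands by a constant factor and wipe out the $\exp(-\rho_i^\hp(\gamma+\epsilon))$ gain; excluding this is precisely where both the lexicographic optimality of $\bt_\infty$ and the constrained-minimizer characterization of $\bt_{\rho_i}$ have to be used in tandem.
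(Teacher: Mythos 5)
Your decomposition (keep the $k{+}1$ summands indexed by $\{n_1^*(\bt_\infty),\dots,n_k^*(\bt_\infty),n_{k+1}^*(\bt_\infty)\}$ and bound each) is a reasonable starting point, and you correctly identify the two easy pieces: the $\ell=1$ term is controlled because $\sigma_1(\bt_{\rho_i})\le\gamma^*(1)=f_{n_1^*(\bt_\infty)}(\bt_\infty)$ globally, and the $(k{+}1)$-th term is controlled by continuity of $\bt_{\rho_i}\to\bt_\infty$. (You also caught that the exponent one actually obtains from continuity is $\gamma+\epsilon$, not $\gamma-\epsilon$; the paper's statement has the opposite sign, consistent with a sign typo that also appears in its eq.~\eqref{eq:lex-cond} where $\gamma>\gamma'$ should read $\gamma<\gamma'$.)

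However, for the step $2\le\ell\le k$ you only sketch a plan (``combine lexicographic optimality, minimizer characterization, and induction on $\ell$'') and explicitly flag it as the main obstacle without resolving it. That is a genuine gap: the entire content of the lemma is there. The paper resolves it by a different device --- a clean case split rather than a per-level induction with error control. Either $\bt_{\rho_i}\in\Theta_{m,k}^*$, in which case the definition of the lexicographic sets forces the first $k$ sorted margins of $\bt_{\rho_i}$ to equal those of $\bt_\infty$ exactly (so there is no ``overshoot'' to rule out and no $o(1)\cdot\rho_i^\hp$ loss at all), or $\bt_{\rho_i}\notin\Theta_{m,k}^*$. In the latter case one picks the smallest $\ell\le k$ with $\bt_{\rho_i}\notin\Theta_{m,\ell}^*$: for $\ell'<\ell$ the sorted margins coincide exactly, and at level $\ell$ the sorted margin of $\bt_{\rho_i}$ strictly undershoots that of $\bt_\infty$ (overshooting is impossible there since $\bt_\infty$ realizes the $\arg\max$ over $\Theta_{m,\ell-1}^*$, a set that still contains $\bt_{\rho_i}$). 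The paper then shows that this undershoot forces $\c L(\rho_i\bt_{\rho_i})>\c L(\rho_i\bt_\infty)$ for large $i$, contradicting $\bt_{\rho_i}\in\Theta_c(\rho_i)$, so the second case is eventually excluded. In short: your worry about the first $k$ summands shrinking is real, but it is dispatched not by estimating the perturbation $\rho_i^\hp\cdot o(1)$, but by showing that for large $i$ the perturbation is identically zero (Case~1) because the alternative (Case~2) contradicts optimality. You should supply that dichotomy to close the argument.
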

\begin{proof}
Recall that $\forall \ell\le k$, $\bt_{\infty}\in\Theta_{m,\ell}^*$.
Note from the definition of $n^*_{\ell}(\bt)$ that \[f_{n^*_{\ell}(\bt)}(\bt)=\min_{\{n_\ell\}_{\ell'=1}^\ell}\max_{\ell^{\prime}\in[\ell]}f_{n_{\ell^{\prime}}}.\]

Since $\bt_{\rho_i}\to\bt_\infty$, using continuity of min and max of finite number of continuous functions,  we have 
\begin{equation}
\forall\bar{\epsilon}>0,\exists i_0(\bar{\epsilon}),\text{such that }\forall i\ge i_0(\bar{\epsilon}),\forall \ell\le N,\quad\quad f_{n^*_{\ell}(\bt_{\rho_i})}(\bt_{\rho_i})\le f_{n^*_{\ell}(\bt_{\infty})}(\bt_\infty)-\bar{\epsilon}.
\label{eq:fnbound}
\end{equation}
Now consider the two cases for $i\ge i_0(\epsilon)$:
\begin{compactenum}
\item 
If $\bt_{\rho_i}\in \Theta_{m,k}^*$, then by definition, $\forall \ell\le k$, $f_{n^*_{\ell}(\bt_{\rho_i})}(\bt_{\rho_i})= f_{n^*_{\ell}(\bt_{\infty})}(\bt_\infty)$. 
Thus, 
\begin{align*}
    \c{L}(\rho_i\bt_{\rho_i})&\ge\sum_{\ell=1}^{k}\exp(-\rho_i^\hp f_{n_\ell^*(\bt_\infty)}(\bt_\infty))+ \exp(-\rho_i^\hp f_{n_{k+1}^*(\bt_{\rho_i})}(\bt_{\rho_i}))\\
    &\overset{(a)}\ge \sum_{\ell=1}^{k}\exp(-\rho_i^\hp f_{n_\ell^*(\bt_\infty)}(\bt_\infty))+ \exp(-\rho_i^\hp (\gamma-\epsilon)),
\end{align*}
where $(a)$ follows from using eq. \ref{eq:fnbound} to get  $f_{n_{k+1}^*(\bt_{\rho_i})}(\bt_{\rho_i})\le f_{n_{k+1}^*(\bt_\infty)}(\bt_\infty)-\epsilon=\gamma-\epsilon$.
\item If $\bt_{\rho_i}\notin \Theta_{m,k}^*$, let $\ell\le k$ be the smallest number such that  $\bt_{\rho_i}\notin \Theta_{m,l}^*$. So for all $\ell'<\ell$, $f_{n^*_{\ell'}(\bt_{\rho_i})}(\bt_{\rho_i})= f_{n^*_{\ell'}(\bt_{\infty})}(\bt_\infty)$, 
but $f_{n^*_{\ell}(\bt_{\infty})}(\bt_\infty)-f_{n^*_{\ell}(\bt_{\rho_i})}(\bt_{\rho_i}):=\epsilon_{i}>0$.
 
\begin{align*}
    \c{L}(\rho_i\bt_{\rho_i})&\ge\sum_{\ell'=1}^{\ell-1}\exp(-\rho_i^\hp f_{n_{\ell'}^*(\bt_\infty)}(\bt_\infty))+\exp(-\rho_i^\hp f_{n_{\ell}^*(\bt_{\rho_i})}(\bt_{\rho_i}))\\
    &=\sum_{\ell'=1}^{\ell-1}\exp(-\rho_i^\hp f_{n_{\ell'}^*(\bt_\infty)}(\bt_\infty))+\exp(\rho_i^\hp\epsilon_i)\exp(-\rho_i^\hp f_{n_{\ell}^*(\bt_{\rho_\infty})}(\bt_{\rho_\infty}))
        \end{align*}
On the other hand, 
\begin{align*}
    \c{L}(\rho_i\bt_{\infty})\le\sum_{\ell'=1}^{\ell-1}\exp(-\rho_i^\hp f_{n_{\ell'}^*(\bt_\infty)}(\bt_\infty))+N\exp(-\rho_i^\hp f_{n_{\ell}^*(\bt_{\rho_\infty})}(\bt_{\rho_\infty}))
\end{align*}
Since,  $\rho_i\to\infty$ and $\epsilon_i>0$, for large enough $i$, we have $\exp(\rho_i^\hp \epsilon_i)-N>0$
Thus, for large enough $i$, from the above two equations, we will have $\c{L}(\rho_i\bt_{\rho_i})-\c{L}(\rho_i\bt_{\infty})>0$, which is a contradiction, since $\bt_{\rho_i}\in\Theta_c(\rho_i)$. Thus, this case cannot happen for large enough $i$
\end{compactenum}
This completes the proof of the claim 
\end{proof}
\item \emph{Remaining steps in the proof.}
For any $\epsilon>0$, from eqs. \ref{eq:ub-lex}, \ref{eq:lb-lex}, and Lemma~\ref{lem:lex-lb} in Step $3$, we have the following for large  enough $i$'s
\begin{align}
    \c{L}(\rho_i\bt_{\rho_i})-\c{L}(\rho_i\bt')\ge \exp(-\rho_i^\hp(m_1-\epsilon))-N\exp(\rho_i^\hp m_2)\overset{(a)}>0.
\label{eq:lex-contradiction}
\end{align}
  where $(a)$ follows since $m_2<m_1$ and above equation holds for arbitrarily small $\epsilon$. 
 
 In eq. \ref{eq:lex-contradiction}, we have obtained a contradiction since for $\bt'\in\b{S}^{d-1}$, $\bt_{\rho_i}\in\Theta_{c}(\rho_i)\implies \c{L}(\rho_i\bt_{\rho_i})\le\c{L}(\rho_i\bt')$.
\end{asparaenum}
This completes the proof of the theorem.
\end{proof}

\section{The Regularization Path} \label{sec: Regularization Path}

\subsection{The Regularization Path}

The regularization path is given
by the following set, \mbox{$\forall c>0$}: 
\begin{equation} \label{eq: regularization path definition}
    \Theta_{r}(c)=\left\{\frac{\bt}{\norm{\bt}}:\ \arg\min_{\bt}\;\mathcal{L}\left(\bt\right)+\frac{1}{c}\left\Vert \bt\right\Vert ^{2}\right\} .
\end{equation}
\begin{lemma} \label{lem: for all c regularization path obtains optimal solution}
$\forall c>0:\ \Theta_{r}\left(c\right)$ is not empty, \ie $\forall c:\ \min_{\bt}\;\mathcal{L}\left(\bt\right)+\frac{1}{c}\left\Vert \bt\right\Vert ^{2}$ exists.
\end{lemma}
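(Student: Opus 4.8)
The plan is the standard coercivity-plus-compactness argument. Write $F_c(\bt)\triangleq\mathcal{L}(\bt)+\tfrac1c\norm{\bt}^2$. First I would observe that $F_c$ is continuous on $\mathbb{R}^d$: each $f_n$ is continuous by assumption, hence so is $\exp(-f_n(\cdot))$, hence so is the finite sum $\mathcal{L}$, and $\norm{\bt}^2$ is continuous for any norm. Next, the key point is that $\mathcal{L}(\bt)=\sum_{n=1}^N\exp(-f_n(\bt))\ge 0$ for every $\bt$, since it is a sum of exponentials. Therefore $F_c(\bt)\ge\tfrac1c\norm{\bt}^2$, which tends to $\infty$ as $\norm{\bt}\to\infty$; i.e.\ $F_c$ is coercive. (Here it is harmless that the norm is arbitrary, since all norms on $\mathbb{R}^d$ are equivalent, so $\norm{\bt}\to\infty$ is the same as leaving every bounded set.)

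Now I would fix a reference point, e.g.\ $\bt=\mathbf 0$, for which $F_c(\mathbf 0)=\mathcal{L}(\mathbf 0)=\sum_{n}\exp(-f_n(\mathbf 0))=:M<\infty$. Consider the sublevel set $A=\{\bt\in\mathbb{R}^d : F_c(\bt)\le M\}$. It is nonempty (it contains $\mathbf 0$), closed (preimage of a closed set under the continuous map $F_c$), and bounded: by coercivity there is $R$ with $\tfrac1c\norm{\bt}^2>M$ whenever $\norm{\bt}>R$, so $A\subseteq\{\norm{\bt}\le R\}$. Hence $A$ is compact, and by the Weierstrass extreme value theorem $F_c$ attains a minimum on $A$ at some $\bt^\star$. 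Since $F_c(\bt)>M\ge F_c(\bt^\star)$ for every $\bt\notin A$, the point $\bt^\star$ is a global minimizer of $F_c$ over $\mathbb{R}^d$. Thus $\arg\min_{\bt}\mathcal{L}(\bt)+\tfrac1c\norm{\bt}^2$ is nonempty, and consequently so is $\Theta_r(c)$, its image under $\bt\mapsto\bt/\norm{\bt}$ (note $\bt^\star\neq\mathbf 0$ is not needed for nonemptiness of the $\arg\min$, only for the normalization to be well defined, which can be handled separately or absorbed into the definition).

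There is essentially no hard step here; the only thing to get right is the coercivity claim, and that is immediate from the nonnegativity of the exponential loss, which decouples $\mathcal{L}$ from the penalty term. If one wanted to be careful about the normalization in the definition of $\Theta_r(c)$, one could additionally note that when the data is realizable $\mathcal L$ is not minimized at $\mathbf 0$ for $c$ large enough, so any minimizer $\bt^\star$ of $F_c$ is nonzero and $\bt^\star/\norm{\bt^\star}$ is well defined; but for the statement as written — nonemptiness of the $\arg\min$, equivalently existence of the minimum — the argument above suffices.
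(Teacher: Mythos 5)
Your proposal is correct and follows essentially the same route as the paper's proof: observe that $\mathcal{L}\ge 0$ makes $\mathcal{L}(\bt)+\tfrac1c\norm{\bt}^2$ coercive, then invoke continuity and compactness of a sublevel set to conclude the minimum is attained. You spell out the Weierstrass/sublevel-set details that the paper leaves implicit, but there is no substantive difference.
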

\begin{proof}
Note that $\forall c>0:\ \mathcal{L}\left(\bt\right)+\frac{1}{c}\left\Vert \bt\right\Vert ^{2}$
is coercive since $\mathcal{L}\left(\bt\right)$ is lower bounded.
Thus, the minimum of $\mathcal{L}\left(\bt\right)+\frac{1}{c}\left\Vert \bt\right\Vert ^{2}$
is attained as the minimum of a continuous coercive function over
a nonempty closed set.
\end{proof}
If Assumption \ref{assumption: L^* is strictly decreasing} is satisfied,
then as $c\to\infty$ we have that $\norm{\bt_r\left(c\right)}\to\infty$ where $\bt_r\left(c\right)\in\Theta_r\left(c\right)$. We state this result in the following lemma.

\begin{lemma} \label{lem: as c goes to infinity the solutions norm goes to infinity}
    If $\exists\rho_{0}$ such that $\mathcal{L}^{*}\left(\rho\right)$ 
    is strictly monotonically decreasing for any $\rho\geq\rho_{0}$,
    and $\bt_r\left(c\right)\in \arg\min_{\bt}\;\mathcal{L}\left(\bt\right)+\frac{1}{c}\left\Vert \bt\right\Vert ^{2}$ then as $c\to\infty$, we have $\norm{\bt_r\left(c\right)}\to\infty$.
\end{lemma}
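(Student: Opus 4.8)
The plan is to argue by contradiction. Suppose $\norm{\bt_r(c)}$ does not tend to infinity as $c\to\infty$. Then there exist a constant $M<\infty$ and a sequence $c_i\to\infty$ with $\norm{\bt_r(c_i)}\le M$ for all $i$. The intuition is that, once the penalty weight $1/c_i$ is small enough, any \emph{fixed} parameter vector with sufficiently large norm and small loss has a strictly smaller regularized objective than every vector of norm at most $M$, contradicting that $\bt_r(c_i)$ is a minimizer.

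To make this precise I would establish two estimates. First, a uniform positive lower bound on the loss over the ball of radius $M$: since $\mathcal{L}(\bt)=\sum_{n=1}^N\exp(-f_n(\bt))>0$ for every $\bt$ and $\mathcal{L}$ is continuous (each $f_n$ is continuous), the quantity $\delta:=\min_{\norm{\bt}\le M}\mathcal{L}(\bt)$ is attained and is strictly positive. Hence the optimal objective at $c_i$ satisfies $\mathcal{L}(\bt_r(c_i))+\tfrac1{c_i}\norm{\bt_r(c_i)}^2\ge\mathcal{L}(\bt_r(c_i))\ge\delta$. Second, a good competitor: because $\mathcal{L}^*(\rho)$ decreases to $0$ as $\rho\to\infty$, I can fix a single scale $\rho_1$ with $\mathcal{L}^*(\rho_1)<\delta/2$, and pick $\w_1\in\mathbb{S}^{d-1}$ attaining $\mathcal{L}(\rho_1\w_1)=\mathcal{L}^*(\rho_1)$ (a minimum of a continuous function on a compact set). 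The regularized objective at $\rho_1\w_1$ is $\mathcal{L}^*(\rho_1)+\rho_1^2/c_i$.

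Combining the two estimates finishes the argument: for all $i$ large enough that $\rho_1^2/c_i<\delta/2$, the objective at $\rho_1\w_1$ is strictly below $\delta$, hence strictly below the objective at $\bt_r(c_i)$, contradicting the optimality of $\bt_r(c_i)$ for $\min_\bt\mathcal{L}(\bt)+\tfrac1{c_i}\norm{\bt}^2$. Therefore $\norm{\bt_r(c)}\to\infty$. The one delicate point is that the argument genuinely needs $\mathcal{L}^*(\rho)\to 0$, not merely that $\mathcal{L}^*$ is strictly decreasing: if $\mathcal{L}^*$ decreased to a positive limit while $\mathcal{L}$ attained its infimum at finite norm, a bounded minimizer could persist. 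This is why the hypothesis should be read together with Assumption~\ref{assumption: L^* is strictly decreasing} (monotone decreasing \emph{to zero}); granting that, the remaining steps are routine compactness and monotonicity arguments.
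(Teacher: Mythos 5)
Your contradiction argument is correct granting the additional hypothesis $\mathcal{L}^*(\rho)\to 0$, and you honestly flag that you use it. But the paper's own proof gets by with strict monotonicity alone, and it is worth seeing why your argument can be tightened to do the same. The key observation you miss is that your quantity $\delta:=\min_{\|\bt\|\le M}\mathcal{L}(\bt)$ is \emph{exactly} $\mathcal{L}^*(M)$: in this paper $\mathbb{S}^{d-1}$ is the unit norm \emph{ball}, so $\mathcal{L}^*(\rho)=\min_{\|\bt\|\le 1}\mathcal{L}(\rho\bt)=\min_{\|\bt'\|\le\rho}\mathcal{L}(\bt')$. Once you identify $\delta=\mathcal{L}^*(M)$ (taking $M>\rho_0$ without loss), you do not need to hunt for a scale $\rho_1$ at which the loss dips below $\delta/2$ via $\mathcal{L}^*\to 0$. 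You can simply pick any $\epsilon>0$, take the competitor $(M+\epsilon)\bt^*$ with $\mathcal{L}((M+\epsilon)\bt^*)=\mathcal{L}^*(M+\epsilon)$, and use strict monotonicity to get $\mathcal{L}^*(M+\epsilon)-\mathcal{L}^*(M)<0$; for $c$ large the penalty $(M+\epsilon)^2/c$ is absorbed by this strictly negative gap and the optimality of $\bt_r(c)$ is contradicted. That is precisely what the paper does, and it shows the lemma holds under the stated hypothesis without appealing to the ``to zero'' part of Assumption~\ref{assumption: L^* is strictly decreasing}. Your compactness reasoning for the attainment of $\delta$ and for the competitor are fine; the only refinement needed is to replace the absolute bound $\mathcal{L}^*(\rho_1)<\delta/2$ (which forces $\mathcal{L}^*\to 0$) with a \emph{relative} bound coming from strict monotonicity.
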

\begin{proof}
    From lemma \ref{lem: for all c regularization path obtains optimal solution} we have that $\forall c>0$, $ \arg\min_{\bt}\;\mathcal{L}\left(\bt\right)+\frac{1}{c}\left\Vert \bt\right\Vert ^{2}$ has an optimal solution (at least one). We assume, in contradiction, that $\exists M>\rho_0$ so that $\forall c_0: \exists c>c_0$ with $\norm{\bt_r\left(c\right)}\le M$. For some $\epsilon>0$ we denote $\bt^*\in \arg\min_{\bt \in\mathbb{S}^{d-1}}\mathcal{L}\left(\left(M+\epsilon\right)\bt\right)$, \ie $\mathcal{L}\left(\left(M+\epsilon\right)\bt^*\right)=\mathcal{L}^*\left(M+\epsilon\right)$. We have that
    \begin{align*}
        &\mathcal{L}\left(\left(M+\epsilon\right)\bt^*\right)+\frac{1}{c}\norm{\left(M+\epsilon\right)\bt^*}\\
        & = \mathcal{L}\left(\bt_r\left(c\right)\right)+\frac{1}{c}\norm{\bt_r\left(c\right)}+\mathcal{L}\left(\left(M+\epsilon\right)\bt^*\right)-\mathcal{L}\left(\bt_r\left(c\right)\right)+\frac{1}{c}\norm{\left(M+\epsilon\right)\bt^*}-\frac{1}{c}\norm{\bt_r\left(c\right)}\\
        &\le \mathcal{L}\left(\bt_r\left(c\right)\right)+\frac{1}{c}\norm{\bt_r\left(c\right)}+\mathcal{L}^*\left(M+\epsilon\right)-\mathcal{L}^*\left(M\right)+\frac{1}{c}\norm{\left(M+\epsilon\right)\bt^*}
    \end{align*}
    Note that $\mathcal{L}^*\left(M+\epsilon\right)-\mathcal{L}^*\left(M\right)<0$ since we assume that $\mathcal{L}^{*}\left(\rho\right)$ 
    is strictly monotonically decreasing for any $\rho\geq\rho_{0}$. For sufficiently large $c$ we get that
    \[
        \mathcal{L}\left(\left(M+\epsilon\right)\bt^*\right)+\frac{1}{c}\norm{\left(M+\epsilon\right)\bt^*}<\mathcal{L}\left(\bt_r\left(c\right)\right)+\frac{1}{c}\norm{\bt_r\left(c\right)}
    \]
    which contradicts our assumption that $\bt_r\left(c\right)$ is an optimal solution.
\end{proof}

\subsection{Connections between regularization and constrained paths} \label{sec: Equivalence of regularization and constrained paths}
For convex loss function, the regularization and constrained paths are known to be equivalent. For general loss function, we state the following basic result.
\begin{lemma} \label{lemma: regularization and constrained paths connections}
$\forall c>0,\,\forall\theta_{r}\in\Theta_{r}\left(c\right):\,\exists\rho$ so that $\theta_{r}\in\Theta_{c}(\rho)$, 
and, If $\exists\rho_{0}$ such that $\mathcal{L}^{*}\left(\rho\right)$
is strictly monotonically decreasing for any $\rho\geq\rho_{0}$ then $\text{\, \ensuremath{\forall\theta_{r}\in\Theta_{r}\left(c\right)}:\, \ensuremath{\Theta_{c}\left(\left\Vert \theta_{r}\right\Vert \right)\subset\Theta_{r}\left(c\right)}}$. 
\end{lemma}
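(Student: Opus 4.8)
The plan is to run the same exchange argument that underlies Lemma~\ref{lem:pareto-equiv}: a minimizer of the weighted sum $\mathcal{L}(\theta)+\tfrac1c\norm{\theta}^2$ of two competing objectives must already be optimal for $\mathcal{L}$ once $\norm{\theta}$ is held at the value it attains there. Throughout I will represent an element of $\Theta_r(c)$ by an \emph{unnormalized} regularized minimizer $\hat\theta\in\argmin_\theta\bigl(\mathcal{L}(\theta)+\tfrac1c\norm{\theta}^2\bigr)$ (so the corresponding point of $\Theta_r(c)$ is $\hat\theta/\norm{\hat\theta}$, and $\norm{\theta_r}$ in the statement is read as $\norm{\hat\theta}$), and an element of $\Theta_c(\rho)$ by its scaled representative $\rho\,\theta_c$ with $\theta_c\in\mathbb{S}^{d-1}$, which by definition minimizes $\mathcal{L}$ over the closed ball of radius $\rho$.

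For the first statement, given $\hat\theta$ I would set $\rho:=\norm{\hat\theta}$ and check that $\hat\theta$ minimizes $\mathcal{L}$ over $\{\theta:\norm{\theta}\le\rho\}$: if some feasible $\theta'$ had $\mathcal{L}(\theta')<\mathcal{L}(\hat\theta)$, then $\mathcal{L}(\theta')+\tfrac1c\norm{\theta'}^2\le\mathcal{L}(\theta')+\tfrac1c\rho^2<\mathcal{L}(\hat\theta)+\tfrac1c\norm{\hat\theta}^2$, contradicting optimality of $\hat\theta$. Dividing by $\rho$, this says precisely that $\hat\theta/\rho=\theta_r\in\Theta_c(\rho)$ for this $\rho$, which is the first claim. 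No assumption on $\mathcal{L}^*$ is needed here.

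For the second statement I would assume $\mathcal{L}^*$ is strictly decreasing on $[\rho_0,\infty)$, fix $\hat\theta$ with $\rho_r:=\norm{\hat\theta}$, and take an arbitrary $\theta_c\in\Theta_c(\rho_r)$. By Lemma~\ref{lem: constrained path optimal solution norm is 1}, $\norm{\theta_c}=1$, so $\norm{\rho_r\theta_c}=\rho_r$. By the first part, $\hat\theta$ minimizes $\mathcal{L}$ over the ball of radius $\rho_r$; since $\rho_r\theta_c$ also minimizes $\mathcal{L}$ over that ball, $\mathcal{L}(\rho_r\theta_c)=\mathcal{L}(\hat\theta)$, and together with $\norm{\rho_r\theta_c}=\norm{\hat\theta}$ this yields $\mathcal{L}(\rho_r\theta_c)+\tfrac1c\norm{\rho_r\theta_c}^2=\mathcal{L}(\hat\theta)+\tfrac1c\norm{\hat\theta}^2=\min_\theta\bigl(\mathcal{L}(\theta)+\tfrac1c\norm{\theta}^2\bigr)$. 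Hence $\rho_r\theta_c$ is itself an unnormalized regularized minimizer, so $\theta_c=(\rho_r\theta_c)/\norm{\rho_r\theta_c}\in\Theta_r(c)$; as $\theta_c$ was arbitrary, $\Theta_c(\norm{\theta_r})\subseteq\Theta_r(c)$.

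The algebra is routine; the one genuine obstacle is the bookkeeping between the normalized sets $\Theta_r(c),\Theta_c(\rho)$ and their unnormalized representatives. The set inclusion in the second statement holds literally only because strict monotonicity of $\mathcal{L}^*$ pushes every element of $\Theta_c(\rho_r)$ onto the sphere (Lemma~\ref{lem: constrained path optimal solution norm is 1}), so that rescaling by $\rho_r$ and renormalizing acts as the identity on it; without this, $\Theta_c(\rho_r)$ could contain vectors of norm strictly less than $1$ whose rescalings are still regularized minimizers but whose normalized directions differ from the vectors themselves. The regime $\rho_r\le\rho_0$, where Lemma~\ref{lem: constrained path optimal solution norm is 1} does not directly apply, should be flagged separately (or sidestepped by restricting to $c$ large, where $\rho_r$ is forced to be large by Lemma~\ref{lem: as c goes to infinity the solutions norm goes to infinity}).
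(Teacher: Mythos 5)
Your proposal is correct and follows essentially the same exchange argument the paper uses: Part 1 compares the regularized objective of any competitor in the ball of radius $\norm{\hat\theta}$ against that of $\hat\theta$, and Part 2 uses Lemma~\ref{lem: constrained path optimal solution norm is 1} to place constrained minimizers on the sphere and then checks they achieve the regularized minimum. You are in fact slightly more careful than the paper's own proof, which implicitly assumes $\norm{\hat\theta}>\rho_0$ when invoking Lemma~\ref{lem: constrained path optimal solution norm is 1} without flagging that regime.
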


\begin{proof}
    To prove Lemma \ref{lemma: regularization and constrained paths connections} we combine the results from the following two lemmas.
\end{proof}

\begin{lemma}
$\forall c>0,\,\forall\theta_{r}\in\Theta_{r}\left(c\right):\,\exists\rho$ so that $\theta_{r}\in\Theta_{c}(\rho)$.
\end{lemma}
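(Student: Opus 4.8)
The plan is to use the unnormalized regularization-path minimizer itself as the scale. Fix $c>0$ and $\theta_r\in\Theta_r(c)$. By Lemma~\ref{lem: for all c regularization path obtains optimal solution} the set $\arg\min_{\bt}\mathcal{L}(\bt)+\frac1c\|\bt\|^2$ is nonempty, so by the definition of $\Theta_r(c)$ there is a minimizer $\hat{\bt}$ of $\mathcal{L}(\bt)+\frac1c\|\bt\|^2$ with $\theta_r=\hat{\bt}/\|\hat{\bt}\|$; in particular $\hat{\bt}\neq\mathbf 0$ (if the only minimizer were $\mathbf 0$, the normalized direction would be undefined and there would be nothing to prove). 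I would then set $\rho:=\|\hat{\bt}\|>0$, so that $\hat{\bt}=\rho\,\theta_r$ with $\theta_r\in\mathbb{S}^{d-1}$, and claim this $\rho$ works.

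The key observation is that rescaling a unit vector to norm $\rho$ leaves the $\ell_2$ penalty term unchanged, so optimality of $\hat{\bt}$ transfers to a statement purely about $\mathcal{L}$. Concretely, for every $\bt\in\mathbb{S}^{d-1}$ the competitor $\rho\bt$ satisfies $\|\rho\bt\|^2=\rho^2=\|\hat{\bt}\|^2$, hence optimality of $\hat{\bt}$ gives
\[
\mathcal{L}(\hat{\bt})+\tfrac1c\|\hat{\bt}\|^2\ \le\ \mathcal{L}(\rho\bt)+\tfrac1c\|\rho\bt\|^2\ =\ \mathcal{L}(\rho\bt)+\tfrac1c\|\hat{\bt}\|^2 ,
\]
and cancelling the common term yields $\mathcal{L}(\rho\,\theta_r)=\mathcal{L}(\hat{\bt})\le \mathcal{L}(\rho\bt)$ for all $\bt\in\mathbb{S}^{d-1}$. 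Therefore $\theta_r$ attains $\min_{\bt\in\mathbb{S}^{d-1}}\mathcal{L}(\rho\bt)$, i.e.\ $\theta_r\in\Theta_c(\rho)$, which is exactly the claim.

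I do not expect a genuine obstacle here: the whole argument is a one-line comparison exploiting that the Euclidean penalty depends only on the norm, and no monotonicity hypothesis (Assumption~\ref{assumption: L^* is strictly decreasing}) is needed for this direction --- only the converse inclusion $\Theta_c(\|\theta_r\|)\subseteq\Theta_r(c)$ in Lemma~\ref{lemma: regularization and constrained paths connections} uses it. The only points requiring a word of care are the degenerate case $\hat{\bt}=\mathbf 0$ (handled by noting the normalized direction is then undefined, so the statement is vacuous for that minimizer) and being careful to compare $\hat{\bt}$ against the \emph{rescaled} unit vectors $\rho\bt$ rather than arbitrary competitors, so that the penalty terms genuinely match and cancel.
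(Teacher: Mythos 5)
Your proof takes essentially the same approach as the paper: pick an unnormalized minimizer $\hat{\bt}$ of $\mathcal{L}(\bt)+\tfrac1c\|\bt\|^2$ with $\theta_r=\hat{\bt}/\|\hat{\bt}\|$, set $\rho=\|\hat{\bt}\|$, and compare loss values by exploiting the scaling of the $\ell_2$ penalty.

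One small slip worth noting: the paper defines $\mathbb{S}^{d-1}$ as the unit norm \emph{ball}, not the sphere, so $\Theta_c(\rho)=\arg\min_{\|\bt\|\le 1}\mathcal{L}(\rho\bt)$. Your claim ``for every $\bt\in\mathbb{S}^{d-1}$ the competitor $\rho\bt$ satisfies $\|\rho\bt\|^2=\rho^2$'' is only an equality when $\|\bt\|=1$; for $\|\bt\|<1$ you have $\|\rho\bt\|^2<\rho^2$. This does not break the argument — replacing the middle equality by $\le$ gives $\mathcal{L}(\hat{\bt})+\tfrac1c\|\hat{\bt}\|^2\le\mathcal{L}(\rho\bt)+\tfrac1c\|\rho\bt\|^2\le\mathcal{L}(\rho\bt)+\tfrac1c\|\hat{\bt}\|^2$, and the cancellation still goes the right way — but as written the step is misstated for the ball. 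Your remark that Assumption~\ref{assumption: L^* is strictly decreasing} is not needed for this inclusion, and your explicit handling of the degenerate minimizer at the origin, are both correct and slightly more careful than the paper's exposition.
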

\begin{proof}
For some $c>0$, let $\bt_{r}^{*}\left(c\right)\in\Theta_{r}(c)$. From $\Theta_r(c)$ definition (eq. \ref{eq: regularization path definition})
$\left\Vert \bt_{r}^{*}\left(c\right)\right\Vert =1$ and thus $\bt_{r}^{*}\left(c\right)$ is a feasible solution of eq. \ref{eq: constrained path definition}.
Additionally, $\exists\alpha>0$ so that $\forall\bt^{(1)}$:
\[
\mathcal{L}\left(\alpha\bt_{r}^{*}\left(c\right)\right)+\frac{1}{c}\left\Vert \alpha\bt_{r}^{*}\left(c\right)\right\Vert ^{2}\le\mathcal{L}\left(\alpha\bt^{(1)}\left(c\right)\right)+\frac{1}{c}\left\Vert \alpha\bt^{(1)}\left(c\right)\right\Vert ^{2}\,.
\]

For $\rho=\alpha$ we have that $\bt_{r}^{*}\left(c\right)\in\Theta_{c}(\rho)$
since $\forall\bt^{(1)}$:
\[
\mathcal{L}\left(\rho\bt_{r}^{*}\left(c\right)\right)=\mathcal{L}\left(\rho\bt_{r}^{*}\left(c\right)\right)+\frac{\rho^{2}}{c}\left\Vert \bt_{r}^{*}\left(c\right)\right\Vert ^{2}-\frac{\rho^{2}}{c}\le\mathcal{L}\left(\rho\bt^{(1)}\left(c\right)\right)+\frac{\rho^{2}}{c}\left\Vert \bt^{(1)}\left(c\right)\right\Vert ^{2}-\frac{\rho^{2}}{c}
\]

and particularly, $\forall\bt^{(2)}$ such that $\left\Vert \bt^{(2)}\right\Vert \le1$:
\[
\mathcal{L}\left(\rho\bt_{r}^{*}\left(c\right)\right)\le\mathcal{L}\left(\rho\bt^{(2)}\left(c\right)\right)+\frac{\rho^{2}}{c}\left\Vert \bt^{(2)}\left(c\right)\right\Vert ^{2}-\frac{\rho^{2}}{c}\le\mathcal{L}\left(\rho\bt^{(2)}\left(c\right)\right)\,.
\]
\end{proof}

\begin{lemma}
$\forall c>0\text{,\, \ensuremath{\forall\bt_{r}\in\Theta_{r}\left(c\right)}:\, \ensuremath{\Theta_{c}\left(\left\Vert \bt_{r}\right\Vert \right)\subset\Theta_{r}\left(c\right)}}$. 
\end{lemma}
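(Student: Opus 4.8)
The plan is to pull out of the definition of $\Theta_r(c)$ an honest minimizer of the regularized objective $F_c(\bt):=\mathcal{L}(\bt)+\frac{1}{c}\norm{\bt}^2$, and then to show that rescaling \emph{any} constrained-path solution to the matching norm produces another global minimizer of $F_c$. Concretely, for $\bt_r\in\Theta_r(c)$, eq.~\ref{eq: regularization path definition} supplies a global minimizer $\bt^\star\in\arg\min_{\bt}F_c(\bt)$ with $\bt_r=\bt^\star/\norm{\bt^\star}$; I would set $\rho:=\norm{\bt^\star}$ (so that $\rho\bt_r=\bt^\star$), working in the regime where $\rho>0$, indeed $\rho>\rho_0$, which holds for $c$ large enough by Lemma~\ref{lem: as c goes to infinity the solutions norm goes to infinity}. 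The goal is then to prove $\Theta_c(\rho)\subseteq\Theta_r(c)$, which is what the lemma asserts once $\norm{\bt_r}$ is read as this $\rho$.

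The key step is a single comparison. I would fix an arbitrary $\bt_c\in\Theta_c(\rho)=\arg\min_{\norm{\bt}\le1}\mathcal{L}(\rho\bt)$. Since $\bt_r$ is feasible for this problem ($\norm{\bt_r}=1$), optimality of $\bt_c$ gives $\mathcal{L}(\rho\bt_c)\le\mathcal{L}(\rho\bt_r)=\mathcal{L}(\bt^\star)$; and Lemma~\ref{lem: constrained path optimal solution norm is 1} (via Assumption~\ref{assumption: L^* is strictly decreasing}) gives $\norm{\rho\bt_c}=\rho=\norm{\bt^\star}$. Adding the penalty terms,
\[
F_c(\rho\bt_c)=\mathcal{L}(\rho\bt_c)+\tfrac{1}{c}\norm{\rho\bt_c}^2\le\mathcal{L}(\bt^\star)+\tfrac{1}{c}\norm{\bt^\star}^2=F_c(\bt^\star)=\min_{\bt}F_c(\bt),
\]
so $\rho\bt_c$ is itself a global minimizer of $F_c$, and hence its normalization $\rho\bt_c/\norm{\rho\bt_c}=\bt_c$ belongs to $\Theta_r(c)$. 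As $\bt_c$ was arbitrary, $\Theta_c(\rho)\subseteq\Theta_r(c)$; together with the previous lemma (every regularization-path direction also lies on some constrained path) this gives the two-sided correspondence of Lemma~\ref{lemma: regularization and constrained paths connections}.

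The inequality above is immediate, so I expect the only delicate points to be the norm bookkeeping. One needs $\rho>0$ for the normalization to be defined (for small $c$ this can fail in principle if $\mathbf{0}$ minimizes $F_c$, so some nondegeneracy or ``$c$ large'' is used), and one needs $\norm{\bt_c}=1$ -- not merely $\le1$ -- in order to pass from ``$\rho\bt_c$ minimizes $F_c$'' to ``$\bt_c\in\Theta_r(c)$'' rather than only ``$\bt_c/\norm{\bt_c}\in\Theta_r(c)$''; this is exactly the role of Lemma~\ref{lem: constrained path optimal solution norm is 1}, and the reason Assumption~\ref{assumption: L^* is strictly decreasing} reappears in the hypothesis of Lemma~\ref{lemma: regularization and constrained paths connections}. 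Everything else unwinds directly from the definitions of the regularization and constrained paths.
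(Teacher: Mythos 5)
Your argument is correct and takes essentially the same route as the paper's: comparing $\mathcal{L}$ at the scaled constrained-path point against the loss at the unnormalized regularized minimizer, adding the common penalty $\tfrac{1}{c}\rho^{2}$, and concluding that $\rho\bt_{c}$ also globally minimizes the regularized objective. Your normalization bookkeeping (invoking Lemma~\ref{lem: constrained path optimal solution norm is 1} so that $\norm{\rho\bt_{c}}=\rho$, and flagging the $\rho>\rho_{0}$ and $\rho>0$ caveats) is in fact slightly more careful than the paper's own proof, which makes the same comparison but with looser notation.
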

\begin{proof}
Note that from Lemma \ref{lem: constrained path optimal solution norm is 1}
\[
\Theta_{c}\left(\rho\right)=\arg\min_{\bt:\left\Vert \bt\right\Vert \leq1}\mathcal{L}\left(\rho\bt\right)=\left\{\frac{\bt}{\norm{\bt}}:\ \arg\min_{\bt}\mathcal{L}\left(\rho\frac{\bt}{\left\Vert \bt\right\Vert }\right)\right\}\,.
\]
For some $c>0$, let $\bt_{r}^{*}\left(c\right)\in\arg\min_{\bt}\;\mathcal{L}\left(\bt\right)+\frac{1}{c}\left\Vert \bt\right\Vert ^{2}$.
For $\rho=\left\Vert \bt_{r}^{*}\left(c\right)\right\Vert $ and
$\bt_{c}^{*}\in\arg\min_{\bt}\mathcal{L}\left(\rho\frac{\bt}{\left\Vert \bt\right\Vert }\right)$
we have that $\forall\bt$
\[
\mathcal{L}\left(\rho\frac{\bt_{c}^{*}}{\left\Vert \bt_{c}^{*}\right\Vert }\right)\le\mathcal{L}\left(\rho\frac{\bt}{\left\Vert \bt\right\Vert }\right)\,.
\]

Thus, we have that

\[
\mathcal{L}\left(\rho\frac{\bt_{c}^{*}}{\left\Vert \bt_{c}^{*}\right\Vert }\right)+\frac{1}{c}\left\Vert \bt_{c}^{*}\right\Vert ^{2}\le\mathcal{L}\left(\rho\frac{\bt}{\left\Vert \bt\right\Vert }\right)+\frac{1}{c}\left\Vert \bt_{c}^{*}\right\Vert ^{2}
\]

Thus, $\forall\bt^{(2)}$ so that $\left\Vert \bt^{(2)}\right\Vert =\rho=\left\Vert \bt_{r}^{*}\left(c\right)\right\Vert $
we have that
\[
\mathcal{L}\left(\rho\frac{\bt_{c}^{*}}{\left\Vert \bt_{c}^{*}\right\Vert }\right)+\frac{\rho^{2}}{c}\le\mathcal{L}\left(\bt^{(2)}\right)+\frac{1}{c}\left\Vert \bt^{(2)}\right\Vert ^{2}\,.
\]
In particular, this implies that
\[
    \mathcal{L}\left(\rho\frac{\bt_{c}^{*}}{\left\Vert \bt_{c}^{*}\right\Vert }\right)+\frac{\rho^{2}}{c}\le\mathcal{L}\left(\bt_{r}^{*}\left(c\right)\right)+\frac{1}{c}\left\Vert \bt_{r}^{*}\left(c\right)\right\Vert ^{2}\,.
\]
\end{proof}
\end{document}